
\documentclass{article}

\usepackage{microtype}

\usepackage{amsmath,amsfonts,bm}

\def\eqref#1{equation~\ref{#1}}

\def\1{\bm{1}}

\DeclareMathAlphabet{\mathsfit}{\encodingdefault}{\sfdefault}{m}{sl}
\SetMathAlphabet{\mathsfit}{bold}{\encodingdefault}{\sfdefault}{bx}{n}

\usepackage[hidelinks]{hyperref}
\usepackage{url}
\usepackage{graphicx}
\usepackage{amsthm}
\usepackage{booktabs}
\usepackage[capitalize, nameinlink, poorman, noabbrev]{cleveref}
\usepackage{threeparttable} %
\usepackage{caption}
\usepackage{subcaption}
\usepackage{wrapfig}
\usepackage{textcomp}
\usepackage{placeins}

\usepackage[docdef=atom]{glossaries-extra}
    \setabbreviationstyle[acronym]{long-short}
    \glssetcategoryattribute{acronym}{nohyperfirst}{true}

    \newacronym{sota}{SotA}{State-of-the-Art}

\newacronym{tcn}{TCN}{Temporal Convolutional Network}
\newacronym{eid}{EID}{Exponentially Increasing Dilation Rates}
\newacronym{cd}{CD}{Constant Dilation Rate}
\newacronym{ssm}{SSM}{State-Space Model}
\newacronym{rnn}{RNN}{Recurrent Neural Network}
\newacronym{lstm}{LSTM}{Long Short-Term Memory}
\newacronym{gru}{GRU}{Gated Recurrent Unit}
\newacronym{mgrade}{mGRADE}{\textbf{m}inimal \textbf{G}ated \textbf{R}ecurrent \textbf{A}rchitecture with \textbf{D}elay \textbf{E}mbeddings}
\newacronym{lra}{LRA}{Long-Range Arena}
\newacronym{smnist}{sMNIST}{sequential MNIST}
\newacronym{scifar}{sCIFAR}{sequential CIFAR}
\newacronym{bptt}{BPTT}{Backpropagation through Time}
\newacronym{mingru}{minGRU}{minimal Gated Recurrent Unit}
\newacronym{xlstm}{xLSTM}{Extended Long Short-Term Memory}
\newacronym{s4}{S4}{Structured State Space Model}
\newacronym{h3}{H3}{Hungry Hungry Hippo}
\newacronym{sgconv}{SGConv}{Structured Global Convolution}

\newacronym{hgrn}{HGRN}{Hierarchically Gated Recurrent Network}
\newacronym{qrnn}{QRNN}{Quasi-Recurrent Neural Network}
\newacronym{lru}{LRU}{Linear Recurrent Unit}
\newacronym{lti}{LTI}{linear time-invariant}
\newacronym{dcls}{DCLS}{Dilated Convolutions with Learnable Spacings}
\newacronym{mlp}{MLP}{Multi-layer Perceptron}
\newacronym{mase}{MASE}{Mean Absolute Standardized Error}
\newacronym{imc}{IMC}{In-Memory Computing}
\newacronym{ce}{CE}{Cross-Entropy Loss}
\newacronym{mse}{MSE}{Mean Squared Error}
\newacronym{pc}{PC}{Principal Component}
\newacronym{nno}{NNO}{Nearest Neighbours Overlap}

\newacronym{gilr}{GILR}{Gated Impulse Linear Recurrent}
\newacronym{relu}{ReLU}{Rectified Linear Unit}
\newacronym{ugi}{UGI}{Uniform Gate Initialization}
\newacronym{siso}{SISO}{Single-Input, Single-Output}
\newacronym{mimo}{MIMO}{Multi-Input, Multi-Output}
\newacronym{dplr}{DPLR}{Diagonal Plus Low Rank}
\newacronym{gsc}{GSC}{Google Speech Commands}
\newacronym{shd}{SHD}{Spiking Heidelberg Digits}
\newacronym{snn}{SNN}{Spiking Neural Networks}

\usepackage[obeyFinal]{todonotes}

\crefname{section}{}{}
\Crefname{section}{}{}

\newtheorem{theorem}{Theorem}
\theoremstyle{definition}
\newtheorem{defn}{Definition}
\theoremstyle{remark}

\usepackage[preprint]{icml2026/icml2026}

\usepackage{amsmath}
\usepackage{amssymb}
\usepackage{mathtools}
\usepackage{amsthm}

\icmltitlerunning{mGRADE: Minimal Recurrent Gating Meets Delay Convolutions for Lightweight Sequence Modeling}

\begin{document}

\twocolumn[
  \icmltitle{mGRADE: Minimal Recurrent Gating Meets Delay Convolutions for Lightweight Sequence Modeling}

  \icmlsetsymbol{equal}{*}

  \begin{icmlauthorlist}
    \icmlauthor{Tristan Torchet}{equal,yyy}
    \icmlauthor{Christian Metzner}{equal,yyy}
    \icmlauthor{Karthik Charan Raghunathan}{yyy}
    \icmlauthor{Jimmy Weber}{yyy}
    \icmlauthor{Sebastian Billaudelle}{yyy}
    \icmlauthor{Laura Kriener}{yyy}
    \icmlauthor{Melika Payvand}{yyy}
  \end{icmlauthorlist}

  \icmlaffiliation{yyy}{Institute of Neuroinformatics, University of Zurich and ETH Zurich, Zurich, Switzerland}

  \icmlcorrespondingauthor{Melika Payvand}{\{melika\}@ini.uzh.ch}
  \icmlkeywords{sequence model, recurrent neural networks, efficient architectures, dynamical systems, Long Range Arena}

  \vskip 0.3in
]

\printAffiliationsAndNotice{\icmlEqualContribution}

\begin{abstract}
Multi-timescale sequence modeling relies on capturing both local fast dynamics and global slow context; yet, maintaining these capabilities under the strict memory constraints common to edge devices remains an open challenge.
Current \gls{sota} models with constant memory footprints trade off long-range selectivity and high-precision modeling of fast dynamics.
To overcome this trade-off within a fixed memory budget, we propose \textbf{mGRADE} (\textbf{m}inimally \textbf{G}ated \textbf{R}ecurrent \textbf{A}rchitecture with \textbf{D}elay \textbf{E}mbedding), a hybrid-memory system that introduces inductive biases across timescales by integrating a convolution with learnable temporal spacings with a lightweight gated recurrent component.
We show theoretically that the learnable spacings are equivalent to a delay embedding, enabling parameter-efficient reconstruction of partially-observed fast dynamics, while the gated recurrent component selectively maintains long-range context with minimal memory overhead.
On the challenging Long-Range Arena benchmark and 35-way Google Speech Commands raw audio classification task, mGRADE reduces the memory footprint by up to a factor of 8 compared to other \gls{sota} models, while maintaining competitive performance.

\end{abstract}

\glsresetall
\section{Introduction}
\begin{figure*}[t]
    \centering
    \includegraphics[width=\textwidth]{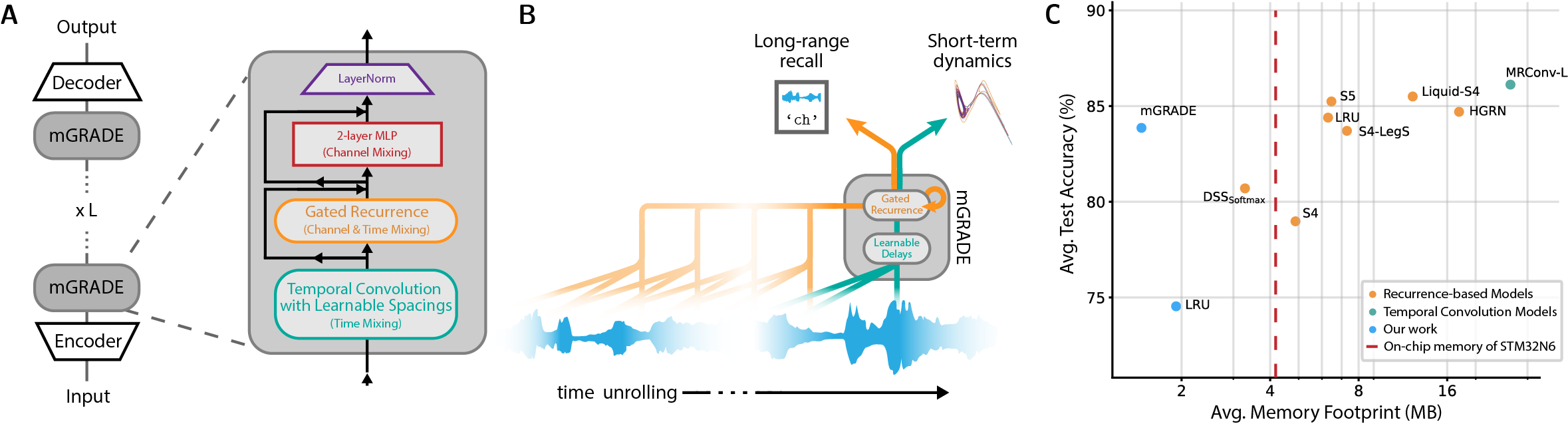}
    \caption{%
    \label{fig:F1}
    \textbf{Novel and theoretically grounded architecture for memory-efficient causal sequence modeling across multiple timescales. A)}
    Network architecture composed of an encoder, $L$ mGRADE layers stacked on each other, and a final decoder. 
    Each layer is composed of four consecutive elements: a modified 1D convolution with learnable spacings, a gated recurrence, an MLP, and layer normalization along with skip connections around the recurrence and MLP.
    \textbf{B)} mGRADE handles multi-timescale information in input sequences (blue waveform) thanks to the complementary functions of its two main components.
    The gated recurrence enables long-range recall by selectively storing long-term dependencies (orange arrows) while the convolution with learnable temporal spacings, equivalent to a delay embedding, models short-term dynamics (green arrows).
    \textbf{C)} mGRADE's average test accuracy across the multi-timescale \gls{lra} tasks is highly competitive while maintaining a memory footprint that is up to $8\times$ smaller than previously published SotA models.
    Average memory footprints are calculated using \cref{tab:lraresults}.
    The blue LRU represents the results of our own experiments when matching LRU to mGRADE in terms of memory size.
    mGRADE is the smallest model with the best average performance which fits within the typical on-chip memory of edge AI systems such as the STM32N6 \citep[][see Appendix \cref{sec:appendix_mcu}]{stm32n6_2025}.
    }
\end{figure*}

Causally modeling sequence data in real-time is a central challenge in machine learning, particularly when dealing with dependencies across multiple timescales.
The difficulty lies in a basic trade-off: capturing long-range dependencies requires storing global information over long time horizons, whereas fast dynamics call for high resolution in the short term.
This trade-off becomes even more prominent when working within a limited memory footprint, such as on embedded systems, impeding temporal processing at the edge for real-time applications such as sensor fusion and autonomous control.
Current \gls{sota} models effectively need to choose between high performance on multi-timescale data or a low enough memory to fit onto embedded systems. 
For example, while \glspl{tcn} \citep{waibel1989phoneme} and Transformers \citep{vaswani2017attention} can handle multi-timescale dependencies, their memory footprints grow with sequence length, preventing real-time inference on streaming data within a fixed memory size.
On the other hand, \glspl{rnn}, particularly gated variants like \gls{lstm} \citep{hochreiter1997long-lstm, gers2000learning-lstm} and \gls{gru} \citep{cho2014learning-gru, chung2014empirical-gru}, offer constant inference-time memory over inputs of arbitrary length, but are inefficient to train and struggle to learn very long-range dependencies \citep{pascanu2013difficulty}.
Even modern, efficiently trainable recurrent models such as \glspl{ssm} \citep{gu2022efficiently} often lack selectivity over long time horizons and exhibit a low-frequency bias that limits the modeling of fast dynamics \citep{gu2024mamba, yu2025tuning}.
Their \gls{sota} performance thus comes at the cost of increased model dimensionality and high parameter precision \citep{zhao2025quantizingsmallscalestatespacemodels}, making them ill-suited for modeling multi-timescale data within the size and precision constraints of edge devices.

Therefore, despite the diversity of existing sequence models, none entirely satisfies the tight memory requirements of embedded systems while effectively capturing both long- and short-range dependencies. 

We address this challenge through three central contributions: (1) a novel sequence model architecture that integrates a recurrent component and a convolution with learnable spacings, (2) a theoretical analysis on why this integration excels at multi-timescale modeling within a smaller memory footprint, and (3) an empirical demonstration of our model's competitive performance and superior memory efficiency.

\textbf{Novel Architectural Integration} 
To meet the memory constraints of embedded systems while retaining the ability to simultaneously model short- and long-range dependencies, we introduce \gls{mgrade}, a compact hybrid-memory sequence model combining a parallelizable gated recurrent component with a causal convolution featuring learnable temporal spacings (Section \cref{sec:modelspec}). 
The learnable temporal spacings go beyond standard 1D-convolutions, enabling \gls{mgrade} to express a parameter-efficient and tuneable delay embedding over the input within a single layer. 
This novel architecture maintains a constant-time memory footprint during inference, supports parallel training regardless of sequence length, and remains entirely causal, making it well-suited for real-time temporal processing in resource-constrained embedded environments.

\textbf{Theoretical Foundations of Hybrid Memory} 
We theoretically analyze how \gls{mgrade}'s two components work together as complementary memory functions to handle different timescales in sequence data efficiently.
Through formal proofs and targeted synthetic tasks, we show that the learnable temporal spacings in the convolution express a parameter-efficient delay embedding, enabling generalization on fast, high-dimensional dynamics.
Additionally, the gated recurrence captures long-range context with high memory efficiency by \textit{selectively} compressing arbitrarily long histories into a fixed-size state.
These findings provide a principled foundation for \gls{mgrade}'s hybrid-memory architecture, where the learnable spacings capture short-range patterns with high parameter-efficiency, and the gated recurrence maintains long-range dependencies without needing to scale the memory size.

\textbf{High Performance with High Memory Efficiency} Finally, we demonstrate that these theoretical capabilities translate to competitive sequence modeling capabilities at a far smaller memory cost than previously shown.
Specifically, we benchmark \gls{mgrade} on the \gls{lra} tasks \citep{tay2021lra} and the 35-way \gls{gsc} raw audio classification task \citep{warden2018speech}, where we achieve competitive results with a memory footprint that is up to 8$\times$ smaller than previously published \gls{sota} models.
Importantly, across all evaluated tasks, \gls{mgrade} is the only model that simultaneously achieves high performance while fitting within the typical memory budgets of edge platforms like the STM32N6 \citep[][more examples in Appendix \cref{sec:appendix_mcu}]{stm32n6_2025}.

By successfully modeling short- and long-range dependencies while satisfying tight memory constraints, \gls{mgrade} directly fills the critical gap in existing sequence models and paves the way for advanced temporal signal processing in resource-constrained embedded systems.

\section{Model Architecture} \label{sec:modelspec}

The \gls{mgrade} architecture consists of an encoder (linear projection), a stack of $L$ \gls{mgrade} layers, and finally a decoder (non-linear projection) (\cref{fig:F1}A). 
The input sequence is streamed element by element, causally producing an output at every timestep $t$. 
The \gls{mgrade} layers are the core architectural feature, combining a modified depthwise 1D convolution with learnable spacings between kernel elements and a parallelizable gated recurrence, followed by a \gls{mlp} and layer normalization. 

\textbf{Convolution with Learnable spacings}
To capture short-term dynamics and high-frequency patterns (Section \cref{sec:attractor}), we first pass the input to each \gls{mgrade} layer through a modified temporal convolution.
To maximize expressivity without exploding the number of parameters, we learn the spacings in time between each kernel element using the \gls{dcls} framework \citep{hassani2023dilated-dcls}, equivalent to learning transmission delays over the input \citep{hammouamri2024learning}.
This choice is inspired by how tunable delays enrich the computational expressivity of spiking neural networks while simultaneously making them more parameter efficient.
\citep{maas1999temporalcoding, dagostino2024denram, goeltz2025delgrad}.

Similar to classical \glspl{tcn}, \gls{dcls} applies a discrete 1D convolution $x_{d,t} = (u_{d} * k_d)[t]$ over every channel $d\leq D$ of the input $\mathbf{u} \in \mathbb{R}^{D\times T}$ (with $T$ being input sequence length).
Unlike classical \glspl{tcn}, the \gls{dcls} convolution kernel $k_d$ is parameterized by \textit{two} sets $\Omega_d=\{w_0, w_1, ..., w_{K-1}\ | \text{ } w_i \in \mathbb{R}\}$ and $\Psi_d=\{p_0, p_1, ..., p_{K-1}\ | \text{ } p_i \in \mathbb{R},\ p_i\leq p_{max}\}$ each of size $K$, representing the weights and positions in time of the kernel elements.
Each position in time $p_i$ is relative to the current timestep, such that $p_i$ represents a single transmission delay.
The maximum position, $p_{max} = \Gamma$, defines the longest possible transmission delay applied to the input, thus indicating the total number of discrete timesteps that $k_d$ spans.
Following \citet{hassani2023dilated-dcls}, we will refer to $\Gamma$ as the \textit{kernel length} and $K$ as the \textit{kernel count}.
Both kernel length and count are the same across all channels.

To construct the discrete kernel $k_d$, each real-valued position $p_i$ is mapped to the discrete kernel indices $n \leq \Gamma$ via a differentiable interpolation function, $c$ (see Appendix \cref{sec:appendix_dcls_kernel}).
This enables both the position and weight of the kernel elements to be learned with gradient descent.
Each element of the kernel $k_d \in \mathbb{R}^\Gamma$ for a single channel $d$ then becomes:
\begin{align}
\ k_d[n] &= \sum_{i=0}^{K-1} w_i \cdot c[n, p_i], \quad
\label{eq:kernel_dcls}
\end{align}

The \gls{dcls} convolution's output $x_{d,t}$ for each channel $d$ at timestep $t$ is then computed as a 1D convolution using the $k_d$ kernel over the past $\Gamma$ timesteps of the input.
We stack all $D$ kernels $k_d$ into a kernel matrix $\mathbf{K} \in \mathbb{R}^{D \times \Gamma}$, yielding a final output vector $\mathbf{x}_t \in \mathbb{R}^{D}$ after the convolution.

\textbf{Gated recurrent component}
To enable \gls{mgrade} to selectively model long-range dependencies (Section \cref{sec:flipflop}), we include a gated recurrent component in the \gls{mgrade} layer.
To this end, we use a simplified and parallelizable version of the \gls{gru} \citep{cho2014learning-gru}, initially proposed by \citet{martin2018parallelizing}, and also known as \gls{mingru} \citep{feng2025were}.
\gls{mingru} removes the dependency of the update gate, $\mathbf{z}_t\in \mathbb{R}^{H}$, and candidate activation, $\mathbf{\tilde{h}}_t\in \mathbb{R}^{H}$, on the previous hidden state, $\mathbf{h}_{t-1}\in \mathbb{R}^{H}$.
In our case, the hidden dimensionality $H$ is equal to $D$ times some expansion factor.
Given the output of the \gls{dcls} $\mathbf{x}_t\in \mathbb{R}^{D}$, $\mathbf{h}_t\in \mathbb{R}^{H}$ is updated as follows,
\begin{equation}
    \begin{aligned}
    &\mathbf{h}_t = (1 - \mathbf{z}_t) \odot \mathbf{h}_{t-1} + \mathbf{z}_t \odot \mathbf{\tilde{h}}_t \\ & \mathbf{z}_t = \sigma(\mathbf{W}_z \mathbf{x}_t)\, ,\quad  \mathbf{\tilde{h}}_t = \mathbf{W}_h \mathbf{x}_t 
    \end{aligned}
    \label{eq:mgrade_oneliner}
\end{equation}
where $\sigma$ is the sigmoid function, $\odot$ is the Hadamard product, and $\mathbf{W}_z$ and $\mathbf{W}_h \in \mathbb{R}^{D\times H}$ are the weights of the projections for $\mathbf{z}_t$ and $\mathbf{\tilde{h}}_t$, respectively.

The specific choice of a \gls{mingru}-style gated recurrence is motivated by its training efficiency and hardware compatibility.
Since the update gate and candidate activation only depend on the current $\mathbf{x}_t$, hidden states for every timestep can be computed in parallel using a prefix scan \citep{blelloch1990prefix}, enabling efficient training in logarithmic time with respect to sequence length \citep{feng2025were}.
In addition, this architecture is well-suited to heavily quantized, low-power hardware implementations \citep{billaudelle2025minimalist}.

\textbf{MLP and Layer Normalization}
Following the gated recurrent component, the hidden state $\mathbf{h}_t$ is passed through an \gls{mlp} with $\mathbf{W}_{\text{MLP,in}}\in \mathbb{R}^{H\times 2D}$, $\mathbf{W}_{\text{MLP,out}}\in \mathbb{R}^{2D\times D}$.
Afterwards, layer normalization is applied.
Since $D$ is the dimensionality of the activations passed between layers, we call it the model dimensionality.

\textbf{Memory Complexity}\label{sec:memory_complexity}
During inference, \gls{mgrade} requires memory for both the model parameters and for the activation buffer, which stores all past and current activations needed to produce an output for the current timestep.
The number of model parameters scales primarily with the model dimensionality $D$.
Regarding the activation buffer, the gated recurrent component utilizes only a fixed-size hidden state vector of size $H$.
It can therefore operate over arbitrary sequence lengths without scaling the activation buffer size, allowing us to fix the kernel length $\Gamma$ (or maximum delay) of the convolutional component while maintaining a theoretically unbounded temporal \textit{receptive field}\footnote{range of past inputs that can influence the output at any $t$}.
Thus, \gls{mgrade}'s memory complexity is independent of the input sequence length, in marked contrast to architectures like Transformers and \glspl{tcn}, where memory requirements scale at least linearly with the sequence length or temporal receptive field. Further details can be found in Section \cref{sec:appendix_memoryfootprint}.

\section{Theoretical Foundations of Hybrid Memory} \label{sec:theoretical}
  We now develop a theoretical understanding of how \gls{mgrade}'s learnable temporal spacings in the convolution and gated recurrence complement each other.
To this end, we first investigate how the learnable spacings enhance \gls{mgrade} beyond purely recurrent architectures by strengthening its structural inductive bias towards the reconstruction of short-term dynamics.
We then show how the gated recurrent component enables long-range dependency learning by showing that a single \gls{mgrade} layer can formally model the Flip-Flop language and empirically solve the selective copying task, both of which require selectively remembering long-range dependencies that cannot be modeled by purely convolutional (e.g. \glspl{tcn}) or non-gated recurrent architectures (e.g. time-invariant \glspl{ssm}).

\subsection{Learnable Spacings capture Short-Term Dynamics}\label{sec:attractor}

\begin{figure}[t]
    \centering \includegraphics[width=1\columnwidth]{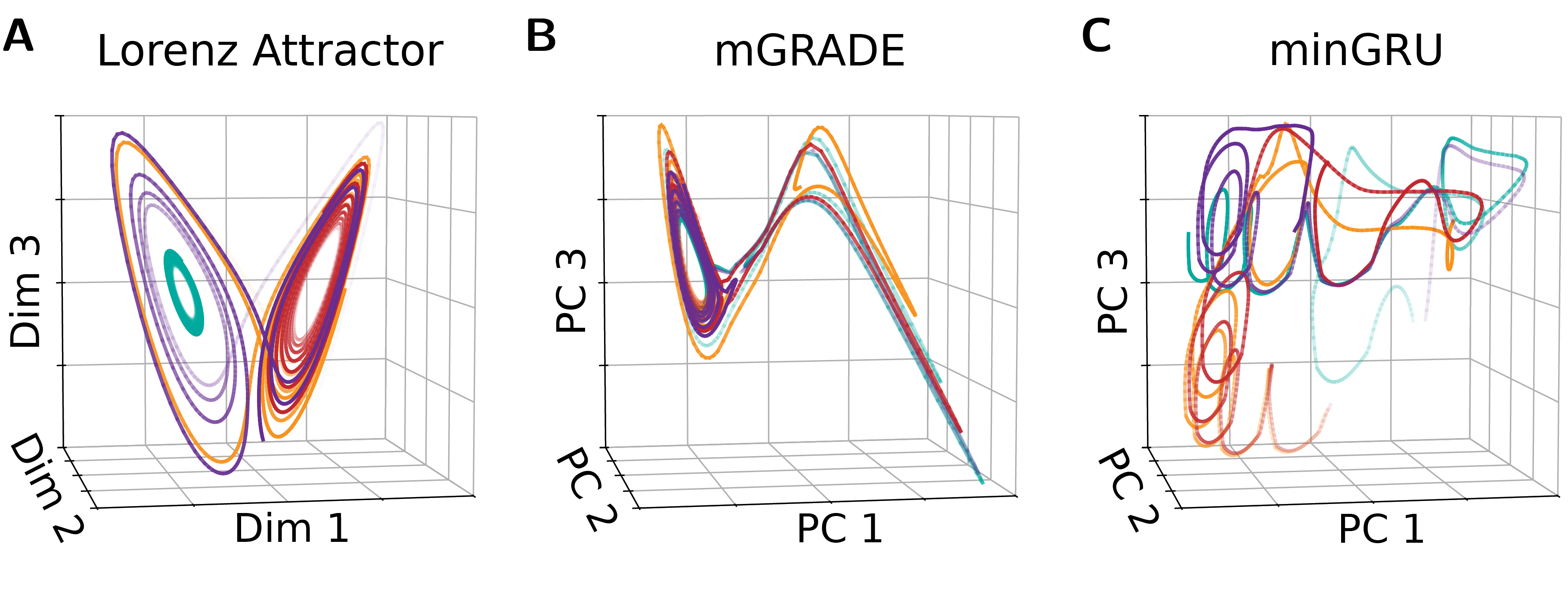}    
    \caption{
    \textbf{\gls{mgrade} reconstructs a diffeomorphic mapping of the input dynamics.}
    \textbf{A)} Representative trajectories ($n=4$) on the Lorenz attractor manifold with 5\% Gaussian time-independent noise. The task is to predict dimension 1 at the next timestep. 
    \textbf{B)} Representative trajectories in the hidden state space of a single-layer \gls{mgrade} projected to the first 3 Principal Components (PC).
    \textbf{C)} Trajectories in the hidden state space of a 2-layer \gls{mingru} projected to the first 3 Principal Components (PC). See \cref{fig:SI_lorentz_2d_shapes} for all PCs compared individually.
    }
    \label{fig:F2}
\end{figure}

\gls{mgrade}'s modified temporal convolution can be reframed as computing weighted sums of time-delayed inputs stored in an activation buffer, with the learnable spacings controlling the delay durations.
This mirrors delay embeddings, a dynamical systems technique used for time-series prediction and state-space reconstruction \citep{strogatz2015dynamics}. 

Delay embeddings map an input sequence to a higher-dimensional vector consisting of $m$ time-delayed copies of the original input. 
Takens’ Embedding Theorem \citep{Takens1981} guarantees that, for a $d$-dimensional dynamical system, any delay embedding of even a single observed dimension can diffeomorphically reconstruct the underlying manifold along which the system moves, using at most $m=2d+1$ delays in noise-free conditions. 
Intuitively, this means that a vector made up of $m$ different delayed input copies traces out trajectories in $m$-dimensional space that resemble the underlying $d$-dimensional dynamical system's trajectories up to a smooth, invertible transformation. 

\begin{theorem}[Informal]\label{theorem:mgrade_delay_embedding} A single-layer \gls{mgrade} can express a delay embedding of a $d$-dimensional dynamical system in the sense of \citet{Takens1981}, using only an $m$-dimensional projection of a single observed dimension as input. 
Its $m$-dimensional hidden state can thus learn to diffeomorphically reconstruct the full geometry of the system's trajectories.
\end{theorem}

The full theorem and proof are in Appendix \cref{sec:appendix_lorenz_proof}. It relies on the fact that \gls{mgrade} can learn distinct delays for each of the $m$ projections of the observed dimension, and then embed them directly into its hidden state.
The resulting internal representation captures the full geometry of the underlying dynamical system, allowing \gls{mgrade} to generalize to dimensions that were unobserved during training.

We validate this theoretical capability on a next-step prediction task using the chaotic 3D-Lorenz attractor, training a single-layer \gls{mgrade} and a 2-layer \gls{mingru}\footnote{For a fair comparison, we give the \gls{mingru} an additional layer to provide its update gate and candidate activation in the second layer with temporal information. This means that the only significant difference between the \gls{mgrade} and \gls{mingru} models is the fact that \gls{mgrade}'s uses a convolution.} on 2000 noisy trajectories \citep[\cref{fig:F2}A;][]{Lorenz1963}.
To quantify the quality of the predictions, we use the \gls{mase} \citep{hyndmanMASE2006}.
Note that \gls{mase} $>$ 1 indicates that a model has no predictive power relative to naively predicting the current state's persistence (Appendix \cref{sec:appendix_mase}).

\begin{table}[t]
\caption{\textbf{Next-step prediction on 3D-Lorenz attractor.}}
\label{tab:lorenzresults}
\vspace{-0.2cm}
\begin{center}
\renewcommand{\arraystretch}{1.2}
\resizebox{\linewidth}{!}{
\begin{tabular}{lcccc}
\toprule
Model & MASE & MASE  & Near.\ Neigh.\ & Params. / Buff. \\
 & observed dim. & unobserved dim. &  Overl.\ \% &  (bytes)\\
\midrule
 \textbf{\gls{mgrade}} & \textbf{0.38 \textpm~0.02} & \textbf{0.86 \textpm~0.11} & \textbf{32.7 \textpm~0.7} & \textbf{1.1K} / 128\\
minGRU & 0.63 \textpm~0.01 & 1.01 \textpm~0.01 & 28.8 \textpm~2.2 & 1.8K / \textbf{80}\\
\bottomrule
\end{tabular}
}
\end{center}
\end{table}

Visualizing the top three Principal Components of the 10 hidden states (\cref{fig:F2}B,C; \cref{fig:SI_lorentz_2d_shapes}), \gls{mgrade}'s embedding reconstructs the Lorenz system's characteristic two-lobe structure, while the \gls{mingru}'s embedding lacks similar visual correspondence.
\gls{mgrade} also achieves a \gls{mase} that is 1.6$\times$ lower than \gls{mingru} when predicting the next timestep on observed dimensions (\cref{tab:lorenzresults}; \cref{fig:SI_lorentz_over_epoch}). 
When predicting the dimensions unobserved during training, \gls{mgrade} outperforms the 2-layer \gls{mingru}, which shows no predictive power (\gls{mase} $>$ 1).
We also quantify how smoothly the geometry of the original attractor maps to the geometry of the hidden state space by measuring Nearest neighbor Overlap following \citep{Ostrow2024} (Appendix \cref{sec:appendix_lorenz_details}).
A high overlap percentage indicates that locally the two manifolds are smooth invertible mappings of each other, i.e., that the hidden space is a faithful diffeomorphic reconstruction of the original.
Consistent with our visual check, \gls{mgrade} exceeds the \gls{mingru} by 4\%.

This experiment highlights how the temporal convolution with learnable spacings, essentially a short-term buffer of delayed inputs, implements a compact representation for reconstructing dynamical state spaces from partial observations over a short-term horizon.
Importantly, the number of required delays (and thus parameters) grows linearly with the dimensionality of the underlying dynamical system, explaining \gls{mgrade}’s parameter efficiency compared to models whose memory scales with sequence length \citep[][Appendix \cref{sec:appendix_memoryfootprint}]{schlag2021lineartransformerssecretlyfast}.

Appendix \cref{sec:appendix_highfreqtask} expands on these insights by showing that the convolution with learnable spacings enables \gls{mgrade} to recognize and respond to high-frequency features far better than purely recurrent architectures, overcoming the bias towards low-frequency information shared by gated \glspl{rnn} and \glspl{ssm} \citep{rahaman2019spectralbias, yu2025tuning}.

\subsection{Gated Recurrence stores Long-Range Dependencies}\label{sec:flipflop}

\begin{figure}[t]
    \centering \includegraphics[width=0.8\columnwidth]{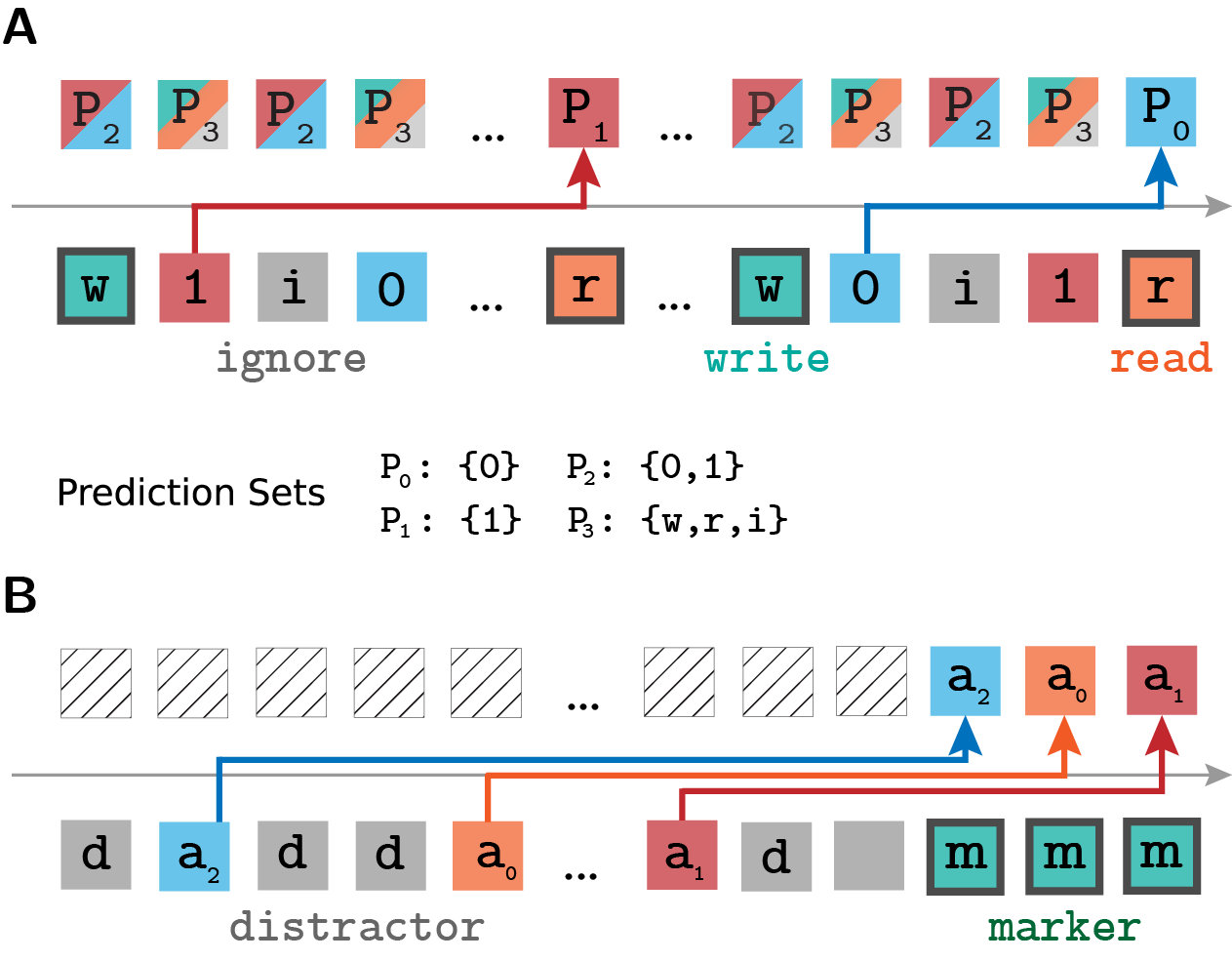}    
    \caption{\textbf{Illustration of Flip-Flop and selective copying tasks.}
    \textbf{A)} Flip-Flop modeling consists of predicting the \textit{prediction set} $P_i$ of next possible symbols in the given Flip-Flop string at every timestep.
    For \texttt{r} symbols, this is equivalent to recalling the value after the last \texttt{w}. 
    \textbf{B)} Selective copying requires recalling (after the marker symbol \texttt{m}) randomly distributed value symbols \texttt{a}$_i$ in the order they are presented while ignoring distractor symbols \texttt{d}.}
    \label{fig:F3}
\end{figure}

To probe \gls{mgrade}'s ability to \textit{selectively} remember long-range dependencies, we analyze its ability to predictively model Flip-Flop languages, a formal language family designed to test sequence models’ long-range capabilities. \citep[\cref{fig:F3}A;][]{liu2023attentionglitch}.
\begin{defn}[Flip-Flop Language]
    Let the alphabet be \( \Sigma = \{\texttt{w, r, i, 0, 1}\} \), where \(\texttt{w}\), \(\texttt{r}\), and \(\texttt{i}\) represent the instruction symbols for ``write", ``read", ``ignore", and \(\texttt{0}\), \(\texttt{1}\) represent value symbols. 
    Flip-Flop languages \( L_{ff} \) consist of sets of strings over \( \Sigma \) that being with \(\texttt{w}\) and then alternate between instructions and values (e.g., \(\texttt{w 0 r 0 i 1}\)), satisfying the condition that after any \(\texttt{r}\), the subsequent symbol is equal to the value following the previous \(\texttt{w}\)..
\end{defn}
\begin{defn}[Predictive Modeling]
    For a string \( s \in L_{ff} \) and a prefix \( s[1:t] \) ending at position \( t \) with symbol \( a_t \), predictive modeling requires outputting the \textit{prediction set} \( P_i \subseteq \Sigma \) of valid next symbols \( a_{t+1} \) such that \( s[1:t] \, a_{t+1} \) remains a prefix of a string in \( L_{ff} \). 
    We say that a model \textit{predictively models} \( L_{ff} \) iff its output at each timestep \(t\) encodes all the information needed for a linear classifier to return the next prediction set with no errors.
\end{defn}

In the Flip-Flop task, the model must predict at each timestep which class of next symbols can follow the current one such that the sequence remains a valid Flip-Flop string.
For instance, when the model sees a \texttt{w} it should return the class including \texttt{0} \textit{and} \texttt{1} (because both are valid next symbols), while after an \texttt{r} it depends on the value after the most recent \texttt{w}, which means either \texttt{0} \textit{or} \texttt{1}.

Predictive modeling of Flip-Flop languages is interesting for multiple reasons. First, a model's success on Flip-Flop modeling implies a broad computational expressivity on multiple formal languages and algorithmic simulation tasks \citep{liu2023attentionglitch}. 
Second, Flip-Flop modeling requires maintaining the last \texttt{w}-paired value over arbitrarily long sequences.
Accordingly, models with fixed-length context windows or sequence-length dependent memory scaling, such as \glspl{tcn} and Transformers, cannot model Flip-Flop over arbitrary lengths with a fixed memory size \citep[][proof in Appendix \cref{sec:appendix_flipflop_tcn_proof}]{sarrof2024, liu2023attentionglitch}.
Finally, since any \texttt{r} is typically separated from the most recent \texttt{w} by an arbitrarily long string of irrelevant \texttt{i}-paired values, Flip-Flop modeling requires selectively remembering and ignoring value symbols based on the content of the preceding instruction.
This content-aware \textit{selectivity} across time factors into many real-world challenges, such as tracking filler-gap dependencies in natural language \citep{wilcox2018fillergap, howitt2024} or ignoring irrelevant inputs during arithmetic reasoning \citep{shi2023}.
Notably, time-invariant \glspl{ssm} without input-dependent gating, such as the \gls{h3} \citep{dao2022hungry-h3} or \gls{lru} \citep{orvieto2023resurrecting-lru}, lack selectivity \citep{gu2024mamba}.
In contrast to \glspl{tcn}, Transformers, and linear time-invariant \glspl{ssm}, a single-layer \gls{mgrade} can predictively model Flip-Flop languages due to its gated recurrent component.
\begin{theorem}[Flip-Flop Modeling with mGRADE]\label{theorem:mgrade} A single-layer \gls{mgrade} with at least 2 delays can predictively model a Flip-Flop language, $L_{ff}$, at arbitrary length. %
\end{theorem}

\textit{Proof Sketch.} (Full proof in Appendix \cref{sec:appendix_flipflop_proof}) \gls{mgrade} stores the value after the last $\texttt{w}$ in one part of its hidden state, while the other merely reproduces the input. 
Learnable delays trigger the update gate of the storage hidden state only after a $\texttt{w}$ (\textit{selectively}), which then preserves the value over arbitrary sequence lengths via its recurrence until the next $\texttt{w}$.
A linear classifier can then trivially extract the prediction set by reading the current input symbol from the reproducing hidden state, and, if the current input is $\texttt{r}$, reading out the stored value from the storage hidden state.

To validate this proof, we train a single-layer \gls{mgrade}, a single-layer linear time-invariant \gls{ssm} (\gls{lru} augmented with the \gls{dcls} convolution), and a 5-layer \gls{tcn} on the Flip-Flop dataset by \citet{liu2023attentionglitch}.
The training data consists of 1.6M Flip-Flop strings of 512 timesteps, where the expected distance between $\texttt{w}$ and $\texttt{r}$ is 10 timesteps. 
For testing, we used out-of-distribution data of the same length with sparse $\texttt{w}$ and $\texttt{r}$ (expected distance around 100 timesteps) to stress long-range dependency learning. 
We report the recall accuracy, i.e., how often the model predicted the value following any given $\texttt{r}$ symbol correctly.
\gls{mgrade} solves the task to nearly $100$\%, significantly outperforming both the \gls{tcn} and the \gls{lru} despite using less parameters or a smaller activation buffer, even at longer distances between successive $\texttt{w}$ and $\texttt{r}$ symbols (\cref{fig:SI_flipflop}).

In addition to Flip-Flop language modeling, we further evaluate \gls{mgrade}'s selectivity relative to linear time-invariant \glspl{ssm} and \glspl{tcn} by comparing the performance of \gls{mgrade} to \gls{lru} and a \gls{tcn} on the well-established selective copying task \citep[\cref{fig:F3}B;][]{tegmark2019orthogonalru, gu2024mamba}. 
Selective copying is related to Flip-Flop modeling, but instead of providing an explicit instruction ahead of the sequence elements that contain relevant content (i.e.\ the $\texttt{w}$ symbol used in $L_{ff}$), the content of the sequence element itself defines its relevance (see Appendix \cref{sec:appendix_selectivecopy} for details).

\begin{table}[t]
\caption{\textbf{Test accuracy on Flip-Flop and Selective Copying.} Results on H3 \cite{dao2022hungry-h3} are from \citet{gu2024mamba}. Parameter (``Params.".) and activation memory (``Buff.") in bytes.}
\label{tab:flipflopselective}
\centering
\resizebox{\linewidth}{!}{
\setlength{\tabcolsep}{3pt}
\begin{threeparttable}
\begin{tabular}{lcccc}
    \toprule
                            & \multicolumn{2}{c}{Flip-Flop} & \multicolumn{2}{c}{Selective Copying}\\
                            \cmidrule(lr){2-3}\cmidrule(lr){4-5}
                            & Acc.  & Params. / Buff. &  Acc.  & Params. / Buff. \\
    \midrule
    \textbf{\gls{mgrade}}   & \textbf{99.6 \textpm~0.3} & \textbf{12K} / \textbf{384} & \textbf{87.1 \textpm~2.2} & \textbf{260K} / \textbf{68K} \\
    \gls{lru} + \gls{dcls}  & 88.5 \textpm~3.5          & 20\text{K} / 384          & 16.7 \textpm~3.5          & 324\text{K} / 68\text{K} \\
    \gls{tcn}               & 60.6 \textpm~0.1          & 12K / 64K  & 20.4 \textpm~0.1                     & 320K / 3.2M \\
    H3                      & --                     & --          & 57.0                  & 664K / 2K \\
    \bottomrule
\end{tabular}
\end{threeparttable}
}
\end{table}

Our results in \cref{tab:flipflopselective} show that \gls{mgrade} clearly outperforms our \gls{lru} implementation by more than 70\%, despite using 1.3$\times$ less parameters.
It also outperforms the \gls{tcn} by more than 60\% despite the \gls{tcn}'s receptive field being constructed to cover the entire input sequence (see \cref{tab:selectivecopy-hp}).
These results are consistent with the results reported by \citet{gu2024mamba} on \gls{h3} \citep{dao2022hungry-h3}, another linear time-invariant \gls{ssm}.
This emphasizes the importance of the input-dependent gating over hidden states, which linear time-invariant \glspl{ssm} and \glspl{tcn} lack.

Overall, the Flip-Flop modeling and selective copying tasks confirm that \gls{mgrade}’s minGRU-style recurrent component enables robust and selective long-range dependency modeling, outperforming purely convolutional and non-gated recurrent models without having to scale the memory size with input sequence length. These long-range learning capabilities combined with the ability of the learnable spacings to model short-term dynamics underpin \gls{mgrade}’s strong performance on real-world sequence tasks.

\section{Empirical Evaluation} \label{sec:empiricalverification}
  We empirically evaluate \gls{mgrade}'s capabilities on two complementary sequence modeling benchmarks, \gls{lra} and \gls{gsc}, designed to contain challenging long- and short-range dependencies.
The \gls{lra} benchmark \citep{tay2021lra} assesses the performance and inductive biases of \gls{mgrade} on long range dependency tasks across different modalities (text and flattened images), featuring sequences of lengths 1K to 8K. %
The \gls{gsc} task evaluates \gls{mgrade} on real-world time-series data, requiring the classification of raw speech recordings into one of 35 classes.
Each audio sample is a one-second waveform recorded at 16 kHz, yielding sequences of length 16K.
When evaluating the results, we do not solely focus on the achieved accuracy, but also consider the memory footprints of both parameters ("Params.") and activation buffer ("Buff."), which indicate how suitable the model is for deployment on embedded systems.
Appendix \cref{sec:appendix_activ_buff_calc} and \cref{sec:appendix_embedded_memory} show the calculation of activation buffer sizes and considerations regarding the instantiation on embedded platforms, respectively.

\textbf{Experimental setup}
Since \gls{mgrade} is designed for real-time signal processing at the edge, it processes its inputs in a streamed and causal fashion.
Therefore, we deliberately avoid using the (acausal) bidirectional processing used by S5 \citep{smith2023simplified}, S4-LegS \citep{gu2022on}, and HGRN \citep{qin2023HGRN}, although its inclusion leads to significant performance improvements on \gls{lra}, as shown by comparing S4-LegS to the causal S4 \citep{gu2022efficiently} on Retrieval, Image, and Pathfinder (row 1 vs row 2 in \cref{tab:lraresults}). 
This also holds for raw-speech classification, with bidirectional models achieving a 3-5\% gain (top three rows of \cref{tab:sc35_compact}).
Additionally, we process the raw inputs directly, consistent with all baselines, except for HGRN \citep{qin2023HGRN} which applies positional encoding to all \gls{lra} tasks except Pathfinder.
Experiment details are in Appendix \cref{sec:appendix_hyperparams}.

\textbf{Results}
In \cref{tab:lraresults}, we compare \gls{mgrade}'s performance on \gls{lra} to current \gls{sota} \gls{rnn} and convolution-based architectures.
Compared to the best-performing models, \gls{mgrade} reduces the memory footprint significantly, while still achieving comparable accuracies:
for example,  on ListOps, it achieves an accuracy within 0.9\% of Liquid-S4's performance \citep{hasani2023liquid}, while using 7$\times$ fewer parameters; on Pathfinder, it remains within 1.8\% of MRConv-L's performance, while using 8$\times$ smaller activation buffers.
Compared to the models that are closest in size, \gls{mgrade} delivers higher performance:
1.9\% higher accuracy than HGRN \citep{qin2023HGRN} on ListOps, 8\% higher accuracy than S4 on Pathfinder (while still using 1.5-2$\times$ fewer parameters).
With a maximum memory usage below 3MB for the Image task, \gls{mgrade} is the only model compatible with the 4MB on-chip memory of edge devices like the STM32N6 (see Appendix \ref{sec:appendix_mcu}); all other models exceed this limit, requiring over 7 MB.
We note 

While the previous comparisons demonstrate that \gls{mgrade} attains comparable accuracy with substantially fewer parameters, an important complementary question is how the best performing models behave under iso-parameter conditions. 
To assess this, we optimize \gls{lru} to more closely match \gls{mgrade} in both parameter count and activation footprint. Under this iso-capacity setting, \gls{mgrade} achieves an average 9\% higher accuracy across all \gls{lra} tasks (except PathX), indicating a clear architectural advantage beyond parameter efficiency alone.
Note that unlike the results reported in \citet{orvieto2023resurrecting-lru}, our \gls{lru} and \gls{mgrade} implementations are fully causal for every task, thus suitable for real-time sequence processing.

We note that \gls{mgrade} achieves strong results across \gls{lra} but only reaches chance level on the synthetic PathX task. 
This likely reflects a mismatch in inductive bias rather than a sequence length limitation, as PathX's sparse, flattened 2D fundamentally differ from the more natural multi-timescale dependencies \gls{mgrade} is designed for. 
To test this, we evaluate \gls{mgrade} on the 16K \gls{gsc} task which matches PathX in length but better reflects real-world signals.

In \cref{tab:sc35_compact}, we present the raw-speech classification results on \gls{gsc}, comparing to current \gls{sota} recurrent architectures. 
\gls{mgrade} attains an accuracy within 2.1\% of Liquid-S4 \citep{hasani2023liquid} while requiring 10\% fewer parameters. 
In addition, \gls{mgrade} relies solely on real-valued operations, avoiding the complex-valued arithmetic used in Liquid-S4. 
This combination of reduced parameter count and simpler computation makes \gls{mgrade} a suitable candidate for deployment under hardware or energy constraints.

These results confirm that our proposed architecture is capable of tackling large-scale and long-range tasks, particularly for time-series data, thus validating our theoretical predictions and demonstrating clear advantages in both memory footprint and performance.
Appendix \cref{sec:appendix_spiking} also assesses whether these advantages persist relative to efficient architectures tailored for edge deployment, such as \glspl{snn}. 
\gls{mgrade} remains competitive in terms of parameter numbers, achieving accuracy within 2.6\% of the strongest \gls{snn} while still requiring 3$\times$ less parameters.

To evaluate the impact of \gls{mgrade}'s architectural components on performance, we perform an ablation study on three \gls{lra} tasks, with results summarized in \cref{tab:lraablation_compact}. 
While the convolutions with learnable spacings (DCLS) and recurrent components (\gls{mingru}) perform well on their own on Image and ListOps respectively, only full \gls{mgrade} successfully tackles \textit{both} tasks.
Additionally, both components are needed to solve Pathfinder above chance level.
A more detailed analysis is provided in Appendix \cref{sec:appendix_ablation}

\begin{table*}[t]
\caption{\textbf{Test accuracy on the \gls{lra} benchmark.} We separate parameter memory requirements (``Params.", in bytes) from activation buffer memory requirements (``Buff.", in bytes) (see Appendix \cref{sec:appendix_memoryfootprint}). Parameter counts and accuracies not made available in the publications or extractable from the official code and hyperparameters are denoted by a dash. Best results are in bold, second best underlined.}
\label{tab:lraresults}
    \resizebox{\textwidth}{!}{
    \begin{threeparttable}    
    \vspace{0.1cm}
    \centering
    \begin{tabular}{lcccccccccc}
    \toprule
             & \multicolumn{2}{c}{ListOps} & \multicolumn{2}{c}{Text} & \multicolumn{2}{c}{Retrieval} & \multicolumn{2}{c}{Image} & \multicolumn{2}{c}{Pathfinder} \\ 
        \cmidrule(lr){2-3}\cmidrule(lr){4-5}\cmidrule(lr){6-7}\cmidrule(lr){8-9}\cmidrule(lr){10-11}
        Model & Acc. & Params. / Buff. & Acc. & Params. / Buff. & Acc. & Params. / Buff. & Acc. & Params. / Buff. & Acc. & Params. / Buff. \\ 
    \midrule
        \textit{RNN-based architectures} & & & & & & & & & & \\
        S4~\citep{gu2022efficiently}\tnote{5}                     & 58.4 & 996.1K / 191.4K              & 76.0 & 718.8K / 62.5K                          & 87.1 & 4.58M / 382.8K                                    & 87.3 & 12.98M / 769.5K                  & 86.1           & 3.42M / \underline{382.8K}  \\ 
        S4-LegS~\citep{gu2022on}\tnote{1,5}                       & 59.6 & 2.28M / 511.7K               & 86.8 & 4.96M / 769.5K                          & 90.9 & 6.11M / 769.5K                                    & 88.7 & 13.74M / 1.50M                   & 94.2           & 4.96M / 769.5K\\ 
        DSS$_\text{SOFTMAX}$ ~\citep{gupta2022diagonal}\tnote{5}  & 60.6 & 804.7K / 191.4K              & 84.8 & 593.8K / 62.5K                          & 87.8 & 3.39M / 382.8K                                    & 85.7 & 7.63M / 769.5K                   & 84.6           & \textbf{2.29M} / \underline{382.8K} \\ 
        Liquid-S4~\citep{hasani2023liquid}\tnote{5}               & \textbf{62.8} & 1.29M / 31.3K       & 89.0 & 640.6K / 15.6K                          & 91.2 & 5.73M / 382.8K                                    & \underline{89.5} & 41.98M / 6.11M       & 94.8           & 4.58M / \underline{382.8K}   \\ 
        S5~\citep{smith2023simplified}\tnote{1,5}                 & 62.2 & 742.2K / \textbf{0.4K}      & \underline{89.3} & 4.96M / 4.3K                & 91.4 & 2.94M / 5.9K                                     & 88.0 & 19.47M / \textbf{9.0K}          & 95.3           & 4.30M / \textbf{5.9K}  \\
        \gls{lru}~\citep{orvieto2023resurrecting-lru}\tnote{3}    & 60.2 & 742.2K / 5.9K               & \textbf{89.4} & 4.96M / 4.3K      & 89.9 & 2.94M / 5.9K                                    & $-$  & $-$ / $-$                        & 95.1\tnote{1}  &  4.30M / \textbf{5.9K} \\ 
        Mamba~\citep{soydan2024s7}\tnote{7}                     & 38.0 & $-$ / $-$              & 83.0 & $-$ / $-$                          & 73.0 & $-$ / $-$                                    & 70.0 & $-$ / $-$                  & 69.0           & $-$ / $-$  \\ 
        HGRN~\citep{qin2023HGRN}\tnote{1, 2, 5}                   & 60.0 & 328.1K / 1.6K               & 88.1 & 3.35M / \underline{3.9K}                           & \textbf{94.2} & \underline{449.2K} / \textbf{1.2K}      & 88.7 & 78.63M / \underline{23.83K}                  & 92.9           & 4.96M / \textbf{5.9K}  \\ 
    \midrule
        \textit{Convolution-based architectures} & & & & & & & & & &  \\
        SGConv~\citep{li2023what-sgconv}\tnote{4}                 & 61.5 & $-$ / $\sim$ 5.73M              & 89.2 & $-$ / $\sim$ 3.82M                     & 91.1 & $-$ / $\sim$ 24.05M                             & 88.0 & $-$ / $\sim$ 11.83M                       & \underline{95.5} & $-$ / $\sim$ 6.11M \\ 
        MRConv-L ~\citep{cunningham2024reparameterized}\tnote{4}  & \underline{62.4} & 2.52M / $\sim$ 5.73M & \textbf{89.4} & $-$ / $\sim$ 3.82M                     & \underline{91.5} & $-$ / $\sim$ 24.05M                & \textbf{90.6} & 29.39M / $\sim$ 11.83M             & \textbf{96.7} & $-$ / $\sim$ 6.11M \\ 
    \midrule
        \textit{Our implementation} & & & & & & & & & &  \\
        {\gls{lru}}                                               & 58.3 & \underline{164.1K} / \underline{0.78K} & 85.9 & \underline{179.7K} / \textbf{0.8K}                 & 86.8 & 460.9K / \underline{1.6K}                  & 84.3  & \underline{4.58M} / \textbf{9.0K}    & 57.4\tnote{6}  & 4.30M / \textbf{5.9K}  \\ 
        {\gls{mgrade}}                                            & 61.9 & \textbf{156.3K} / 11.7K              & 87.3 & \textbf{171.9K} / 5.9K                 & 88.1 & \textbf{406.3K} / 6.6K                         & 87.1  & \textbf{2.65M} / 769.5K                    & 94.9           & \underline{2.33M} / 769.5K  \\ 
    \bottomrule
    \end{tabular}\begin{tablenotes}
       \item [1] Bi-directional input processing.
       \item [2] Uses positional encoding of the input.
       \item [3] Assuming same hyperparameters as in S5 \citep{smith2023simplified} as mentioned in \citep{orvieto2023resurrecting-lru} (official code not available).
       \item [4] Buffer sizes calculated assuming same hyperparameters as in S4 \citep{gu2022efficiently} as mentioned in \citep{li2023what-sgconv} and \citep{cunningham2024reparameterized} (code not available).
       \item [5] Parameter numbers extracted from the official GitHub repositories.
       \item [6] Best validation accuracy using fully causal model (see Appendix \cref{sec:appendix_hyperparams}).
     \end{tablenotes}
    \end{threeparttable}
    }
\end{table*}

\begin{table}
  \caption{\textbf{Test accuracy on the 35-way \gls{gsc} classification task.} We differentiate causal and bidirectional architectures as in \citep{gu2022on}. Best results are in bold, second best underlined.}
  \label{tab:sc35_compact}
    \vspace{0.1cm}
    \centering
    \resizebox{\columnwidth}{!}{
    \begin{threeparttable}    
        \begin{tabular}{@{}lccc@{}}
        \toprule                   
        Model                                         & Params. / Buff.  & Causal     & Bidirectional    \\ 
        \midrule
        S4-LegS     \citep{gu2022on}\tnote{1,2}                  & 1.17M / 191K          & 93.6               & 96.1 \\
        S4-FouT     \citep{gu2022on}\tnote{1,2}                  & 1.17M / 191K          & 91.8               & 95.3  \\
        S4D-LegS    \citep{gu2022on}\tnote{1,2}                  & 1.17M / 191K          & 93.6               & 95.8             \\
        S4D-Inv    \citep{gu2022on}\tnote{1,2}                   & 1.17M / 191K          & 93.4               & 96.2             \\
        S4D-Lin    \citep{gu2022on}\tnote{1,2}                   & 1.17M / 191K          & 93.4               & \underline{96.3}       \\
        Liquid-S4    \citep{hasani2023liquid}\tnote{1,2}         & \underline{875K} / \underline{20K}         & \textbf{96.8}\tnote{1}  & -    \\ 
        S5         \citep{smith2023simplified}\tnote{1,2}        & 1.07M / \textbf{4K}     & -                & \textbf{96.5}    \\ 
        \midrule
        mGRADE                                        & \textbf{773K} / 78K    & \underline{94.7}    & - \\
        \bottomrule
        \end{tabular}\begin{tablenotes}
           \item [1] Uses complex numbers in the recurrence.
           \item [2] Parameter numbers extracted from the official publications.
         \end{tablenotes}
    \end{threeparttable}
    }
\end{table}

\begin{table}[h]
\centering
\caption{\textbf{Ablation of \gls{mgrade}'s component on \gls{lra}.} We scale layer width $H$ to match parameters. $\times$ denotes chance level.}
\label{tab:lraablation_compact}
\vspace{0.1cm}
\resizebox{\columnwidth}{!}{   
    \begin{tabular}{@{}lcccccc@{}}
    \toprule
         & \multicolumn{2}{c}{ListOps} & \multicolumn{2}{c}{Image} & \multicolumn{2}{c}{Pathfinder} \\ 
        \cmidrule(lr){2-3} \cmidrule(lr){4-5} \cmidrule(lr){6-7}
        Model & Acc & Params. / Buff. & Acc & Params. / Buff. & Acc & Params. / Buff. \\ 
    \midrule
        TCN    & 39.6 & 180K/16K  & 85.3 & 2.9M/880K & $\times$ & 3.5M/920K \\ 
        DCLS   & 43.8 & 164K/12K  & 86.2 & 2.1M/880K & $\times$ & 2.1M/960K \\ 
        minGRU & \textbf{62.5} & 160K/768 & 66.0 & 2.8M/3K & $\times$ & 2.5M/3K \\ 
        mGRADE & 61.9 & 160K/12K  & \textbf{87.1} & 2.7M/770K & \textbf{94.9} & 2.3M/770K \\ 
    \bottomrule
    \end{tabular}
}
\end{table}

\section{Related Works} \label{sec:relatedwork}
\textbf{Gated Recurrent Models} Gated \glspl{rnn}, notably the \glspl{lstm} and \glspl{gru}, alleviate vanishing-gradient effects \citep{bengio1994learning-vanishing, hochreiter1997long-lstm} through learned gating mechanisms but need to be trained sequentially and are therefore inefficient for very long sequences.
Removing the hidden state dependency of the update gate enables parallel training through a prefix scan \citep{blelloch1990prefix}, yielding models that are very similar or equivalent to the gated recurrence used by \gls{mgrade} \citep{bradbury2017quasirecurrent,martin2018parallelizing, feng2025were}.
\Glspl{hgrn} extends such parallelizable gated \glspl{rnn} with complex-valued parameters and a hierarchical gating bias, encouraging hierarchical processing of time scales from fast to slow \citep{qin2023HGRN}, but this enforced low-frequency bias can hinder high-frequency feature detection.
\gls{mgrade}'s convolution with learnable spacings addresses this limitation (see Appendix \cref{sec:appendix_highfreqtask}). 

\textbf{Linear Recurrent Models} Linear \glspl{rnn} allow for parallel training by using fully linear transitions rather than gating.
Early investigations into linear \glspl{rnn} \citep{mozer1993neural, mikolov2012context-rnnlm} have converged in linear time-invariant \glspl{ssm}, such as S4 \citep{gu2022efficiently}, \gls{lru} \citep{orvieto2023resurrecting-lru}, and \gls{h3} \citep{dao2022hungry-h3}.
While easily parallelizable, \glspl{ssm} suffer from a low-frequency bias \citep{yu2025tuning} and time-invariant \glspl{ssm} lack \textit{selectivity}, limiting expressivity over long-range dependencies (see Section \cref{sec:flipflop}).
To address this, Mamba \citep{gu2024mamba} reintroduces input-dependent gating, reaching \gls{sota} performance on language modeling benchmarks.
However, Mamba performs worse than previous \glspl{ssm} on long-range dependencies such as those found in the \gls{lra} benchmark \cite{beck2024xlstm, soydan2024s7}.
In terms of memory costs, \glspl{ssm} use complex-valued parameters and require high numerical precision, reducing hardware compatibility and quantizability \citep{zhao2025quantizingsmallscalestatespacemodels}.
In contrast, the \gls{mgrade}'s gated recurrent component has been successfully adapted for low-precision embedded deployment \citep{billaudelle2025minimalist}.

\textbf{Temporal Convolution Models}
\Glspl{tcn} model temporal dependencies within a fixed-length receptive field using causal 1D convolutions over time \citep{waibel1989phoneme}.
Dilated convolutions efficiently increase the temporal receptive field by adding regular spaces between kernel elements, which Wavenet exponentially expands with layer depth \citep{oord2016wavenet}.
However, fixed regular spacings miss information at irregular intervals common in real-world signals \citep{george1997speechanalysis}.
To address this, \gls{dcls} replaces fixed spacings with learnable ones, increasing performance and flexibility in temporal classification with \gls{snn}s \citep{hammouamri2024learning}.
However, \gls{dcls} still buffers inputs, scaling memory with kernel length.
In global convolutional networks, such as SGConv \citep{li2023what-sgconv} and MRConv \citep{cunningham2024reparameterized}, kernel length (and therefore memory cost) adaptively scales with sequence length, effectively maximizing buffer size.
\gls{mgrade}'s learnable spacings use a fixed maximum delay length resulting in a constant activation buffer size, while relying on the gated recurrence to capture longer dependencies.
Similar fixed-length convolutions have been successfully adapted to hardware, notably in DenRAM and Chameleon \citep{dagostino2024denram, blanken2025chameleon}.

\textbf{Combining Convolutions and Recurrence} 
Several recurrent models \citep{bradbury2017quasirecurrent,dao2022hungry-h3, beck2024xlstm} add convolutions to their architecture, yielding consistent performance improvements.
However, the distinct functional contributions of these components, particularly with respect to their intrinsic timescales, remain underexplored prior to this work.
Importantly, none of these works use learnable temporal spacings which are critical to \gls{mgrade}'s theoretical and empirical capabilities.

\section{Conclusion}
We present \gls{mgrade}, a hybrid-memory architecture designed for real-time multi-timescale sequence processing with edge-compatible memory efficiency.
Our design is grounded in formal proofs and experimental evidence that demonstrate \gls{mgrade}'s capacity to model not only short-term dynamics, but also long-range dependencies.
We characterize the functional complementarity of each component: the learnable spacings in the convolution are a short-term cache for delayed inputs, enabling parameter-efficient reconstruction of dynamical system geometries from partial observations, while the gated recurrent component maintains a selectively compressed long-term history of the input. 

We support these theoretical arguments with an extensive empirical evaluation of \gls{mgrade} across the multi-timescale \gls{lra} and raw audio \gls{gsc} tasks.
The results show that \gls{mgrade} substantially reduces the memory footprint compared to the \gls{sota} models, while maintaining highly competitive performance.
This highlights the potential of \gls{mgrade} for large-scale, real-time sequence modeling on resource-constrained embedded systems.

\section*{Impact Statement}
This paper presents work whose goal is to advance the field
of Machine Learning. There are many potential societal
consequences of our work, none which we feel must be
specifically highlighted here.

\bibliography{icml2026/mybib.bib}

\begin{thebibliography}{72}
\providecommand{\natexlab}[1]{#1}
\providecommand{\url}[1]{\texttt{#1}}
\expandafter\ifx\csname urlstyle\endcsname\relax
  \providecommand{\doi}[1]{doi: #1}\else
  \providecommand{\doi}{doi: \begingroup \urlstyle{rm}\Url}\fi

\bibitem[{Advanced Micro Devices, Inc.}(2024)]{versaledgeai_2026}
{Advanced Micro Devices, Inc.}
\newblock Versal ai edge series gen 2 product overview, 2024.
\newblock [Online]. Available: https://www.amd.com/en/products/adaptive-socs-and-fpgas/versal/gen2/ai-edge-series.html.

\bibitem[Beck et~al.(2024)Beck, P{\"o}ppel, Spanring, Auer, Prudnikova, Kopp, Klambauer, Brandstetter, and Hochreiter]{beck2024xlstm}
Beck, M., P{\"o}ppel, K., Spanring, M., Auer, A., Prudnikova, O., Kopp, M.~K., Klambauer, G., Brandstetter, J., and Hochreiter, S.
\newblock x{LSTM}: Extended long short-term memory.
\newblock In \emph{The Thirty-eighth Annual Conference on Neural Information Processing Systems}, 2024.
\newblock URL \url{https://openreview.net/forum?id=ARAxPPIAhq}.

\bibitem[Bengio et~al.(1994)Bengio, Simard, and Frasconi]{bengio1994learning-vanishing}
Bengio, Y., Simard, P., and Frasconi, P.
\newblock Learning long-term dependencies with gradient descent is difficult.
\newblock \emph{IEEE Transactions on Neural Networks}, 5\penalty0 (2):\penalty0 157--166, 1994.
\newblock \doi{10.1109/72.279181}.

\bibitem[Billaudelle et~al.(2025)Billaudelle, Kriener, Moro, Torchet, and Payvand]{billaudelle2025minimalist}
Billaudelle, S., Kriener, L., Moro, F., Torchet, T., and Payvand, M.
\newblock Minimalist: switched-capacitor circuits for efficient in-memory computation of gated recurrent units, 2025.
\newblock URL \url{https://arxiv.org/abs/2505.08599}.

\bibitem[Blelloch(1990)]{blelloch1990prefix}
Blelloch, G.~E.
\newblock Prefix sums and their applications.
\newblock Technical Report CMU-CS-90-190, School of Computer Science, Carnegie Mellon University, November 1990.

\bibitem[Bradbury et~al.(2017)Bradbury, Merity, Xiong, and Socher]{bradbury2017quasirecurrent}
Bradbury, J., Merity, S., Xiong, C., and Socher, R.
\newblock Quasi-recurrent neural networks.
\newblock In \emph{International Conference on Learning Representations}, 2017.
\newblock URL \url{https://openreview.net/forum?id=H1zJ-v5xl}.

\bibitem[Cho et~al.(2014)Cho, van Merri{\"e}nboer, Gulcehre, Bahdanau, Bougares, Schwenk, and Bengio]{cho2014learning-gru}
Cho, K., van Merri{\"e}nboer, B., Gulcehre, C., Bahdanau, D., Bougares, F., Schwenk, H., and Bengio, Y.
\newblock Learning phrase representations using {RNN} encoder{--}decoder for statistical machine translation.
\newblock In Moschitti, A., Pang, B., and Daelemans, W. (eds.), \emph{Proceedings of the 2014 Conference on Empirical Methods in Natural Language Processing ({EMNLP})}, pp.\  1724--1734, Doha, Qatar, October 2014. Association for Computational Linguistics.
\newblock \doi{10.3115/v1/D14-1179}.
\newblock URL \url{https://aclanthology.org/D14-1179/}.

\bibitem[Chung et~al.(2014)Chung, Gulcehre, Cho, and Bengio]{chung2014empirical-gru}
Chung, J., Gulcehre, C., Cho, K., and Bengio, Y.
\newblock Empirical evaluation of gated recurrent neural networks on sequence modeling.
\newblock In \emph{NIPS 2014 Workshop on Deep Learning, December 2014}, 2014.

\bibitem[Covi et~al.(2021)Covi, Donati, Liang, Kappel, Heidari, Payvand, and Wang]{covi2021edgecompute}
Covi, E., Donati, E., Liang, X., Kappel, D., Heidari, H., Payvand, M., and Wang, W.
\newblock Adaptive extreme edge computing for wearable devices.
\newblock \emph{Frontiers in Neuroscience}, Volume 15 - 2021, 2021.
\newblock ISSN 1662-453X.
\newblock \doi{10.3389/fnins.2021.611300}.
\newblock URL \url{https://www.frontiersin.org/journals/neuroscience/articles/10.3389/fnins.2021.611300}.

\bibitem[Cramer et~al.(2020)Cramer, Stradmann, Schemmel, and Zenke]{cramer2020heidelberg}
Cramer, B., Stradmann, Y., Schemmel, J., and Zenke, F.
\newblock The heidelberg spiking data sets for the systematic evaluation of spiking neural networks.
\newblock \emph{IEEE Transactions on Neural Networks and Learning Systems}, 33\penalty0 (7):\penalty0 2744--2757, 2020.

\bibitem[Cunningham et~al.(2024)Cunningham, Giannone, Zhang, and Deisenroth]{cunningham2024reparameterized}
Cunningham, H.~J., Giannone, G., Zhang, M., and Deisenroth, M.~P.
\newblock Reparameterized multi-resolution convolutions for long sequence modelling.
\newblock In \emph{The Thirty-eighth Annual Conference on Neural Information Processing Systems}, 2024.
\newblock URL \url{https://openreview.net/forum?id=RwgNbIpCpk}.

\bibitem[Dao et~al.(2022)Dao, Fu, Saab, Thomas, Rudra, and R{\'{e}}]{dao2022hungry-h3}
Dao, T., Fu, D.~Y., Saab, K.~K., Thomas, A.~W., Rudra, A., and R{\'{e}}, C.
\newblock Hungry hungry hippos: Towards language modeling with state space models.
\newblock \emph{CoRR}, abs/2212.14052, 2022.
\newblock \doi{10.48550/arXiv.2212.14052}.
\newblock URL \url{https://doi.org/10.48550/arXiv.2212.14052}.

\bibitem[den Blanken \& Frenkel(2025)den Blanken and Frenkel]{blanken2025chameleon}
den Blanken, D. and Frenkel, C.
\newblock Chameleon: A matmul-free temporal convolutional network accelerator for end-to-end few-shot and continual learning from sequential data, 2025.

\bibitem[D’agostino et~al.(2024)D’agostino, Moro, Torchet, Demira{\u{g}}, Grenouillet, Castellani, Indiveri, Vianello, and Payvand]{dagostino2024denram}
D’agostino, S., Moro, F., Torchet, T., Demira{\u{g}}, Y., Grenouillet, L., Castellani, N., Indiveri, G., Vianello, E., and Payvand, M.
\newblock Denram: neuromorphic dendritic architecture with rram for efficient temporal processing with delays.
\newblock \emph{Nature communications}, 15\penalty0 (1):\penalty0 3446, 2024.

\bibitem[Feng et~al.(2025)Feng, Tung, Ahmed, Bengio, and Hajimirsadeghi]{feng2025were}
Feng, L., Tung, F., Ahmed, M.~O., Bengio, Y., and Hajimirsadeghi, H.
\newblock Were {RNN}s all we needed?, 2025.
\newblock URL \url{https://openreview.net/forum?id=GrmFFxGnOR}.

\bibitem[George \& Smith(1997)George and Smith]{george1997speechanalysis}
George, E. and Smith, M.
\newblock Speech analysis/synthesis and modification using an analysis-by-synthesis/overlap-add sinusoidal model.
\newblock \emph{IEEE Transactions on Speech and Audio Processing}, 5\penalty0 (5):\penalty0 389--406, 1997.
\newblock \doi{10.1109/89.622558}.

\bibitem[Gers et~al.(2000)Gers, Schmidhuber, and Cummins]{gers2000learning-lstm}
Gers, F.~A., Schmidhuber, J., and Cummins, F.~A.
\newblock Learning to forget: Continual prediction with {LSTM}.
\newblock \emph{Neural Comput.}, 12\penalty0 (10):\penalty0 2451--2471, 2000.
\newblock \doi{10.1162/089976600300015015}.
\newblock URL \url{https://doi.org/10.1162/089976600300015015}.

\bibitem[Gilpin(2024)]{gilpin2024dysts}
Gilpin, W.
\newblock dysts: A chaotic systems simulation library, 2024.
\newblock URL \url{https://github.com/williamgilpin/dysts}.

\bibitem[{Google LLC}(2021)]{googlecoral_2026}
{Google LLC}.
\newblock Google edge tpu accelerator module datasheet.
\newblock Version 1.4, 2021.
\newblock [Online]. Available: https://gweb-coral-full.uc.r.appspot.com/docs/module/datasheet/.

\bibitem[Gu \& Dao(2024)Gu and Dao]{gu2024mamba}
Gu, A. and Dao, T.
\newblock Mamba: Linear-time sequence modeling with selective state spaces.
\newblock In \emph{First Conference on Language Modeling}, 2024.
\newblock URL \url{https://openreview.net/forum?id=tEYskw1VY2}.

\bibitem[Gu et~al.(2020)Gu, Gulcehre, Paine, Hoffman, and Pascanu]{gu2020improving-ugi}
Gu, A., Gulcehre, C., Paine, T., Hoffman, M., and Pascanu, R.
\newblock Improving the gating mechanism of recurrent neural networks.
\newblock In \emph{Proceedings of the 37th International Conference on Machine Learning}, ICML'20. JMLR.org, 2020.

\bibitem[Gu et~al.(2022{\natexlab{a}})Gu, Goel, Gupta, and R{\'e}]{gu2022on}
Gu, A., Goel, K., Gupta, A., and R{\'e}, C.
\newblock On the parameterization and initialization of diagonal state space models.
\newblock In Oh, A.~H., Agarwal, A., Belgrave, D., and Cho, K. (eds.), \emph{Advances in Neural Information Processing Systems}, 2022{\natexlab{a}}.
\newblock URL \url{https://openreview.net/forum?id=yJE7iQSAep}.

\bibitem[Gu et~al.(2022{\natexlab{b}})Gu, Goel, and R\'e]{gu2022efficiently}
Gu, A., Goel, K., and R\'e, C.
\newblock Efficiently modeling long sequences with structured state spaces.
\newblock In \emph{The International Conference on Learning Representations ({ICLR})}, 2022{\natexlab{b}}.

\bibitem[Gupta et~al.(2022)Gupta, Gu, and Berant]{gupta2022diagonal}
Gupta, A., Gu, A., and Berant, J.
\newblock Diagonal state spaces are as effective as structured state spaces.
\newblock In Oh, A.~H., Agarwal, A., Belgrave, D., and Cho, K. (eds.), \emph{Advances in Neural Information Processing Systems}, 2022.
\newblock URL \url{https://openreview.net/forum?id=RjS0j6tsSrf}.

\bibitem[Göltz et~al.(2025)Göltz, Weber, Kriener, Billaudelle, Lake, Schemmel, Payvand, and Petrovici]{goeltz2025delgrad}
Göltz, J., Weber, J., Kriener, L., Billaudelle, S., Lake, P., Schemmel, J., Payvand, M., and Petrovici, M.~A.
\newblock Delgrad: exact event-based gradients for training delays and weights on spiking neuromorphic hardware.
\newblock \emph{Nature Communications}, 16:\penalty0 8245, 2025.
\newblock \doi{10.1038/s41467-025-63120-y}.

\bibitem[Hammouamri et~al.(2024)Hammouamri, Khalfaoui-Hassani, and Masquelier]{hammouamri2024learning}
Hammouamri, I., Khalfaoui-Hassani, I., and Masquelier, T.
\newblock Learning delays in spiking neural networks using dilated convolutions with learnable spacings.
\newblock In \emph{The Twelfth International Conference on Learning Representations}, 2024.
\newblock URL \url{https://openreview.net/forum?id=4r2ybzJnmN}.

\bibitem[Hasani et~al.(2023)Hasani, Lechner, Wang, Chahine, Amini, and Rus]{hasani2023liquid}
Hasani, R., Lechner, M., Wang, T.-H., Chahine, M., Amini, A., and Rus, D.
\newblock Liquid structural state-space models.
\newblock In \emph{The Eleventh International Conference on Learning Representations}, 2023.
\newblock URL \url{https://openreview.net/forum?id=g4OTKRKfS7R}.

\bibitem[Hassani et~al.(2023)Hassani, Pellegrini, and Masquelier]{hassani2023dilated-dcls}
Hassani, I.~K., Pellegrini, T., and Masquelier, T.
\newblock Dilated convolution with learnable spacings.
\newblock In \emph{The Eleventh International Conference on Learning Representations}, 2023.
\newblock URL \url{https://openreview.net/forum?id=Q3-1vRh3HOA}.

\bibitem[Hochreiter \& Schmidhuber(1997)Hochreiter and Schmidhuber]{hochreiter1997long-lstm}
Hochreiter, S. and Schmidhuber, J.
\newblock Long short-term memory.
\newblock \emph{Neural computation}, 9\penalty0 (8):\penalty0 1735--1780, 1997.

\bibitem[Howitt et~al.(2024)Howitt, Nair, Dods, and Hopkins]{howitt2024}
Howitt, K., Nair, S., Dods, A., and Hopkins, R.~M.
\newblock Generalizations across filler-gap dependencies in neural language models.
\newblock In Barak, L. and Alikhani, M. (eds.), \emph{Proceedings of the 28th Conference on Computational Natural Language Learning}, pp.\  269--279, Miami, FL, USA, November 2024. Association for Computational Linguistics.
\newblock \doi{10.18653/v1/2024.conll-1.21}.
\newblock URL \url{https://aclanthology.org/2024.conll-1.21/}.

\bibitem[Hyndman \& Koehler(2006)Hyndman and Koehler]{hyndmanMASE2006}
Hyndman, R.~J. and Koehler, A.~B.
\newblock Another look at measures of forecast accuracy.
\newblock \emph{International Journal of Forecasting}, 22\penalty0 (4):\penalty0 679--688, 2006.
\newblock ISSN 0169-2070.
\newblock \doi{https://doi.org/10.1016/j.ijforecast.2006.03.001}.
\newblock URL \url{https://www.sciencedirect.com/science/article/pii/S0169207006000239}.

\bibitem[Jing et~al.(2019)Jing, Gulcehre, Peurifoy, Shen, Tegmark, Soljacic, and Bengio]{tegmark2019orthogonalru}
Jing, L., Gulcehre, C., Peurifoy, J., Shen, Y., Tegmark, M., Soljacic, M., and Bengio, Y.
\newblock Gated orthogonal recurrent units: On learning to forget.
\newblock \emph{Neural Computation}, 31\penalty0 (4):\penalty0 765--783, 04 2019.
\newblock ISSN 0899-7667.
\newblock \doi{10.1162/neco_a_01174}.
\newblock URL \url{https://doi.org/10.1162/neco\_a\_01174}.

\bibitem[Khalfaoui-Hassani et~al.(2023)Khalfaoui-Hassani, Pellegrini, and Masquelier]{hassani2023dcls2}
Khalfaoui-Hassani, I., Pellegrini, T., and Masquelier, T.
\newblock Dilated convolution with learnable spacings: beyond bilinear interpolation.
\newblock In \emph{Differentiable Almost Everything Workshop of the 40-th International Conference on Machine Learning}, 2023.
\newblock URL \url{https://openreview.net/forum?id=4r2ybzJnmN}.

\bibitem[Kingma \& Ba(2017)Kingma and Ba]{kingma2017adam}
Kingma, D.~P. and Ba, J.
\newblock Adam: A method for stochastic optimization, 2017.
\newblock URL \url{https://arxiv.org/abs/1412.6980}.

\bibitem[Li et~al.(2023)Li, Cai, Zhang, Chen, and Dey]{li2023what-sgconv}
Li, Y., Cai, T., Zhang, Y., Chen, D., and Dey, D.
\newblock What makes convolutional models great on long sequence modeling?
\newblock In \emph{The Eleventh International Conference on Learning Representations}, 2023.
\newblock URL \url{https://openreview.net/forum?id=TGJSPbRpJX-}.

\bibitem[Liu et~al.(2023)Liu, Ash, Goel, Krishnamurthy, and Zhang]{liu2023attentionglitch}
Liu, B., Ash, J.~T., Goel, S., Krishnamurthy, A., and Zhang, C.
\newblock Exposing attention glitches with flip-flop language modeling.
\newblock In \emph{Thirty-seventh Conference on Neural Information Processing Systems}, 2023.
\newblock URL \url{https://openreview.net/forum?id=VzmpXQAn6E}.

\bibitem[Lorenz(1963)]{Lorenz1963}
Lorenz, E.~N.
\newblock Deterministic nonperiodic flow.
\newblock \emph{Journal of the Atmospheric Sciences}, 20:\penalty0 130–141, 1963.

\bibitem[Loshchilov \& Hutter(2019)Loshchilov and Hutter]{loshchilov2018decoupled-adamw}
Loshchilov, I. and Hutter, F.
\newblock Decoupled weight decay regularization.
\newblock In \emph{International Conference on Learning Representations}, 2019.
\newblock URL \url{https://openreview.net/forum?id=Bkg6RiCqY7}.

\bibitem[Maass \& Schmitt(1999)Maass and Schmitt]{maas1999temporalcoding}
Maass, W. and Schmitt, M.
\newblock On the complexity of learning for spiking neurons with temporal coding.
\newblock \emph{Information and Computation}, 153\penalty0 (1):\penalty0 26--46, 1999.
\newblock ISSN 0890-5401.
\newblock \doi{https://doi.org/10.1006/inco.1999.2806}.
\newblock URL \url{https://www.sciencedirect.com/science/article/pii/S0890540199928067}.

\bibitem[Martin \& Cundy(2018)Martin and Cundy]{martin2018parallelizing}
Martin, E. and Cundy, C.
\newblock Parallelizing linear recurrent neural nets over sequence length.
\newblock In \emph{6th International Conference on Learning Representations, {ICLR} 2018, Vancouver, BC, Canada, April 30 - May 3, 2018, Conference Track Proceedings}. OpenReview.net, 2018.
\newblock URL \url{https://openreview.net/forum?id=HyUNwulC-}.

\bibitem[Mikolov \& Zweig(2012)Mikolov and Zweig]{mikolov2012context-rnnlm}
Mikolov, T. and Zweig, G.
\newblock Context dependent recurrent neural network language model.
\newblock In \emph{2012 IEEE Spoken Language Technology Workshop (SLT)}, pp.\  234--239, 2012.
\newblock \doi{10.1109/SLT.2012.6424228}.

\bibitem[Miralles-González et~al.(2025)Miralles-González, Huertas-Tato, Martín, and Camacho]{miralles2025lralocality}
Miralles-González, P., Huertas-Tato, J., Martín, A., and Camacho, D.
\newblock On the locality bias and results in the long range arena, 2025.
\newblock URL \url{https://arxiv.org/abs/2501.14850}.

\bibitem[Mozer(1993)]{mozer1993neural}
Mozer, M.
\newblock Neural net architectures for temporal sequence processing.
\newblock \emph{Santa Fe Institute Studies in The Sciences of Complexity}, 15:\penalty0 243--243, 03 1993.

\bibitem[Mutlu et~al.(2025)Mutlu, Olgun, and Yüksel]{mutlu2025memorycentric}
Mutlu, O., Olgun, A., and Yüksel, I.~E.
\newblock Memory-centric computing: Solving computing’s memory problem.
\newblock In \emph{2025 IEEE International Memory Workshop (IMW)}, pp.\  1--4, 2025.
\newblock \doi{10.1109/IMW61990.2025.11026935}.

\bibitem[Nangia \& Bowman(2018)Nangia and Bowman]{nangia2018listops}
Nangia, N. and Bowman, S.
\newblock {L}ist{O}ps: A diagnostic dataset for latent tree learning.
\newblock In Cordeiro, S.~R., Oraby, S., Pavalanathan, U., and Rim, K. (eds.), \emph{Proceedings of the 2018 Conference of the North {A}merican Chapter of the Association for Computational Linguistics: Student Research Workshop}, pp.\  92--99, New Orleans, Louisiana, USA, June 2018. Association for Computational Linguistics.
\newblock \doi{10.18653/v1/N18-4013}.
\newblock URL \url{https://aclanthology.org/N18-4013/}.

\bibitem[Orvieto et~al.(2023)Orvieto, Smith, Gu, Fernando, G{\"{u}}l{\c{c}}ehre, Pascanu, and De]{orvieto2023resurrecting-lru}
Orvieto, A., Smith, S.~L., Gu, A., Fernando, A., G{\"{u}}l{\c{c}}ehre, {\c{C}}., Pascanu, R., and De, S.
\newblock Resurrecting recurrent neural networks for long sequences.
\newblock \emph{CoRR}, abs/2303.06349, 2023.
\newblock \doi{10.48550/arXiv.2303.06349}.
\newblock URL \url{https://doi.org/10.48550/arXiv.2303.06349}.

\bibitem[Ostrow et~al.(2024)Ostrow, Eisen, and Fiete]{Ostrow2024}
Ostrow, M., Eisen, A.~J., and Fiete, I.~R.
\newblock Delay embedding theory of neural sequence models.
\newblock In \emph{ICML 2024 Workshop on Mechanistic Interpretability}, 2024.
\newblock URL \url{https://openreview.net/forum?id=wew3SpwIqr}.

\bibitem[Pascanu et~al.(2013)Pascanu, Mikolov, and Bengio]{pascanu2013difficulty}
Pascanu, R., Mikolov, T., and Bengio, Y.
\newblock On the difficulty of training recurrent neural networks.
\newblock In Dasgupta, S. and McAllester, D. (eds.), \emph{Proceedings of the 30th International Conference on Machine Learning}, volume~28 of \emph{Proceedings of Machine Learning Research}, pp.\  1310--1318, Atlanta, Georgia, USA, 17--19 Jun 2013. PMLR.
\newblock URL \url{https://proceedings.mlr.press/v28/pascanu13.html}.

\bibitem[Qin et~al.(2023)Qin, Yang, and Zhong]{qin2023HGRN}
Qin, Z., Yang, S., and Zhong, Y.
\newblock Hierarchically gated recurrent neural network for sequence modeling.
\newblock In Oh, A., Naumann, T., Globerson, A., Saenko, K., Hardt, M., and Levine, S. (eds.), \emph{Advances in Neural Information Processing Systems 36: Annual Conference on Neural Information Processing Systems 2023, NeurIPS 2023, New Orleans, LA, USA, December 10 - 16, 2023}, 2023.

\bibitem[Rahaman et~al.(2019)Rahaman, Baratin, Arpit, Draxler, Lin, Hamprecht, Bengio, and Courville]{rahaman2019spectralbias}
Rahaman, N., Baratin, A., Arpit, D., Draxler, F., Lin, M., Hamprecht, F., Bengio, Y., and Courville, A.
\newblock On the spectral bias of neural networks.
\newblock In Chaudhuri, K. and Salakhutdinov, R. (eds.), \emph{Proceedings of the 36th International Conference on Machine Learning}, volume~97 of \emph{Proceedings of Machine Learning Research}, pp.\  5301--5310. PMLR, 09--15 Jun 2019.
\newblock URL \url{https://proceedings.mlr.press/v97/rahaman19a.html}.

\bibitem[Sarrof et~al.(2024)Sarrof, Veitsman, and Hahn]{sarrof2024}
Sarrof, Y., Veitsman, Y., and Hahn, M.
\newblock The expressive capacity of state space models: A formal language perspective.
\newblock In \emph{The Thirty-eighth Annual Conference on Neural Information Processing Systems}, 2024.
\newblock URL \url{https://openreview.net/forum?id=eV5YIrJPdy}.

\bibitem[Schlag et~al.(2021)Schlag, Irie, and Schmidhuber]{schlag2021lineartransformerssecretlyfast}
Schlag, I., Irie, K., and Schmidhuber, J.
\newblock Linear transformers are secretly fast weight programmers, 2021.
\newblock URL \url{http:// proceedings.mlr.press/v139/schlag21a.html.}

\bibitem[Sch{\"o}ne et~al.(2024)Sch{\"o}ne, Sushma, Zhuge, Mayr, Subramoney, and Kappel]{schone2024scalable}
Sch{\"o}ne, M., Sushma, N.~M., Zhuge, J., Mayr, C., Subramoney, A., and Kappel, D.
\newblock Scalable event-by-event processing of neuromorphic sensory signals with deep state-space models.
\newblock In \emph{2024 International Conference on Neuromorphic Systems (ICONS)}, pp.\  124--131. IEEE, 2024.

\bibitem[Schutter(2000)]{schutter2000minimalssm}
Schutter, B.
\newblock Minimal state-space realization in linear system theory: an overview.
\newblock \emph{Journal of Computational and Applied Mathematics}, 121\penalty0 (1):\penalty0 331--354, 2000.
\newblock ISSN 0377-0427.
\newblock \doi{https://doi.org/10.1016/S0377-0427(00)00341-1}.
\newblock URL \url{https://www.sciencedirect.com/science/article/pii/S0377042700003411}.

\bibitem[Shi et~al.(2023)Shi, Chen, Misra, Scales, Dohan, Chi, Sch\"{a}rli, and Zhou]{shi2023}
Shi, F., Chen, X., Misra, K., Scales, N., Dohan, D., Chi, E., Sch\"{a}rli, N., and Zhou, D.
\newblock Large language models can be easily distracted by irrelevant context.
\newblock In \emph{Proceedings of the 40th International Conference on Machine Learning}, ICML'23. JMLR.org, 2023.

\bibitem[{SiMa Technologies, Inc.}(2024)]{simamlsoc_2024}
{SiMa Technologies, Inc.}
\newblock Machine learning system on chip (mlsoc) product brief.
\newblock Rev. May 20, 2024, 2024.
\newblock URL \url{https://sima.ai/wp-content/uploads/2024/06/SiMa_MLSoC_ProductBrief_5.20.24.pdf}.

\bibitem[Smith et~al.(2023)Smith, Warrington, and Linderman]{smith2023simplified}
Smith, J.~T., Warrington, A., and Linderman, S.
\newblock Simplified state space layers for sequence modeling.
\newblock In \emph{The Eleventh International Conference on Learning Representations}, 2023.
\newblock URL \url{https://openreview.net/forum?id=Ai8Hw3AXqks}.

\bibitem[Soydan et~al.(2024)Soydan, Zubić, Messikommer, Mishra, and Scaramuzza]{soydan2024s7}
Soydan, T., Zubić, N., Messikommer, N., Mishra, S., and Scaramuzza, D.
\newblock S7: Selective and simplified state space layers for sequence modeling.
\newblock \emph{arXiv}, 2024.
\newblock \doi{10.48550/arxiv.2410.03464}.

\bibitem[{STMicroelectronics NV}(2025)]{stm32n6_2025}
{STMicroelectronics NV}.
\newblock Stm32n6x5xx stm32n6x7xx mcu datasheet.
\newblock DS14791 Rev 9, 2025.
\newblock URL \url{https://www.st.com/resource/en/datasheet/stm32n657a0.pdf}.

\bibitem[Strogatz(2015)]{strogatz2015dynamics}
Strogatz, S.
\newblock \emph{Chemical chaos and attractor reconstruction}, chapter 12.4.
\newblock CRC Press, 2015.

\bibitem[{Syntiant Corp.}(2024)]{syntiantndp250_2024}
{Syntiant Corp.}
\newblock Neural decision processor ndp250 datasheet, 2024.
\newblock URL \url{https://www.syntiant.com/ndp250#data_sheet}.

\bibitem[Takens(1981)]{Takens1981}
Takens, F.
\newblock Detecting strange attractors in turbulence.
\newblock \emph{Dynamical Systems and Turbulence, Lecture Notes in Mathematics}, 898:\penalty0 366–381, 1981.

\bibitem[Tay et~al.(2021)Tay, Dehghani, Abnar, Shen, Bahri, Pham, Rao, Yang, Ruder, and Metzler]{tay2021lra}
Tay, Y., Dehghani, M., Abnar, S., Shen, Y., Bahri, D., Pham, P., Rao, J., Yang, L., Ruder, S., and Metzler, D.
\newblock Long range arena : {A} benchmark for efficient transformers.
\newblock In \emph{9th International Conference on Learning Representations, {ICLR} 2021, Virtual Event, Austria, May 3-7, 2021}. OpenReview.net, 2021.
\newblock URL \url{https://openreview.net/forum?id=qVyeW-grC2k}.

\bibitem[van~den Oord et~al.(2016)van~den Oord, Dieleman, Zen, Simonyan, Vinyals, Graves, Kalchbrenner, Senior, and Kavukcuoglu]{oord2016wavenet}
van~den Oord, A., Dieleman, S., Zen, H., Simonyan, K., Vinyals, O., Graves, A., Kalchbrenner, N., Senior, A., and Kavukcuoglu, K.
\newblock Wavenet: A generative model for raw audio, 2016.
\newblock URL \url{https://arxiv.org/abs/1609.03499}.

\bibitem[Vaswani et~al.(2017)Vaswani, Shazeer, Parmar, Uszkoreit, Jones, Gomez, Kaiser, and Polosukhin]{vaswani2017attention}
Vaswani, A., Shazeer, N., Parmar, N., Uszkoreit, J., Jones, L., Gomez, A.~N., Kaiser, L.~u., and Polosukhin, I.
\newblock Attention is all you need.
\newblock In Guyon, I., Luxburg, U.~V., Bengio, S., Wallach, H., Fergus, R., Vishwanathan, S., and Garnett, R. (eds.), \emph{Advances in Neural Information Processing Systems}, volume~30. Curran Associates, Inc., 2017.
\newblock URL \url{https://proceedings.neurips.cc/paper_files/paper/2017/file/3f5ee243547dee91fbd053c1c4a845aa-Paper.pdf}.

\bibitem[Waibel et~al.(1989)Waibel, Hanazawa, Hinton, Shikano, and Lang]{waibel1989phoneme}
Waibel, A., Hanazawa, T., Hinton, G., Shikano, K., and Lang, K.
\newblock Phoneme recognition using time-delay neural networks.
\newblock \emph{IEEE Transactions on Acoustics, Speech, and Signal Processing}, 37\penalty0 (3):\penalty0 328--339, 1989.
\newblock \doi{10.1109/29.21701}.

\bibitem[Wang et~al.(2025)Wang, Yu, Shen, Guo, Zhou, Zhao, Zhong, Ma, and Zhang]{wang2025spikcommander}
Wang, J., Yu, L., Shen, X., Guo, S., Zhou, C., Zhao, L., Zhong, Y., Ma, Z., and Zhang, Z.
\newblock Spikcommander: A high-performance spiking transformer with multi-view learning for efficient speech command recognition.
\newblock \emph{arXiv preprint arXiv:2511.07883}, 2025.

\bibitem[Warden(2018)]{warden2018speech}
Warden, P.
\newblock Speech {C}ommands: A dataset for limited-vocabulary speech recognition.
\newblock \emph{arXiv preprint arXiv:1804.03209}, 2018.

\bibitem[Wilcox et~al.(2018)Wilcox, Levy, Morita, and Futrell]{wilcox2018fillergap}
Wilcox, E., Levy, R., Morita, T., and Futrell, R.
\newblock What do {RNN} language models learn about filler{--}gap dependencies?
\newblock In Linzen, T., Chrupa{\l}a, G., and Alishahi, A. (eds.), \emph{Proceedings of the 2018 {EMNLP} Workshop {B}lackbox{NLP}: Analyzing and Interpreting Neural Networks for {NLP}}, pp.\  211--221, Brussels, Belgium, November 2018. Association for Computational Linguistics.
\newblock \doi{10.18653/v1/W18-5423}.
\newblock URL \url{https://aclanthology.org/W18-5423/}.

\bibitem[Yu et~al.(2025)Yu, Lyu, Lim, Mahoney, and Erichson]{yu2025tuning}
Yu, A., Lyu, D., Lim, S.~H., Mahoney, M.~W., and Erichson, N.~B.
\newblock Tuning frequency bias of state space models.
\newblock In \emph{The Thirteenth International Conference on Learning Representations}, 2025.
\newblock URL \url{https://openreview.net/forum?id=wkHcXDv7cv}.

\bibitem[Zhao et~al.(2025)Zhao, Torchet, Payvand, Kriener, and Moro]{zhao2025quantizingsmallscalestatespacemodels}
Zhao, L., Torchet, T., Payvand, M., Kriener, L., and Moro, F.
\newblock Quantizing small-scale state-space models for edge ai, 2025.
\newblock URL \url{https://arxiv.org/abs/2506.12480}.

\bibitem[Zucchet(2024)]{Zucchet2024minimalLRU}
Zucchet, N.
\newblock {minimal-LRU}: Unofficial implementation of the linear recurrent unit (lru, orvieto {et al.} 2023).
\newblock \url{https://github.com/NicolasZucchet/minimal‑LRU}, 2024.
\newblock MIT License. Accessed: 2025‑09‑22.

\end{thebibliography}
\bibliographystyle{icml2026/icml2026}

\newpage
\appendix
\onecolumn
\newpage
\appendix
\section*{\Large\textbf{Appendix}}

\counterwithin*{figure}{part}
\counterwithin*{table}{part}
\stepcounter{part}

\renewcommand{\thefigure}{A\arabic{figure}}
\setcounter{figure}{0}
\renewcommand{\thetable}{A\arabic{table}}
\setcounter{table}{0}

\section{Model specification details}

\subsection{Learnable DCLS kernels}\label{sec:appendix_dcls_kernel}

\gls{dcls} was first introduced by \citet{hassani2023dilated-dcls} and enables the spacings between different elements of a convolution kernel to be trained.
In a temporal setting, this is equivalent to learning delays.
\gls{dcls} requires a specific kernel construction parameterized by both a set of weights, $\Omega_d=\{w_0, w_1, ..., w_{K-1}\ | \text{ } w_i \in \mathbb{R}\}$, and a set of corresponding positions, $\Psi_d=\{p_0, p_1, ..., p_{K-1}\ | \text{ }  p_i \in \mathbb{R},\ p_i\leq p_{max}\}$, for every channel $d\leq D$ of the input.
These sets have $K$, the kernel count, elements each, and a maximum possible position (or in time, a maximum delay) $p_{max} = \Gamma$, called the kernel length.
In time, each position is relative to the current timestep, making it equivalent to a transmission delay.

To construct the discrete kernel $k_d$ for one of the input channels, each real-valued position $p_i$ is mapped to the discrete kernel indices $n \leq \Gamma$ via a differentiable interpolation function, $c$.
This enables both the position and weight of the kernel elements to be learned with gradient descent.
The kernel $k_d \in \mathbb{R}^\Gamma$ for a single channel then becomes:
\begin{align}
\ k_d[n] &= \sum_{i=0}^{K-1} w_i \cdot c[n, p_i], \quad \text{with} \quad k_d = \left[k_d[0], k_d[1], ...,k_d[\Gamma - 1]\right]
\label{eq:kernel_dcls_appendix}
\end{align}

As in \citet{hassani2023dcls2}, we use a Gaussian with fixed width $v$ as our interpolation function:
\begin{align}
c[n,p_i] &= \exp{\left[\frac{-1}{2}\left(\frac{n-p_i}{v}\right)^2\right]} \label{eq:interpolation_func}
\end{align}

The \gls{dcls} convolution's output $x_{d,t}$ for each channel $d$ at timestep $t$ is computed as follows:
\begin{align}
x_{d,t}= (u_{d} * k_d)[t] &= \sum_{n=0}^{\Gamma-1} k_d[n] \odot u_d[t-n] 
\label{eq:convolution_sum}
\end{align}

\subsection{Memory footprint}\label{sec:appendix_memoryfootprint}

\paragraph{Scaling}

The memory requirements of mGRADE during inference consist of (1) the model parameters and (2) the activation buffer required for sequential processing.
Since these come with distinct usage patterns and on-chip implementations in embedded systems, we treat them separately as they might necessitate employing different memory technologies.

In terms of the number of parameters, the temporal convolution component of each \gls{mgrade} layer scales with the number of channels (or model dimensionality) $D$ and the number of kernel elements $K$, leading to $\mathcal{O}(D\times K)$ complexity.
In practice, $K$ is significantly lower than $D$ or $\Gamma$.
The gated recurrent component scales with the model dimensionality and the hidden state dimensionality $H$, with $\mathcal{O}(D\times H)$, just like the \gls{mlp} after the gated recurrence.
Assuming that $H$ is proportional to $D$, the overall parameter memory scales as $\mathcal{O}(D^2)$.

In terms of activation buffer, the temporal convolution with learnable delays requires storing input activations for at most $\Gamma$ timesteps (\cref{eq:convolution_sum}).
More precisely, the activation buffer size scales linearly with the model dimensionality $D$, the number of layers $L$, and the kernel length $\Gamma$, yielding a memory complexity of $\mathcal{O}(D\times L \times \Gamma)$.
The gated recurrent component only requires a single hidden state vector per layer (similar to the \gls{mlp}), so assuming the hidden state dimensionality $H$ is proportional to $D$, the overall activation buffer complexity is thus dominated by the temporal convolution.

\paragraph{Calculation}
 
This section presents complete derivations for the memory requirements of each network component during inference. 
We begin by examining the parameter memory footprint, followed by an analysis of buffer memory usage. 

The notation $\text{MemParam}^{\text{component}}$ represents the memory consumption for each component (encoder, convolution, recurrent, MLP, decoder), with subelements categorized as Weights, Bias, and Positions.
Note that the hidden state dimensionality of the recurrent component, $H$, is the model dimensionality scaled by an expansion factor denoted as $e$.

\begin{align}
    \text{MemParam}^{\text{Enc}} &= \text{Weights}^{\text{Encoder}} + \text{Bias}^{\text{Encoder}} = D_{\text{in}} \times D \\
    \text{MemParam}^{\text{Conv}} &= 
    \begin{cases} 
        \text{Weights}^{\text{Conv}} + \text{Positions}^{\text{Conv}} = 2(K \times D) & \text{for mGRADE,} \\
        \text{Weights}^{\text{Conv}} = K \times D & \text{else}. \\
    \end{cases}\\
    \text{MemParam}^{\text{Rec}} &= \text{Weights}_z + \text{Weights}_{\tilde{h}} + \text{Bias}_z + \text{Bias}_{\tilde{h}}  \\
     & = H \times D + H\times D + H + H = 2eD^2 + 2D \\
    \text{MemParam}^{\text{MLP}} &= \text{Weights}^{\text{MLP}} + \text{Bias}^{MLP} \\ &= 2D \times H + 2D + D \times 2D + D = 4eD^2 + 3D\\
    \text{MemParam}^{\text{Norm}} &= 2D \\
    \text{MemParam}^{\text{Dec}} &= \text{Weights}^{\text{Dec}}  = D \times D_{\text{out}} \\
    \text{MemParam}^{\text{network}} &= \text{MemParam}^{\text{Enc}} + \text{ MemParam}^{\text{Dec}} \notag \\ 
    & \quad + L \times (\text{MemParam}^{\text{Conv}} + \text{MemParam}^{\text{Rec}} \\ & \quad \quad \quad \quad + \text{MemParam}^{\text{MLP}} + \text{ MemParam}^{\text{Norm}}) \notag   
\end{align}

Overall, this yields a memory complexity of $\mathcal{O}(L\times e \times D^2+L\times K\times D)$ for parameter storage, with the model dimensionality $D$ dominating.

As explained in \cref{sec:modelspec}, any single convolution layer requires the storage of past inputs to produce an output. The past inputs are stored in an activation buffer whose size scales linearly with the kernel length $\Gamma$. Thus, for an entire network, the buffer requirements for the convolutional components are determined by $D$, $L$, and the kernel length $\Gamma$.
In addition, recurrent models must maintain hidden state activations with dimensionality $H=eD$, further contributing to the required activation buffer.

\begin{align}
    \text{MemBuffer}^{\text{Conv}} &= L \times D \times \Gamma \quad \text{with} \ \
    \Gamma   = 
    \begin{cases} 
        d \times(K-1) & \text{TCN with dilation rate } d, \label{eq:buffer_size} \\
        p_{max} & \text{ mGRADE}.
    \end{cases} \\   
    \text{MemBuffer}^{\text{Rec}} &= L \times H = L \times eD 
\end{align}

For purely recurrent models, the activation buffer size is determined entirely by $\text{MemBuffer}^{\text{Rec}}$ and, for purely convolutional models, by $\text{MemBuffer}^{\text{Conv}}$.
\gls{mgrade}'s final required memory is just the sum of the two:

\begin{align}
\text{MemBuffer}^{\text{network}} &= \text{MemBuffer}^{\text{Conv}} + \text{MemBuffer}^{\text{Rec}}  \\ &= L \times D \times \Gamma + L \times eD \\ &= L \times D \times (\Gamma + e)
\end{align}

Thus, \gls{mgrade}'s activation buffer does not scale with sequence length but instead with layer number, model dimensionality, and kernel length.

Note that we use a default precision of 32 bits for all of our activations and parameters.
To get the total number of bytes of memory required for either parameters or activation buffers, we multiply $\text{MemParam}^{\text{network}}$ or $\text{MemBuffer}^{\text{network}}$ by the bit precision and divide by 8.

\paragraph{Why not parameterize the convolution as a \gls{ssm}?}

Given the equivalence between convolutions and \glspl{ssm} elaborated in \cite{gu2022efficiently}, it is worth clarifying why we parameterize the learnable spacings with an explicit convolutional representation instead of a recurrent \gls{ssm}. 
During training, when the full input sequence is available, the convolution can be efficiently computed via the FFT formulation. 
During inference, a fully instantiated convolution remains more memory-efficient.
Representing an arbitrary convolution with an SSM during inference requires the same number of parameters and buffered activations as parameterizing the convolution directly. 
This is because, in the worst case, an impulse response whose Hankel matrix has rank $\Gamma$ (equal to the kernel length) requires an SSM of state dimension $\Gamma$ \citep{schutter2000minimalssm}.

Thus, while it is theoretically possible to reformulate our convolution with learnable spacings as a fully recurrent SSM, doing so would not provide any memory advantage during inference or training. 
In fact, since we use learnable spacings between kernel elements, a naive SSM reformulation would require a transition matrix of size $\Gamma \times \Gamma$ even though only $K$ kernel weights are actually needed.
For a discussion on the difference between \textit{trainable} parameters and actual \textit{instantiated} parameters during recurrent inference, see \cref{sec:appendix_embedded_memory}.

\section{Theoretical Capabilities of mGRADE}\label{sec:appendix}

\subsection{Dynamics Reconstruction Task}\label{sec:appendix_lorenz_task}

\subsubsection{Proof for mGRADE as a Delay Embedding}\label{sec:appendix_lorenz_proof}

Here we detail the full proof of \cref{theorem:mgrade_delay_embedding}, demonstrating how mGRADE can learn to express a delay embedding.

{\renewcommand{\thetheorem}{\ref{theorem:mgrade_delay_embedding}}
\begin{theorem}[Reconstructing Dynamics through mGRADE’s Delay Embedding] Take a discrete-time dynamical system \(f : \mathcal{M} \to \mathcal{M}\) over a compact manifold \(M\) of dimension \(d\), mapping \(\mathbf{u}_t \in \mathbb{R}^d\) to \(\mathbf{u}_{t+1} \in \mathbb{R}^d\) according to some differentiable and deterministic rule. 
Let \(y : \mathcal{M} \to \mathbb{R}\) be a generic twice-differentiable observation function that deterministically maps any \(\mathbf{u}_t\)  on \(\mathcal{M}\) to a single observable \(y_t \in \mathbb{R}\) at time $t$.
Let \(m \geq 2d + 1\). 
Let a single \gls{mgrade} layer be preceded by a linear projection mapping the input \(y_t\) to \(\mathbf{\hat{y}}_t \in \mathbb{R}^m\) (the encoder) such that \(D = H = m\). 
Assume that \(v \to 0\), with $v$ being the width of mGRADE's interpolation function \(c\).

Then, the hidden state \(\mathbf{h}_t\in \mathbb{R}^m\) of $m$-dimensional mGRADE layer can learn to express a delay embedding in the sense of \citet{Takens1981}, and accordingly can learn to fully reconstruct the original dynamics of \(\mathbf{u}_t \in \mathbb{R}^d\), differing only by a smooth, invertible change of coordinates (a \emph{diffeomorphism}). 
\end{theorem}
\addtocounter{theorem}{-1}
}

\begin{proof}
Let \( f : \mathcal{M} \to \mathcal{M} \) be a discrete-time dynamical system over a compact \( d \)-dimensional manifold \( \mathcal{M} \), and let \( y : \mathcal{M} \to \mathbb{R} \) be a generic twice-differentiable observation function, as defined in Theorem~3. Assume \( m \geq 2d + 1 \). Let the encoder preceding the mGRADE layer map \( y_t \) to a \( m \)-dimensional vector \( \mathbf{\hat{y}}_t = [y_t, y_t, \dots, y_t] \in \mathbb{R}^m \) by replication.

Construct a single-layer mGRADE model as follows:\\

1. Each of the \( m \) channels of mGRADE's temporal convolution kernel is learned to a unique, fixed delay. Specifically, for the \( q \)-th channel, set the kernel's delay to $p_q=(q - 1)\tau$, for some fixed delay step \( \tau \in \mathbb{N} \), letting the interpolation function width \( v \to 0 \) and learning unitary weights. This means that the output of the \( q \)-th channel is equal to \( y_{t - (q - 1)\tau} \).\\

2. Let the update gate parameter matrix \( \mathbf{W}_z \) be such that \( \mathbf{h}_t = \tilde{\mathbf{h}}_t \) at every timestep (by setting \( \mathbf{W}_z \to -\infty \)). Let the hidden projection matrix \( \mathbf{W}_h \in \mathbb{R}^{m \times m} \) be the identity.\\

Under this construction, the hidden state at each time \( t \) becomes:
\[
\mathbf{h}_t = \tilde{\mathbf{h}}_t = [y_t, y_{t - \tau}, y_{t - 2\tau}, \dots, y_{t - (m - 1)\tau}],
\]
which corresponds exactly to the delay vector used by Takens as the delay embedding \citep{Takens1981}.

Because \( m \geq 2d + 1 \) and \( y \) is twice differentiable, Takens' Embedding Theorem guarantees that the map defined by mGRADE, \( \mathbf{u}_t \mapsto \mathbf{h}_t \), is generically an embedding of the original dynamic manifold \( \mathcal{M}\in \mathbb{R}^d\) into \( \mathbb{R}^m \) \citep{Takens1981}. 
That is, the mGRADE hidden state \( \mathbf{h}_t \) reconstructs the underlying system dynamics $\mathbf{u}_t$ up to a smooth, invertible change of coordinates (a diffeomorphism).
\end{proof}

\subsubsection{Training and Evaluation Details}\label{sec:appendix_lorenz_details}

\paragraph{Task Description}
We train mGRADE to perform autoregressive next-step prediction on the first dimension of 2000 randomly generated trajectories of 500 timesteps from the 3-dimensional chaotic
Lorenz attractor, using the non-standardized Lorenz flow from the dysts package \citep{gilpin2024dysts}. 
We add 5\% Gaussian time-independent observational noise to every trajectory. 
We test with randomly generated sequences of the same length as the training sequences, using either the first dimension or the 2 unobserved dimensions of the attractor (to evaluate generalization capabilities).

\paragraph{Hyperparameters}
We compare a single-layer \gls{mgrade} to a 2-layer \gls{mingru} with the hyperparameters outlined in \cref{tab:lorenz-hp}.
We choose the 2-layer \gls{mingru} as a comparison for fairness, given that a single-layer \gls{mingru} does not provide its update gate or candidate activations with any temporal information (they only depend on the current input). 
Neither model uses the \gls{mlp} or layer normalization.
For the learning rate, we use AdamW \citep{loshchilov2018decoupled-adamw} over all weights, and standard Adam \citep{kingma2017adam} for the biases, normalization layers, and \gls{dcls} positions.
We use a cosine annealing learning rate scheduler without warmup.
For the $\mathbf{z}$ gate biases, we use an "open" initialization for the (where the gate bias is set such that $\sigma (\mathbf{W}_z\mathbf{x}_t)\approx1$) for all models, while using the traditional zero initialization for all other biases.
For weights, we initialize with a truncated normal distribution, with a standard deviation set to $\sqrt{1/\text{fan\_in}}$. 
Other hyperparameters are outlined in \cref{tab:lorenz-hp}.
Reported results are averaged over 5 random seeds.

\begin{table}[h]
\caption{Hyperparameters used for the Dynamics Reconstruction Task. L: number of layers. D/H: model dimensionality and hidden state size per layer (no hidden state expansion for any of these models). K: kernel count. $\Gamma$: kernel length. LR: learning rate. B: batch size. Epochs: max epochs set for the run. WD: weight decay.}
\label{tab:lorenz-hp}
\vspace{0.2cm}
\begin{center}
\renewcommand{\arraystretch}{1.2}
\resizebox{0.7\textwidth}{!}{
\begin{tabular}{lccccccccc}
\toprule
Parameter   & L     & D/H     & K     & $\Gamma$    & LR    & B     & Epochs    & WD \\
\midrule
mGRADE     & 1     & 10    & 1     & 32        & 0.004 & 32    & 200       & 0.0 \\
minGRU        & 2     & 10    & --     & --         & 0.004 & 32    & 200       & 0.0 \\
\bottomrule
\end{tabular}
}
\end{center}
\end{table}

\paragraph{Loss Metric}\label{sec:appendix_mase} Following \citep{Ostrow2024}, we use \gls{mase} as the loss metric.
\gls{mase} compares the mean absolute
error made by the model across a sequence with the mean absolute error that would have been incurred had the default prediction been that the next state is equal to the current state at every timestep \citep{hyndmanMASE2006}. 
This can be expressed as follows,
\[
\text{MASE} = \frac{  \frac{1}{T} \sum_{i=1}^{T}\left| y_t -\hat{y}_t \right| }{\frac{1}{T-1} \sum_{i=2}^{T}\left| y_t -y_{t-1} \right|},
\]
where $\hat{y}$ is the model's prediction, $T$ is the sequence length, and $y_t$ is what is being predicted.
Notably, a \gls{mase} $>$ 1 means that naive forecasting (using the current $y_{t-1}$ to predict the next state $y_t$) works better than the forecasting model.
Thus, a model only offers meaningful predictive power if it can achieve a \gls{mase} $<$ 1.

\paragraph{Manifold Similarity Metric} The Nearest neighbor Overlap metric is used to evaluate how smoothly mGRADE maps the original Lorenz attractor manifold to the hidden state $h_t$.
It is calculated following \citet{Ostrow2024}.
For every point $i$ on every trajectory in the test set, we find the k-nearest neighbors, i.e., the set of time points $Q_{u}(i)$ where the trajectory gets closest to the selected point in the original 3 dimensions of the Lorenz system.
Then we map the selected point to the model's hidden state and evaluate how many of the k-nearest neighbors in the hidden state ($Q_{h}(i)$) are in fact the same k-nearest neighbors in the original system mapped into the hidden state.
The number of overlapping neighbors relative to the total number of neighbors evaluated then yields the Nearest neighbor Overlap metric when applied to every point $i$ on every trajectory in the dataset:
\[
\text{Overlap}(\textit{Original Manifold}, \textit{Hidden State}) = \frac{1}{n} \sum_{i=1}^{n} \frac{ \left| Q_u(i) \cap Q_h(i) \right| }{k}
\]
where $k$ is the number of neighbors evaluated (20 in this work), and $n$ is the total number of datapoints available across all trajectories.

\paragraph{Results}
The \gls{mase} over epochs for next-step prediction of the dimension that the models are trained on is shown for the \gls{mgrade} and \gls{mingru} models in \cref{fig:SI_lorentz_over_epoch}A.
\cref{fig:SI_lorentz_over_epoch}B shows the average \gls{mase} when performing next-step prediction over the 2 dimensions that the models do not observe during training.
\gls{mgrade} outperforms the purely recurrent models in both cases.
Notably, the \gls{mingru} model does not generalize well to unobserved dimensions resulting in a MASE $>$ 1.
\gls{mgrade} also achieves a higher Nearest neighbor Overlap (in \%) over epochs for both models in \cref{fig:SI_lorentz_over_epoch}C.

\begin{figure*}[ht!]
    \centering \includegraphics[width=\textwidth]{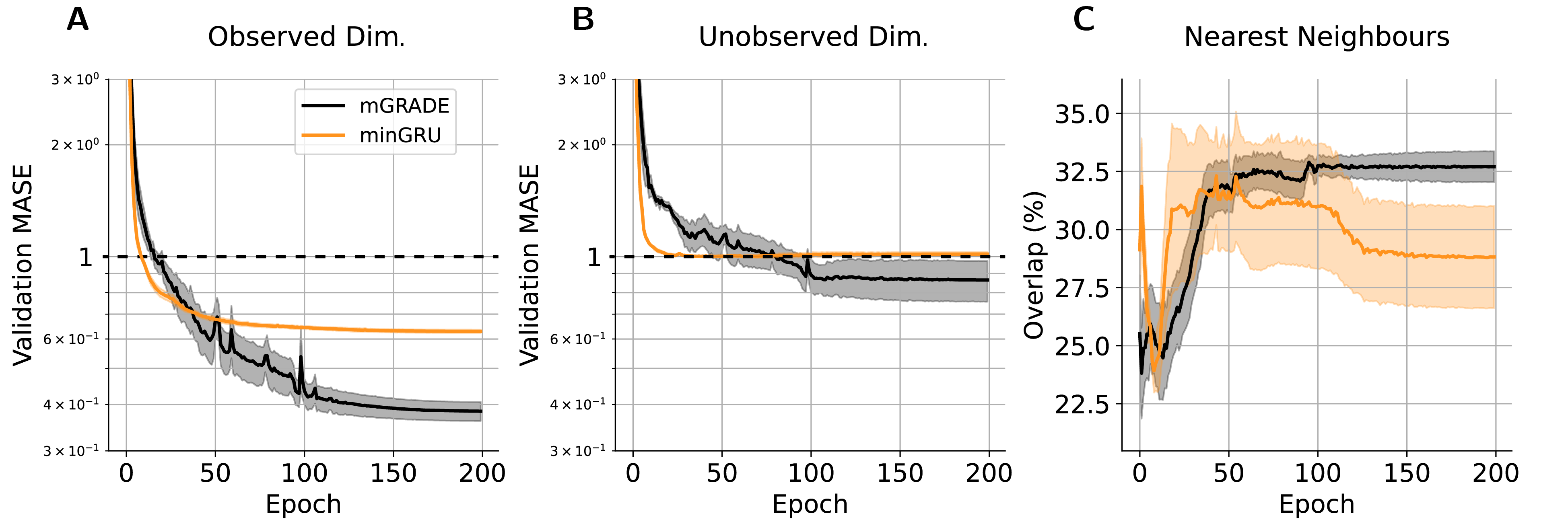}    
    \caption{\textbf{mGRADE achieves lower training loss and higher manifold overlap, even on unobserved dimensions of dynamics.}
    \textbf{A)} Validation MASE loss (mean ± stde) over training epochs on predicting the observed first dimension. \textbf{B)} Out-of-Distribution MASE loss (mean ± stde) over training epochs on predicting the dimensions unobserved during training (2 and 3 of the original attractor). Dotted black line marks MASE $=$ 1 which indicates no predictive power. \textbf{C)} Nearest neighbor Overlap (mean ± stde) of 20 nearest neighbors to each trajectory point between original state space and hidden state space over training epochs.}
    \label{fig:SI_lorentz_over_epoch}
\end{figure*}

\subsection{Hidden State Principal Components}\label{sec:appendix_lorenz}

To provide an additional visualization aid on the similarity of the various models' hidden state representations of the Lorenz attractor dynamics in \cref{sec:attractor}, we plot each of the top 3 Principal Components (PC) against each other in \cref{fig:SI_lorentz_2d_shapes} together with their corresponding explained variance (EV). Compare to \cref{fig:F2}B, C for the corresponding 3D plots.

\begin{figure*}[ht!]
    \centering \includegraphics[width=\textwidth]{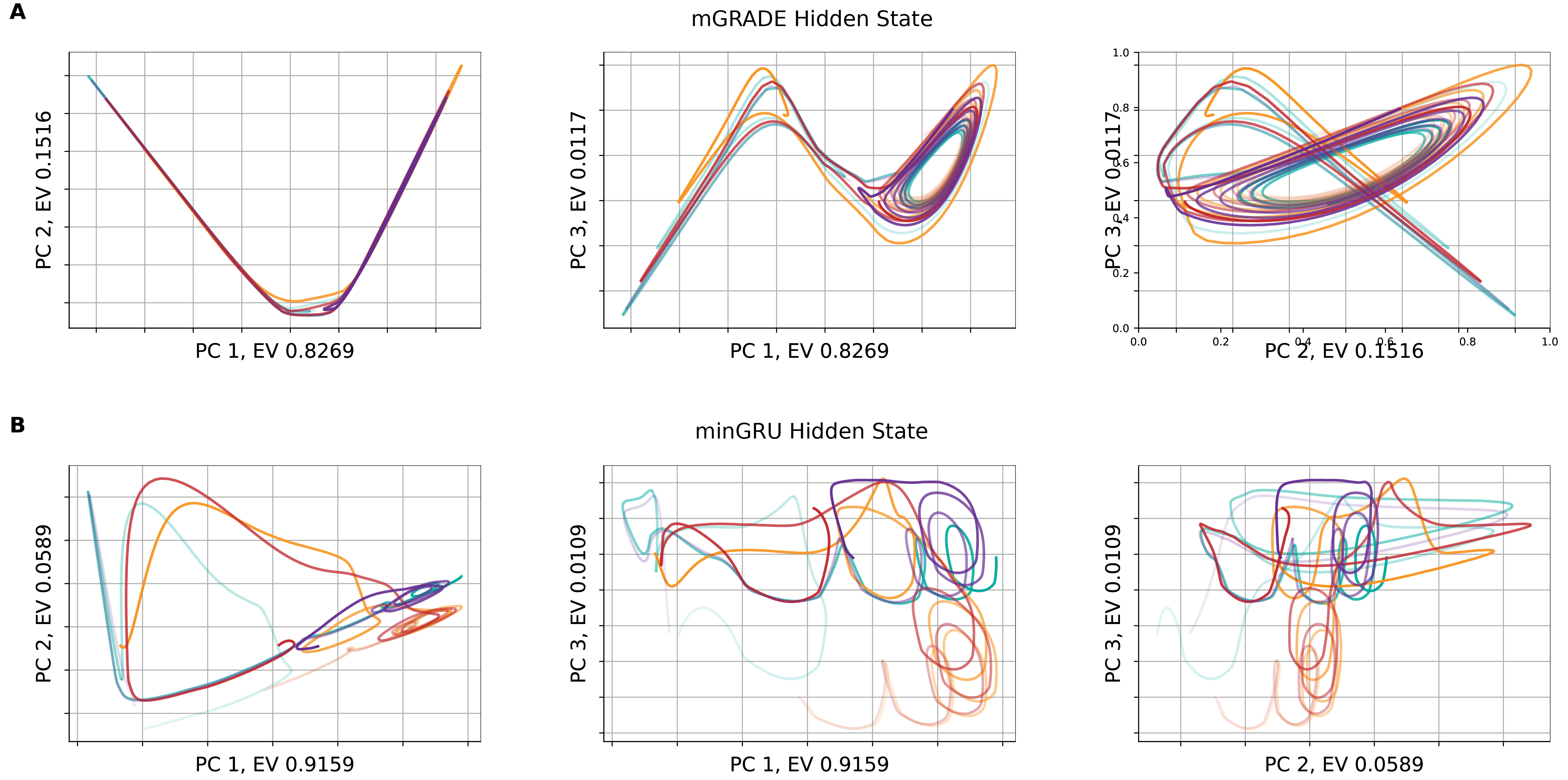}    
    \caption{\textbf{\gls{mgrade} reconstructs original dynamics in hidden state principal components.}
    \textbf{A)} Three top PCs of single-layer \gls{mgrade} plotted against each other with corresponding explained variance (EV). (see \cref{fig:F2}B for 3D plot).
    \textbf{B)} Three top PCs of 2-layer minGRU plotted against each other with corresponding explained variance (EV). (see \cref{fig:F2}C for 3D plot).}
    \label{fig:SI_lorentz_2d_shapes}
\end{figure*}

\subsection{High-frequency Pattern Recognition Task}\label{sec:appendix_highfreqtask}

\paragraph{Task Description} 
We train on sequences with a total length of $165$ timesteps, randomly generated at every training step.
Each sequence contains $5$ out of $16$ possible features.
Each feature is $l=32$ timesteps long and contains input symbols selected from an alphabet of size $n=16$.
After each marker symbol $\texttt{m}$ in the sequence, the goal is to output the associated class label $S_i$ of the preceding feature indexed by $i$.
The \textit{feature frequency} is adapted by changing how many times $r$ an input symbol is repeated within the feature before switching to a different input symbol (\Cref{fig:highfrequencytask}).
We normalize the feature frequency to the sampling rate so that it is equal to $1/r$.
Thus, a feature frequency of $1.0$ means that the input symbols in the pattern change every timestep, while a feature frequency of $0.5$ means that each input symbol is repeated 2 times before switching.
All models are trained on sequences that contain patterns with increasing normalized feature frequencies ($0.07$, $0.17$, $0.33$, $1.00$).

\begin{figure}[h!]
    \centering
    \includegraphics[width=0.65\textwidth]{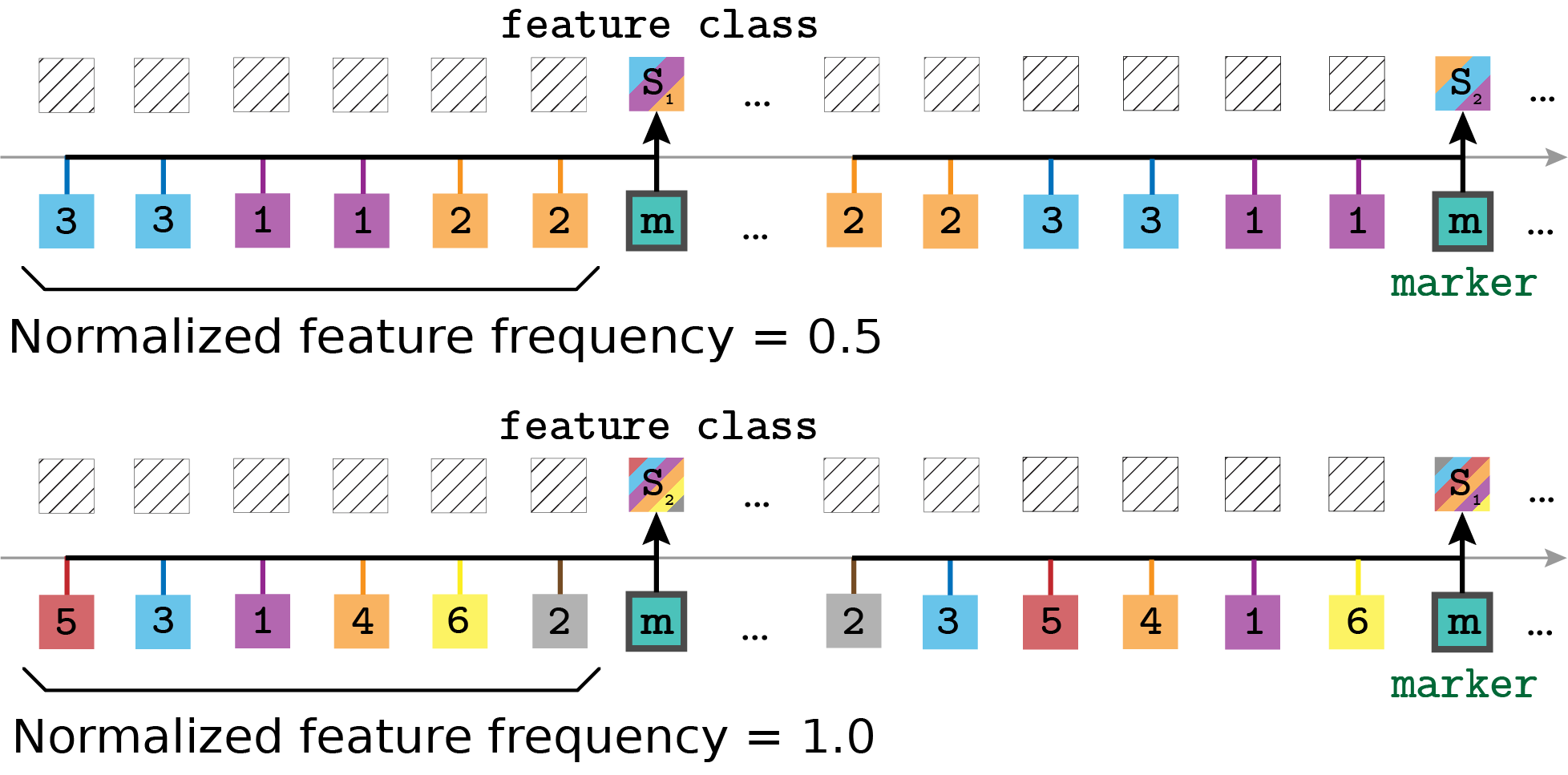}
    \caption{\textbf{High-frequency pattern recognition task.} The task requires classifying several features within a sequence, with each feature consisting of randomly ordered input symbols with different feature frequencies (top: low frequency, bottom: high frequency). After being presented a feature, the model should output the associated class. The number of times that the input symbol changes within a feature is the inverse of the feature frequency.}
    \label{fig:highfrequencytask}
\end{figure}

\paragraph{Hyperparameters}
We compare single- and 2-layer \glspl{mgrade} with single- and 2-layer \glspl{mingru}. 
The 2-layer models were chosen as comparisons to demonstrate that \gls{mgrade} can pass high-frequency feature information through multiple gated layers, while \gls{mingru} cannot.
None of the models use an \gls{mlp} or layer normalization between layers.
For the optimization, we use AdamW \citep{loshchilov2018decoupled-adamw} over weights, and standard Adam \citep{kingma2017adam} for the biases, normalization layers, and \gls{dcls} positions.
We use a cosine annealing learning rate scheduler without warmup.
For biases, we use a standard zero initialization, and for weights, we initialize with a truncated normal distribution, with a standard deviation set to $\sqrt{1/\text{fan\_in}}$. 
Other hyperparameters are outlined in \cref{tab:highfreq-hp}. Reported results are averaged over 5 random seeds.

\begin{table}[h]
\caption{Hyperparameters for the High Frequency Recognition Task. L: number of layers. D/H: model dimensionality and hidden state size per layer (no hidden state expansion for any of these models). K: kernel count. $\Gamma$: kernel length. LR: learning rate. B: batch size. Training Steps: number of batches presented during training. WD: weight decay.}
\label{tab:highfreq-hp}
\vspace{0.2cm}
\begin{center}
\renewcommand{\arraystretch}{1.2}
\resizebox{0.7\textwidth}{!}{
\begin{tabular}{lcccccccc}
\toprule
Parameter   & L     & D/H     & K     & $\Gamma$    & LR    & B     & Training Steps    & WD \\
\midrule
mGRADE     & 1     & 16    & 8     & 16        & 0.004 & 64    & 200       & 0.1 \\
mGRADE        & 2     & 16    & 8     & 16         & 0.004 & 64    & 200       & 0.1 \\
minGRU        & 1     & 20    & --     & --        & 0.004 & 64    & 200       & 0.1 \\
minGRU        & 2     & 16    & --     & --        & 0.004 & 64    & 200       & 0.1 \\

\bottomrule
\end{tabular}
}
\end{center}
\end{table}

\paragraph{Loss Metric}
Cross-entropy loss is used over the predicted feature classes, which are set for each timestep with a marker $\texttt{m}$ in the input.
Each model is tested on randomly generated test sequences using the same feature frequency as during training.

\paragraph{Results}
\cref{fig:SI_high_freq}A shows the classification validation accuracy over training steps for the models that were trained on a feature frequency of 0.33.
With increasing feature frequency, the test accuracy of the \gls{mgrade} models remains roughly constant (or even slightly increases) while the 2-layer \gls{mingru} collapses when trying to classify higher-frequency features (\cref{fig:SI_high_freq}B).
Notably, adding a second layer to the \gls{mingru} decreases its classification accuracy, implying that high-frequency information is lost over successive recurrent layers as suggested by previous literature on both gated \glspl{rnn} and \glspl{ssm} \citep{rahaman2019spectralbias, yu2025tuning}.
\gls{mgrade} on the other hand maintains performance even with 2 layers, indicating that the temporal convolution indeed preserves high-frequency information even through the gated recurrent unit.

\begin{figure*}[ht!]
    \centering \includegraphics[width=0.7\textwidth]{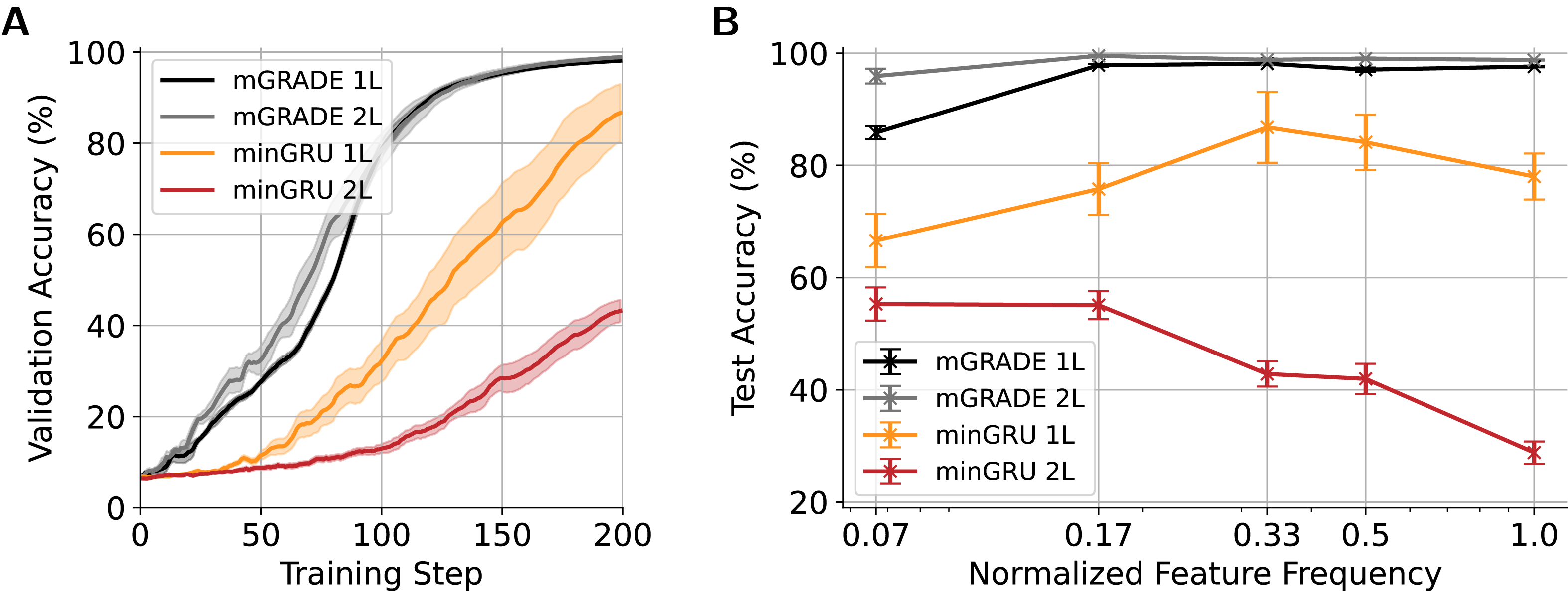}    
    \caption{\textbf{mGRADE recognizes features with high frequencies better than pure gated RNNs.}
    \textbf{A)} Validation accuracy (mean ± stde) for high-frequency pattern recognition task with a feature frequency of 0.33. \textbf{B)} Final test accuracy (mean ± stde) after training on different feature frequencies. 1L stands for single-layer and 2L for 2-layer models.}
    \label{fig:SI_high_freq}
\end{figure*}

\subsection{Flip-Flop Predictive Modeling Task}

\subsubsection{Proofs for Flip-Flop Predictive Modeling Capabilities}\label{sec:appendix_flipflop_proof}

Here we detail the full proofs associated with \cref{sec:flipflop}. For convenience, we start by reiterating the definition of a Flip-Flop language.

\textbf{Definition 1} (Flip-Flop Language).
    Let the alphabet be \( \Sigma = \texttt{\{w, r, i, 0, 1\}} \), where \texttt{\( w \)}, \texttt{\( r \)}, and \texttt{\( i \)} represent instruction symbols ("write", "read", "ignore"), and \texttt{\( 0 \), \( 1 \)} represent value symbols. 
    The \textit{Flip-Flop languages} \( L_{ff} \) consist of sets of strings over \( \Sigma \) that alternate between instructions and values (e.g., $\texttt{w 0 r 0 i 1}$), satisfying the condition that after every $\texttt{r}$ symbol, the subsequent value equals the value symbol following the most recent $\texttt{w}$. All valid strings begin with \(\texttt{w}\).

\textbf{Definition 2} (Predictive Modeling).
    For a string \( s \in L_{ff} \) and a prefix \( s[1:t] \) ending at position \( t \) with symbol \( a_t \), predictive modeling requires outputting the \textbf{prediction set} \( P_i \subseteq \Sigma \) of valid next symbols \( a_{t+1} \) such that \( s[1:t] \, a_{t+1} \) remains a prefix of some string in \( L_{ff} \). 
    We say that a model \textbf{predictively models} \( L_{ff} \) iff its output at a single timestep \(t\) encodes all the information needed such that a linear classifier can return the next prediction set with \(100\% \) accuracy.

\textbf{mGRADE}

{
\renewcommand{\thetheorem}{\ref{theorem:mgrade}}
\begin{theorem}[Flip-Flop Modeling with mGRADE]
  A single-layer mGRADE with at least 2 delays in the convolutional component can predictively model a Flip-Flop language, $L_{ff}$, at arbitrary length.
\end{theorem}
\addtocounter{theorem}{-1}
}

\begin{proof}
    We prove the above theorem by construction.
    For notation, we use to $\mathbf{c}=[\mathbf{a}, \mathbf{b}]$ to denote stacking column vectors $\mathbf{a}\in\mathbb{R}^A$, $\mathbf{b}\in\mathbb{R}^B$ into another column vector $\mathbf{c}\in\mathbb{R}^{A+B}$.
    In addition, we assume all symbols and possible prediction sets are one-hot encoded in the input and output, respectively.

    Consider a single-layer mGRADE with an input sequence of length $T$, \( \mathbf{u}_{1:T} \in \mathbb{R}^{|\Sigma|\times T}\), where \( \mathbf{u}_t \) is the one-hot encoded vector of the symbol at position \( t \) in the string, and a model dimensionality of $D = 2|\Sigma|$.
    To match the model dimensionality, pass $\mathbf{u}_{1:T}$ at every timestep through a simple linear projection to match the model dimensionality (as described in the model architecture).
    Set this linear projection to simply stack 2 copies of the input $\mathbf{u}_{1:T}$ in a single vector $\hat{\mathbf{u}}_{1:T}=[\mathbf{u}_{1:T},\mathbf{u}_{1:T}]$.
    Set the convolution component to have 2 different delays to the 2 copies of the input in $\hat{\mathbf{u}}_{1:T}$ at times $t$ and $t-1$ (delay of $0$ and $1$ respectively) with weights of 1.
    Set the interpolation function width $v$ narrow enough such that the convolution kernel elements are zero everywhere besides at $t$ and $t-1$.
    Given this kernel construction, the output of the temporal convolution at any timestep $\mathbf{x}_t\in \mathbb{R}^{2|\Sigma|}$ will depend only on the current input $\mathbf{u}_t, \mathbf{u}_{t-1}\in \mathbb{R}^{|\Sigma|}$.
    Specifically, given the 2 different delays to the input copies, $\mathbf{x}_t=[\mathbf{u}_t, \mathbf{u}_{t-1}]$.\\
    
    Set the hidden state size $H=D=2|\Sigma|$ such that \( \mathbf{h}_t \in \mathbb{R}^{2|\Sigma|} \).
    Split the hidden state into 2 components
    $\mathbf{h}_t = [\mathbf{h}_t^{\text{stored}}, \mathbf{h}_t^{\text{current}}]$
    where \( \mathbf{h}_t^{\text{stored}} \in \mathbb{R}^{|\Sigma|} \) is parameterized such that it stores the value following the most recent \( w \), and \( \mathbf{h}_t^{\text{current}} \in \mathbb{R}^{|\Sigma|} \) passes on the current input. 
    Define \( \mathbf{z}_t = [\mathbf{z}_t^{\text{stored}}, \mathbf{z}_t^{\text{current}}] \), and \( \tilde{\mathbf{h}}_t = [\tilde{\mathbf{h}}_t^{\text{stored}},\tilde{\mathbf{h}}_t^{\text{current}}] \) as the corresponding gate and candidate states.

    The mGRADE updates \( \mathbf{h}_t \) as follows.

    1. Compute \(\mathbf{z}_t^{\text{stored}}\):
    \[ \mathbf{z}_t^{\text{stored}} = \sigma(\mathbf{W}_z^{\text{stored}}\mathbf{x}_t)=\sigma(\mathbf{W}_z^{\text{stored}} [\mathbf{u}_t, \mathbf{u}_{t-1}])
    \]
    where \( \sigma \) is the sigmoid function, and \( \mathbf{W}_z^{\text{stored}} \in \mathbb{R}^{|\Sigma|\times 2|\Sigma|} \) is the weight matrix coupling \(\mathbf{z}_t^{\text{stored}}\) and \(\mathbf{x}_t=[ \mathbf{u}_t, \mathbf{u}_{t-1}]\). Set \( \mathbf{W}_z^{\text{stored}} \) such that the weight corresponding to the location of the 1 in the one-hot encoding of $\texttt{w}$ in \( \mathbf{u}_{t-1} \) approaches \( -\infty \), and weights for all other components to approach \( +\infty \). Thus:
    \[\mathbf{z}_t^{\text{stored}} =
    \begin{cases} 
        0 & \text{if } \mathbf{u}_{t-1} = \texttt{w}, \\
        1 & \text{otherwise}.
    \end{cases}
    \]

    2. Compute \(\tilde{\mathbf{h}}_t^{\text{stored}}\):
    \[\tilde{\mathbf{h}}_t^{\text{stored}} = \mathbf{W}_{\tilde{\mathbf{h}}}^{\text{stored}} \mathbf{x}_t = \mathbf{W}_{\tilde{\mathbf{h}}}^{\text{stored}} [\mathbf{u}_t, \mathbf{u}_{t-1}]\]
    where \( \mathbf{W}_{h}^{\text{stored}} \in \mathbb{R}^{|\Sigma|\times2|\Sigma|} \) is the weight matrix coupling \(\tilde{\mathbf{h}}_t^{\text{stored}}\) and \(\mathbf{x}_t=[\mathbf{u}_{t},\mathbf{u}_{t-1}]\). Set \( \mathbf{W}_{h}^{\text{stored}} \) as a block matrix containing the identity in the component multiplied with \( \mathbf{u}_{t} \) and zeros in the component multiplied with \( \mathbf{u}_{t-1} \) to the effect that the \( \mathbf{u}_{t} \) component gets passed on whereas \(\mathbf{u}_{t-1}\) does not. Thus:
    \[\tilde{\mathbf{h}}_t^{\text{stored}} = \mathbf{u}_t\]

    3. Update \(\mathbf{h}_t^{\text{stored}}\):
    \[
    \mathbf{h}_t^{\text{stored}} = \mathbf{z}_t^{\text{stored}} \odot \mathbf{h}_{t-1}^{\text{stored}} + (1 - \mathbf{z}_t^{\text{stored}}) \odot \tilde{\mathbf{h}}_t^{\text{stored}}
    \]
    When \( \mathbf{u}_{t-1} = \texttt{w} \), \( \mathbf{z}_t = 0 \) in the asymptotic limit, so \( \mathbf{h}_t^{\text{stored}} = \tilde{\mathbf{h}}_t^{\text{stored}} = \mathbf{u}_t \) (a value $\texttt{0}$ or $\texttt{1}$); otherwise, \( \mathbf{z}_t = 1 \), so \( \mathbf{h}_t^{\text{stored}} = \mathbf{h}_{t-1}^{\text{stored}} \).

    4. Update \(\mathbf{h}_t^{\text{current}}\). Let each component of \(\mathbf{W}_z^{\text{current}}\in \mathbb{R}^{|\Sigma|\times2|\Sigma|} \) approach \( -\infty \) such that \(\mathbf{z}_t^{\text{current}} \textit{}\approx 0 \) always. 
    As above, set \(\mathbf{W}_{\tilde{h}}^{\text{current}}\in \mathbb{R}^{|\Sigma|\times2|\Sigma|} \) such that \(\tilde{\mathbf{h}}_t^{\text{stored}} = \mathbf{u}_t\). 
    In the asymptotic limit, the update expression
    \[ \mathbf{h}_t^{\text{current}} = \mathbf{z}_t^{\text{current}} \odot \mathbf{h}_{t-1}^{\text{current}} + (1 - \mathbf{z}_t^{\text{current}}) \odot \tilde{\mathbf{h}}_t^{\text{current}}
    \]
    evaluates to \(\mathbf{h}_t^{\text{current}} = \mathbf{u}_t\).\\

    Thus, \( \mathbf{h}_t^{\text{stored}} \) retains the one-hot vector of the value following the most recent $\texttt{w}$, and \( \mathbf{h}_t^{\text{current}} \) passes on \( \mathbf{u}_t \).\\
 
    The possible prediction sets over \( L_{ff} \) are \( P_1 = \{0, 1\} \), \( P_2 = \{ \texttt{w, r, i} \} \), \( P_3 = \{ \texttt{0}\} \), \( P_4 = \{ \texttt{1}\} \).
    Given the structure of \( L_{ff} \), we can associate each set of possible input symbols to its corresponding prediction set.\\

    If \( \mathbf{u}_t \in \{ \texttt{0, 1} \} \) (a value), then \( \mathbf{u}_{t+1} \) is an instruction, so the output should be \(P_2 = \{ \texttt{w, r, i} \}\).
    If \( \mathbf{u}_t \in \{ \texttt{w, r, i} \} \) (an instruction), then \( \mathbf{u}_{t+1} \) must be a value. 
    Specifically, if \( \mathbf{u}_t =\texttt{w} \) or \( \mathbf{u}_t = \texttt{i} \), then the output should be \(P_1 = \{ \texttt{0, 1} \}\), since the following value is arbitrary. 
    On the other hand, if \( \mathbf{u}_t = \texttt{r} \), then \( \mathbf{u}_{t+1} \) must match the value after the most recent \( \texttt{w} \), which (by the preceding construction) is stored in \( \mathbf{h}_t^{\text{stored}} \). 
    Thus, the output at time $t$ should be
    \[\mathbf{h}_t^{\text{stored}}=
    \begin{cases} 
        p_3 & \text{if } \mathbf{h}_t^{\text{stored}} = \texttt{0}, \\
        p_4 & \text{otherwise}.
    \end{cases}
    \].

    A linear classifier, parameterized by a weight matrix \( \mathbf{W}_c \in \mathbb{R}^{4 \times 2|\Sigma|} \) and bias \( \mathbf{b}_c \in \mathbb{R}^4 \), maps \( \mathbf{h}_t \) to the correct prediction set \( P_i \):
    \[
    i = \text{argmax}(\mathbf{W}_c \mathbf{h}_t + \mathbf{b}_c),
    \]
    where \( i \) is the index corresponding to one of the four prediction sets. 
    Since \( \mathbf{h}_t = [\mathbf{h}_t^{\text{stored}}, \mathbf{u}_t] \) provides both the current symbol and the stored value, \( \mathbf{W}_c \) can be trained (or constructed) to distinguish these cases based on the one-hot encoded positions in \( \mathbf{h}_t^{\text{current}}=\mathbf{u}_t \) and \( \mathbf{h}_t^{\text{stored}} \). Specifically, we set \( \mathbf{W}_c \) to have a high weight between the one-hot encodings of \( \texttt{w} \) and \( \texttt{i} \) and the corresponding index of the prediction set \( \{\texttt{0,1} \}\). We also set a high weight between the one-hot encodings of \( \texttt{0} \) and \( \texttt{1} \) and the corresponding index of the prediction set \( \{\texttt{w, r, i} \}\). Finally, we set a negative bias term to the indices of the \( \{\texttt{0} \}\) and \( \{\texttt{1} \}\) prediction sets with a corresponding larger weight between \( \texttt{r} \) and those indices such that \( \texttt{r} \) is enough to activate \( \{\texttt{0} \}\) and almost enough to activate \( \{\texttt{1} \}\). If now the weight from \(\mathbf{h}_t^{\text{stored}}\) to \( \{\texttt{0} \}\) is negative and to \( \{\texttt{1} \}\) is positive, then a stored \(\texttt{1}\) will activate \( \{\texttt{1} \}\) (and suppress \( \{\texttt{0} \}\)) while a stored \(\texttt{0}\) will allow \( \{\texttt{0} \}\)) to be active while not adding to the activation of \( \{\texttt{1} \}\).\\

    To conclude, for any prefix \( s[1:t] \in L_{ff} \):

    1. \(\mathbf{h}_t^{\text{stored}} \) can correctly store the value symbol after the most recent \(\texttt{w}\) (at least in an asymptotic limit w.r.t. the weight magnitudes of the update gate \(\mathbf{z}_t^{\text{stored}}\)).

    2. \( \mathbf{h}_t^{\text{current}} = \mathbf{u}_t \) encodes the current input symbol.

    3. Given mGRADE's outputs \(\mathbf{h}_t\), a linear classifier can be constructed to output the correct prediction set classification as required by the language’s rules, handling arbitrary lengths since \( \mathbf{h}_t^{\text{stored}} \) persists across timesteps.

    For initial states or prefixes without \( \texttt{w} \), assume \( \mathbf{h}_0^{\text{stored}} = \mathbf{0} \), but since every \( \texttt{r} \) in a valid string follows a \( \texttt{w} \), \( \mathbf{h}_t^{\text{stored}} \) is always defined when needed. Thus, mGRADE can predictively model \( L_{ff} \) at arbitrary length.
\end{proof}

\textbf{Fixed-length Context Models}\label{sec:appendix_flipflop_tcn_proof}

\begin{theorem}[Flip-Flop Modeling with Fixed-length Context Models] A model with a fixed-length context window for a fixed memory size cannot predictively model a Flip-Flop language, $L_{ff}$, at arbitrary lengths.
\end{theorem}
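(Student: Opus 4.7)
The plan is to prove the theorem by contradiction using an adversarial construction that exploits the distinguishing feature of a fixed-length context: any information that falls outside the context window cannot influence the current prediction. Formally, I would assume toward contradiction that there exists a model $M$ with context-window length $C < \infty$ and bounded memory such that $M$ predictively models $L_{ff}$ on strings of arbitrary length. The key structural fact to invoke is that the output of $M$ at position $t$ is a function of at most the last $C$ input symbols $s[t-C+1:t]$, so any two prefixes that agree on these $C$ trailing symbols must yield identical outputs, and therefore identical linear-classifier predictions.

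Next, I would construct two valid flip-flop strings that collide on the context window but require different predictions. Concretely, for any $N$ with $2N+2 > C$, define
\begin{align*}
s^{(0)} &= \texttt{w}\,\texttt{0}\,\underbrace{\texttt{i}\,b_1\,\texttt{i}\,b_2\cdots\texttt{i}\,b_N}_{2N\text{ symbols}}\,\texttt{r}, \\
s^{(1)} &= \texttt{w}\,\texttt{1}\,\underbrace{\texttt{i}\,b_1\,\texttt{i}\,b_2\cdots\texttt{i}\,b_N}_{2N\text{ symbols}}\,\texttt{r},
\end{align*}
for any fixed choice of values $b_1,\dots,b_N \in \{\texttt{0},\texttt{1}\}$. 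Both strings are valid prefixes in $L_{ff}$ since the only $\texttt{w}$ occurs at position~$1$, every $\texttt{i}$ is paired with a value (no constraint), and the closing $\texttt{r}$ must be followed by the value written by that first $\texttt{w}$. Thus the correct prediction sets at position $\lvert s^{(v)}\rvert$ are $P^{(0)} = \{\texttt{0}\}$ and $P^{(1)} = \{\texttt{1}\}$, which are disjoint.

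I would then show the collision: the last $C$ symbols of $s^{(0)}$ and $s^{(1)}$ are identical, since they differ only at positions $1$ and $2$, which lie at distance at least $2N+1 \geq C$ from the final symbol $\texttt{r}$. Therefore $M$ produces the same output vector on both strings, and any linear classifier applied to this output must return the same prediction set for both. This contradicts the requirement $P^{(0)} \neq P^{(1)}$, completing the proof.

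The main obstacle is conceptual rather than technical: one must pin down precisely what ``fixed-length context window for a fixed memory size'' means, so that the reduction ``output depends only on the last $C$ symbols'' is justified. I would address this up front by formalizing such a model as any map $M : \Sigma^{\leq C} \to \mathbb{R}^d$ for some finite $C$ and $d$, noting that TCNs with global receptive field $R$ and Transformers with context window $C$ both satisfy this with their respective constants. Everything else — the construction, the collision, and the contradiction — is then a short combinatorial argument.
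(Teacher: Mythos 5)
Your proposal is correct and takes essentially the same route as the paper: an adversarial prefix of the form \(\texttt{w}\,v\) followed by enough \(\texttt{i}\)-value pairs to push the written value outside the context window before the closing \(\texttt{r}\). The only difference is one of rigor — where the paper asserts that the out-of-window value ``forces chance-level performance,'' you make this precise with an explicit fooling pair \(s^{(0)}, s^{(1)}\) that collide on the last \(C\) symbols yet demand disjoint prediction sets, which is a welcome tightening of the same argument.
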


\begin{proof} Consider a sequence of length $T_c + 3$, where $T_c$ is the context window given some fixed memory size. Start with $\texttt{w v}$, follow with $T_c$ $\texttt{i}$ instructions, and end with $\texttt{r}$. The correct prediction after $\texttt{r}$ is $\texttt{v}$, but $\texttt{v}$ lies outside the context window, forcing chance-level performance. Note that increasing the context length is the obvious solution to this problem however, the correspondingly increasing memory costs eventually become prohibitive for very long sequences.
\end{proof}

Note that models with fixed-length temporal contexts given some fixed memory size include Transformers and \glspl{tcn}.

\subsubsection{Training and Evaluation Details}\label{sec:appendix_flipflop_details}

\paragraph{Task Description}
We train on the Flip-Flop dataset from \citet{liu2023attentionglitch}, containing 1M valid Flip-Flop sequences of $512$ timesteps, where training data contains $\texttt{i}$ instruction symbols with probability $p(\texttt{i})=0.8$, such that the expected distance between any $\texttt{w}$ and $\texttt{r}$ symbol is 10 timesteps. 
This dataset slightly simplifies the full predictive modeling task to focus on recalling the correct value symbol after an $\texttt{r}$ as described in \cref{sec:flipflop}. 
For testing, we use out-of-distribution data with sparse $\texttt{w}$ and $\texttt{r}$ (expected distance around 100 timesteps) to stress long-range dependencies.

\paragraph{Hyperparameters} 
We compare a single-layer \gls{mgrade} to a 5-layer \gls{tcn} (with an exponentially increasing receptive field) and a single-layer \gls{lru} augmented by \gls{dcls}.
The \gls{lru} is used as a drop-in replacement for the gated recurrent component of \gls{mgrade}, demonstrating the importance of \gls{mgrade}'s update gate relative to a linear time-invariant \gls{ssm} like the \gls{lru}.
We use the code supplied in \citep{Zucchet2024minimalLRU} as the basis for our \gls{lru} implementation.
None of the models use layer normalization.
For the optimization, we use AdamW \citep{loshchilov2018decoupled-adamw} over the weights, and Adam \citep{kingma2017adam} for the biases, the normalization layers, and the \gls{dcls} positions.
We also use a cosine annealing learning rate scheduler without warmup.
For the $\mathbf{z}$ gate biases, we use a "closed" initialization (where the gate bias is set such that $\sigma (\mathbf{W}_z\mathbf{x}_t)\approx0$), while using the traditional zero initialization for all other biases.
For all weights, we initialize with a truncated normal distribution with a standard deviation set to $\sqrt{1/\text{fan\_in}}$. 
Other hyperparameters are outlined in \cref{tab:flipflop-hp}.
Reported results are averaged over 3 random seeds.

\begin{table}[h]
\caption{Hyperparameters for the Flip-Flop Modeling Task. L: number of layers. D/H: model dimensionality and hidden state size per layer (no hidden state expansion for any of these models). K: kernel count (for all layers). $\Gamma$: kernel length (per layer from input to output for the \gls{tcn}). LR: learning rate. B: batch size. Training Steps: number of batches presented during training. WD: weight decay.}
\label{tab:flipflop-hp}
\begin{center}
\renewcommand{\arraystretch}{1.2}
\begin{tabular}{lcccccccc}
\toprule
Parameter   & L     & D/H     & K     & $\Gamma$    & LR    & B     & Training Steps    & WD \\
\midrule
mGRADE     & 1     & 32    & 1     & 2        & 0.004 & 64    & 250,000       & 0.1 \\
LRU + DCLS        & 1     & 32    & 1     & 2         & 0.004 & 64    & 250,000       & 0.1 \\
TCN        & 5     & 32    & 16     & {\scriptsize16/32/64/128/256}        & 0.004 & 64    & 250,000       & 0.1 \\
\bottomrule
\end{tabular}
\end{center}
\end{table}

\paragraph{Loss Metric}
Cross-entropy loss is used as the training loss.
The reported accuracy is how often the model correctly recalls the value symbol after the most recent $\texttt{w}$ when encountering a $\texttt{r}$.
Since there are 2 possible value symbols, chance level performance lies at 50\%.

\paragraph{Results}
\cref{fig:SI_flipflop}A shows the recall accuracy of each model over training steps.

\begin{figure*}[ht!]
    \centering \includegraphics[width=0.78\textwidth]{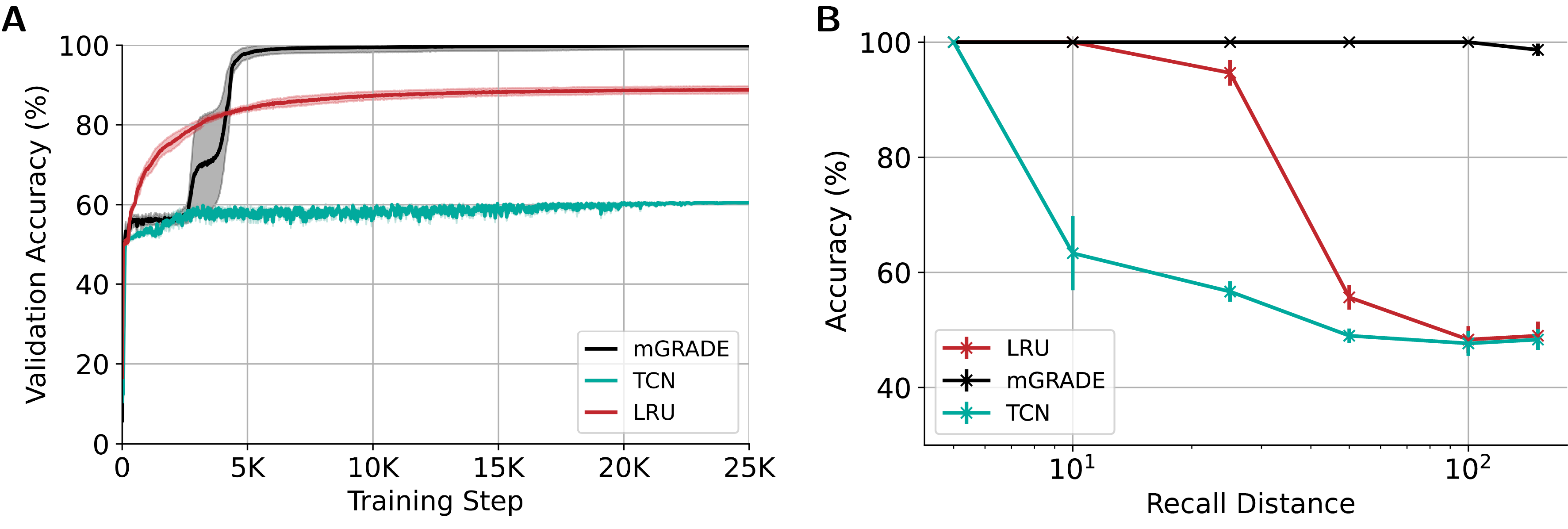}    
    \caption{\textbf{mGRADE solves Flip-Flop modeling task better than TCNs and non-gated RNNs.}
    \textbf{A)} Validation accuracy (mean ± stde) over training steps. \textbf{B)} Recall accuracy (mean ± stde) for different recall distances.}
    \label{fig:SI_flipflop}
\end{figure*}

\subsubsection{Recall Distance}\label{appendix:flipflop_recalldistance}
In addition to the results reported in \cref{sec:flipflop}, we evaluated how well the model recalls the most recent value after a $\texttt{w}$ given increasing distances between the $\texttt{w}$ and $\texttt{r}$ (the \textit{recall distance}).
For this, we construct Flip-Flop strings with one $\texttt{w}$ at the beginning and a $\texttt{r}$ in the middle, with different numbers of $\texttt{i}$ symbols (with corresponding value symbols) in between.
\cref{fig:SI_flipflop}B shows the recall accuracy of each of the models trained using the setup described above and tested on different recall distances.
Note that the only model consistently performing accurate recall over distances of up to 100 timesteps is \gls{mgrade}.
Even the \gls{lru} model decreases in accuracy with increasing recall distance, demonstrating the utility of a gated recurrent component.

\subsection{Selective Copying Task}\label{sec:appendix_selectivecopy}

\subsubsection{Training and Evaluation Details}

\paragraph{Task Description}
Following the training setup in \citet{gu2024mamba} and \citet{feng2025were}, we randomly generate sequences of $4096$ timesteps at each training step.
We train over $300,000$ steps.
Each sequence contains $16$ randomly distributed value symbols selected from an alphabet of size $n=16$. 
After seeing the $\texttt{m}$ symbol at the end of the sequence, the goal is to output the value symbols in the order they were received.
For testing, we generated new sequences of the same length.

\paragraph{Hyperparameters}
Following the architecture used in \citet{gu2024mamba}, we use a 2-layer \gls{mgrade}, comparing it to a 2-layer \gls{lru} augmented by \gls{dcls} and a 6-layer \gls{tcn} (with an exponentially increasing receptive field).
Just like the Flip-Flop modeling task, we use \citep{Zucchet2024minimalLRU} as the basis for a drop-in \gls{lru} replacement into the \gls{mgrade} architecture.
All models use encoders and decoders at the input and output, respectively.
For the optimization, we use AdamW \citep{loshchilov2018decoupled-adamw} for the weights, and Adam \citep{kingma2017adam} for the biases, the normalization layers, and the positions of the \gls{dcls}.
We also use a cosine annealing learning rate scheduler without warmup.
For the $\mathbf{z}$ gate biases, we use the \gls{ugi} initialization from \citet{gu2020improving-ugi}, while using the traditional zero initialization for all other biases.
We initialize the weights with a truncated normal distribution, with a standard deviation set to $\sqrt{1/\text{fan\_in}}$. 
Other hyperparameters are outlined in \cref{tab:selectivecopy-hp}.
Reported results are averaged over 2 seeds (because of the compute-intensive nature of this task).

\begin{table}[h]
\caption{Hyperparameters for the Selective Copying Task. L: number of layers. D/H: model dimensionality and hidden state size per layer (no hidden state expansion for any of these models). K: kernel count. $\Gamma$: kernel length (per layer from input to output for the \gls{tcn}). LR: learning rate. B: batch size. Training Steps: number of batches presented during training. WD: weight decay.}
\label{tab:selectivecopy-hp}
\begin{center}
\renewcommand{\arraystretch}{1.2}
\begin{tabular}{lcccccccc}
\toprule
Parameter   & L     & D/H     & K     & $\Gamma$    & LR    & B     & Training Steps    & WD \\
\midrule
mGRADE     & 2     & 64    & 32     & 128        & 0.001 & 64    & 300,000       & 0.1 \\
LRU + DCLS        & 2     & 64    & 32     & 128         & 0.001 & 64    & 300,000       & 0.1 \\
TCN        & 6     & 100    & 128     & {\scriptsize 128/256/512/1024/2048/4096}        & 0.003 & 64    & 300.000       & 0.0 \\
\bottomrule
\end{tabular}
\end{center}
\end{table}

\paragraph{Loss Metrics}
Cross-entropy loss is used over the final outputs after the $\texttt{m}$ symbols. 
The final accuracy is evaluated on randomly generated test sequences of the same length.

\paragraph{Results} \cref{fig:SI_selectivecopy} shows the accuracy over training steps for each model.

\begin{figure*}[ht!]
    \centering \includegraphics[width=0.5\textwidth]{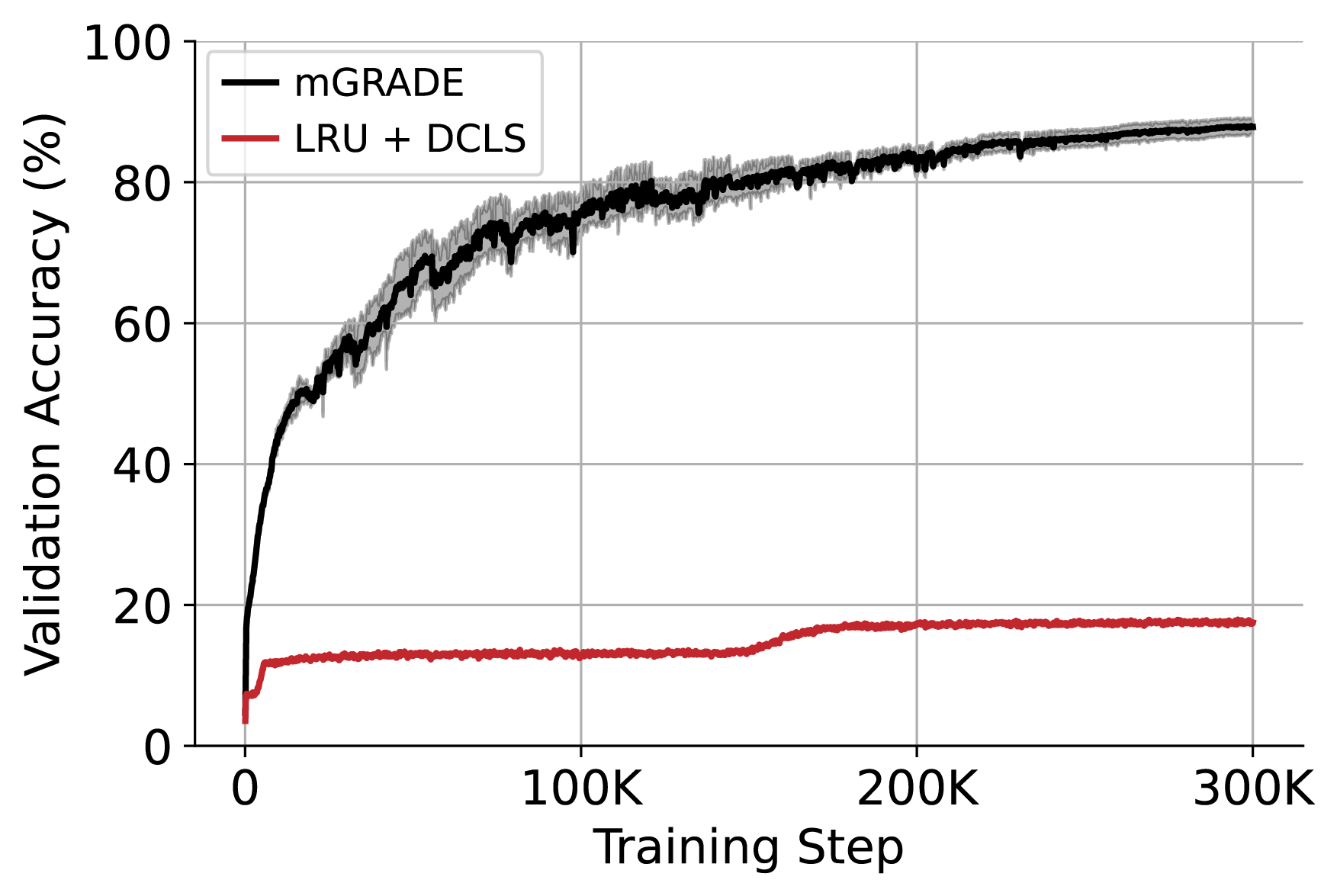}    
    \caption{\textbf{mGRADE solves the Selective Copying Task better than LRU.}
    Validation accuracy (mean ± stde) over training steps.}
    \label{fig:SI_selectivecopy}
\end{figure*}

\FloatBarrier

\section{\gls{lra} and \gls{gsc} setup}\label{sec:appendix_hyperparams}

\subsection{Hyperparameters}

In \cref{tab:lra-hp}, we provide the hyperparameters used for the \gls{mgrade} results on \gls{lra} and \gls{gsc} reported in \cref{tab:lraresults} and \cref{tab:sc35_compact}. 
We use the encoder and decoder at the input and output, respectively, as well as an \gls{mlp} and layer normalization in each \gls{mgrade} layer.
For the optimization, we use AdamW \citep{loshchilov2018decoupled-adamw} for the weights, and Adam \citep{kingma2017adam} for the biases, normalization layers, and \gls{dcls} positions.
In addition, the learning rate was scaled by 5 for the \gls{dcls} positions.
For all tasks, we use a cosine annealing with linear warmup learning rate scheduler.
We use two initialization schemes for the $\mathbf{z}$ gate biases, the traditional zero initialization and the \gls{ugi} from \citet{gu2020improving-ugi}.
We selected the zero-initialization for all the other biases. 
We use a truncated normal distribution for all weights (except the ones of the temporal convolution block), with a standard deviation set to $\sqrt{1/\text{fan\_in}}$. 
For the temporal convolution block, we set $\sqrt{\alpha/K}$ where $\alpha$ is a scaling hyperparameter.
We use gradient clipping for every task except for Pathfinder, with a threshold of 10. 
For Pathfinder we use gradient global normalization with a threshold at 2. 
We did not use dropout. 
Finally, for the Image task, we introduced an extra linear layer at the output of the \gls{mingru} component and before the addition with the corresponding skip connection.

In \cref{tab:lra-lra-hp}, we provide the hyperparameters used for the results of our implementation of a fully causal and parameter-optimized \gls{lru} on \gls{lra} reported in \cref{tab:lraresults}.
We use the code supplied in \citep{Zucchet2024minimalLRU} without modifications, except for the Retrieval task which requires a specialized decoder.
For this task, we use the specialized decoder for the Retrieval task from the official repository for \citep{smith2023simplified} as a drop-in replacement for the default decoder used in \citep{Zucchet2024minimalLRU}.
As in \cite{orvieto2023resurrecting-lru}, we use a cosine annealing with linear warmup learning rate scheduler as well as AdamW \citep{loshchilov2018decoupled-adamw} for weight optimization and Adam \citep{kingma2017adam} for biases.
For all internal recurrent parameters, a smaller learning rate was used, determined by a factor $\beta$ of the original learning rate.
We also used the ring initialization from \cite{orvieto2023resurrecting-lru}, ensuring that the transition matrix eigenvalues were initialized between $\text{r}_{\text{min}}$ and $\text{r}_{\text{max}}$.
For all these \gls{lru}-specific hyperparameters, we used those reported in the \cite{orvieto2023resurrecting-lru}, modifying only the hidden and model dimensionality to match our \gls{mgrade} implementation in terms of memory footprint.
The learning rates were optimized using a base 10 logarithmic grid search.
For our causal \gls{lru}, we only reported the highest validation accuracy (57.4\%) reached given that our implementation was not able to significantly exceed chance level (50\%) in terms of test accuracy, despite sweeping the hidden and model dimensionalities (reaching model sizes of up to 2M), the ring initialization (using $\text{r}_{\text{min}}=0.9$ and $\text{r}_{\text{max}}=0.999$ as well as $\text{r}_{\text{max}}=0.9999$), the base learning rates (from 0.1 to 0.00001), and training for more than 800 epochs.
Note that the original \gls{lru} experiments only report results on Pathfinder using a bidirectional model \citep{orvieto2023resurrecting-lru}.

Finally note that \gls{lra} also includes the PathX task, which we do not evaluate for either \gls{mgrade} or our \gls{lru} implementation.
The models reported in Table 6 that solve PathfinderX all use (acausal) bidirectional processing.
Since mGRADE is designed for causal, streamed processing, this experiment was outside of our scope.
Nevertheless, \gls{mgrade}'s promising results on the \gls{gsc} dataset (which uses a sequence length that is as long as that of PathX, 16K) hint at \gls{mgrade}'s potential suitability even to this task.

\begin{table}[h]
\caption{Hyperparameters used for the reported \gls{mgrade} results on LRA. L: number of layers. D/H: model dimensionality and hidden state size per layer (no hidden state expansion for any of these models). K: kernel count. $\Gamma$: kernel length. LR: learning rate. B: batch size. Epochs: max epochs set for the run. WD: weight decay. ZBI: $\mathbf{z}$ gate bias initialization. $\alpha$: scaling factor for \gls{dcls} weight initialization. WU: number of epochs for the learning rate linear warmup.}
\label{tab:lra-hp}
\begin{center}
\renewcommand{\arraystretch}{1.2}
\begin{tabular}{lccccccccccc}
\toprule
Parameter   & L     & D/H   & K     & $\Gamma$  & LR     & B     & Epochs    & WD     & ZBI           & $\alpha$  & WU    \\
\midrule
ListOps     & 6     & 32    & 2     & 16        & 0.003  & 64    & 100       & 0.1    & \gls{ugi}     & 0.05      & 10     \\
Text        & 6     & 32    & 2     & 8         & 0.002  & 32    & 100       & 0.1    & zero          & 0.25      & 10    \\
Retrieval   & 3     & 64    & 2     & 8         & 0.003  & 32    & 20        & 0.1    & \gls{ugi}     & 0.05      & 4    \\
Image       & 6     & 128   & 8     & 256       & 0.004  & 64    & 100       & 0.1    & zero          & 0.1       & 10    \\
Pathfinder  & 6     & 128   & 8     & 256       & 0.003  & 64    & 100       & 0.02   & zero          & 1         & 10    \\
\midrule
\gls{gsc}   & 6     & 64    & 16    & 64        & 0.0005 & 8     & 40        & 0.05   & \gls{ugi}     & 0.25      & 4    \\
\bottomrule
\end{tabular}
\end{center}
\end{table}

\begin{table}[h]
\caption{Hyperparameters used for the results of our \gls{lru} reproduction on LRA. L: number of layers. D/H: model dimensionality and hidden state size per layer. $\text{r}_{\text{min}}$/$\text{r}_{\text{max}}$: minimum and maximum eigenvalue initialization radii. LR: learning rate. $\beta$: scaling factor for transition matrix learning rate. B: batch size. Epochs: max epochs set for the run. WD: weight decay. WU: number of epochs for the learning rate linear warmup.}
\label{tab:lra-lra-hp}
\begin{center}
\renewcommand{\arraystretch}{1.2}
\begin{tabular}{lcccccccccc}
\toprule
Parameter   & L     & D/H   & $\text{r}_{\text{min}}$/$\text{r}_{\text{max}}$     & LR  & $\beta$    & B     & Epochs    & WD  & WU    \\
\midrule
ListOps     & 6     & 33/32    & 0.0/0.99     & 0.001 & 0.5 & 32    & 100       & 0.05   & 10     \\
Text        & 6     & 34/32    & 0.5/0.9     & 0.001 & 0.1 & 32    & 100       & 0.05     & 10    \\
Retrieval       & 6     & 48/64    & 0.5/0.9     & 0.001 & 0.5 & 64    & 25       & 0.05   & 4     \\
Image       & 6     & 160/256    & 0.9/0.999     & 0.01 & 0.25 & 64    & 100       & 0.1   & 10     \\
\bottomrule
\end{tabular}
\end{center}
\end{table}

\subsection{Activity buffer memory footprint}\label{sec:appendix_activ_buff_calc}

In this section, we explain how we compute the total buffer memory used by the baseline models in \cref{tab:lraresults} and \cref{tab:lramemory}.
S4 \citep{gu2022efficiently}, DSS variants  \citep{gupta2022diagonal}, Liquid-S4  \citep{hasani2023liquid} all implement a similar architecture where H single-input, single-output SSM heads of size N are used in parallel. 
Thus, the amount of memory used for all recurrent hidden states is given by the formula $L \times (H \times N)$.
S4-LegS \citep{gu2022efficiently} uses H bi-directional SSM heads of size N in parallel. 
Thus, the total number of states is given by the formula $L \times (H \times 2N)$.
S5 \citep{smith2023simplified}, LRU \citep{orvieto2023resurrecting-lru} use only a single head multi-input, multi-output SSM of size N. 
Thus, the total number of states is given by the formula $L \times N$.
HGRN \citep{qin2023HGRN} uses a similar architecture to \gls{mgrade}'s gated recurrent block, extended with complex states. 
Thus, the total number of states is given by the formula $L \times (2H)$.
The long-convolution models, SGConv \citep{li2023what-sgconv} and MRConv \citep{cunningham2024reparameterized}, need to buffer the activity of each neuron at each timestep, thus the total number of states is given by the formula $L \times H \times T$ (where $T$ is the sequence length).
To convert the number of states to the activation buffer memory footprint, we multiply by the precision used for the state representation and divide by 8.
For all models mentioned here, we confirmed that their code implementations use a default precision of 32 bits per activation.

\subsection{Parameter memory footprint for real-time processing on edge devices}\label{sec:appendix_embedded_memory}

In this section, we explain how real-time processing on edge devices impose taking into consideration some aspects that do not apply when running inference on GPUs. 
When processing inputs in real-time, it is too memory-expensive to save the entire sequence before processing it, let alone buffering the activities of each neuron for the entire sequence length. 
For this reason, besides \gls{mgrade}, only S4 \citep{gu2022efficiently}, DSS \citep{gupta2022diagonal}, Liquid-S4 \citep{hasani2023liquid}, S5 \citep{smith2023simplified}, and LRU \citep{orvieto2023resurrecting-lru} could be deployed on an embedded device and we report the adjusted numbers for these architectures in  \cref{tab:lramemory}. 
We leave out bi-directional architectures as we care for causal processing. 
We also leave out the convolution architectures too as \cref{tab:lraresults} shows that their activity buffers memory footprint is already above 1M for all tasks.
\newline
In S4 \citep{gu2022efficiently} and Liquid-S4 \citep{hasani2023liquid}, the recurrent matrix is parametrized as a \gls{dplr} matrix $ A = \Lambda - PP^*$. This means that it is parametrized by two vectors of dimension $N$ (state dimension). 
However, when running in step-by-step recurrent mode on an embedded device, A would need to be instantiated into a full $N \times N$ matrix, which increases the number of effective parameters substantially.
For example on the Image task, the number of parameters of S4 \citep{gu2022efficiently} increases from 12.98M to 59.54M (i.e a factor of 4.6).
Similarly to deploy \gls{mgrade} on an embedded device, we would need to fully materialize the \gls{dcls} kernels into vectors of dimension $\Gamma$. 
This increases the number of parameters of \gls{mgrade} from 712K to 896K on the Image task (i.e, a small 25\% increase).
With these results, we confirm that \gls{mgrade} is the architecture with the smallest memory footprint.

\begin{table}[h]
    \caption{\textbf{Fully Instantiated Memory Footprint for Recurrent Embedded Deployment on \gls{lra}.} Compare to \cref{tab:lraresults}. Note that S5 and HGRN use bidirectional input processing. Parameter (``Params.".) and activation memory (``Buff.") in bytes.}
    \label{tab:lramemory}
    \resizebox{\textwidth}{!}{  
    \vspace{0.2cm}
    \centering
    \begin{tabular}{lccccc}
    \toprule
             & {ListOps} & {Text} & {Retrieval} & {Image} & {Pathfinder} \\ 
        Model & Params. / Buff.\ Act. & Params. / Buff.\ Act. & Params. / Buff.\ Act. & Params. / Buff.\ Act. & Params. / Buff.\ Act. \\ 
    \midrule%
        S4~\citep{gu2022efficiently}                     & 12.60M / 191.4K         & 4.58M / 62.5K          & 27.86M / 382.8K         & 59.54M / 769.5K         & 26.72M / 382.8K  \\ 
        DSS$_\text{SOFTMAX}$ ~\citep{gupta2022diagonal}  & 804.7K / 191.4K         & 593.8K / 62.5K         & 3.39M / 382.8K         & 7.63M / 769.5K          & 2.29M / 382.8K \\ 
        Liquid-S4~\citep{hasani2023liquid}               & 1.42M / 31.2K           & 710.9K / 15.6K         & 29.01M / 382.8K         & 3.03G / 6.11M          & 27.86M / 382.8K \\ 
        S5~\citep{smith2023simplified}                   & 742.2K / 5.9K           & 4.96M / 4.3K           & 2.94M / 5.9K            & 19.47M / 9.0K         & 4.20M / 5.9K  \\ 
        LRU~\citep{orvieto2023resurrecting-lru}\tnote{3} & 742.2K / 5.9K           & 4.96M / 4.3K           & 2.94M / 5.9K            & $-$ / $-$           & 4.20M / 5.9K \\  
        HGRN~\citep{qin2023HGRN}             & 328.1K / 1.6K         & 3.35M / 3.9K        & 449.2K / 1.2K        & 78.63M / 23.8K         & 4.96M / 5.9K  \\ 
    \midrule
        {\gls{mgrade}}                                   & 164.1K / 11.7K           & 175.8K / 5.9K         & 410.2K / 6.6K        & 3.42M / 769.5K          & 3.04M / 769.5K  \\ 
    \bottomrule  
    \end{tabular}
    }
\end{table}

\subsection{Memory constraints of representative edge hardware for sequence modeling}\label{sec:appendix_mcu}
In computer architecture, there is a trade-off between memory density and access latency/energy cost. 
For example, a processor’s on-chip cache is very fast and energy-efficient to access but it is not dense and thus expensive to manufacture and limited in memory capacity. 
To mitigate this, memory is typically organised in a hierarchy from small and fast (on-chip cache) to large but slow (off-chip main memory or even SSDs and hard drives) \citep{mutlu2025memorycentric}. 
Achieving the latency and power required for energy-efficient real-time advanced sequence modeling \citep{covi2021edgecompute} requires working within the on-chip cache — leading us to the challenge of designing models small enough to fit in this cache. 
\gls{mgrade} matches the typical size constraints of on-chip memory in edge hardware.

We can illustrate \gls{mgrade}’s memory advantage for edge sequence modeling using the STM32N6 microcontroller already mentioned in \cref{fig:F1} \citep{stm32n6_2025}. 
STM32N6 has a flexible neural processing unit (NPU) along with a single-precision floating point unit (FPU), enabling it to store parameters and process activations with 32 bit precision with \gls{sota} latency and energy efficiency as long as the total model memory footprint fits inside its 4.2MB on-chip cache. 
The only model that fits this memory budget even for the most memory-intensive \gls{lra} tasks is \gls{mgrade} with its average 1.2MB and maximum 3.5MB memory footpring (see \cref{tab:lraresults}); the other models are forced to use the slow and energy-costly off-chip memory, limiting their utility for edge processing applications.

To highlight the fact that \gls{mgrade} still fits on edge platforms even for the most memory-intensive tasks evaluated in this work, \cref{fig:SI_memory_footpring_image} shows the average test accuracy across \gls{lra} versus the \textit{maximum }memory footprint (as opposed to the \textit{average} memory footpring shown in \cref{fig:F1}C).
In addition to the on-chip memory of STM32N6, we also plot the on-chip memory of several other edge-compatible hardware platforms.
We use the reported memory footprint from \cref{tab:lraresults} rather than the actual instantiated footprint shown in \cref{tab:lramemory}.

First of all, note that STM32N6's on-chip memory budget of 4.2MB is fairly typical of small edge-compatible hardware.
For example, the SiMa MLSoC has around 4MB on-chip memory \citep{simamlsoc_2024}, the Syntiant NDP250 offers 6MB \citep{syntiantndp250_2024}, and the Google Coral Edge TPU accelerator even reaches 8MB \citep{googlecoral_2026}, while the lower end comes out to 2MB with the Versal AI Edge Gen 2 SoC \citep{versaledgeai_2026}.

\cref{fig:SI_memory_footpring_image} shows clearly why \gls{mgrade}'s memory efficiency is so useful. 
Of all the other \gls{sota} models, only $\text{DSS}_{\text{SOFTMAX}}$ comes close to fitting on Google Coral while still underperforming \gls{mgrade} by $>$3\%.
Meanwhile, the only platform for which \gls{mgrade} is still too large is the Versal AI Edge Series Gen 2 with 2MB.
Even here, \gls{mgrade}'s average memory footprint of 1.5MB (see \cref{fig:F1}C) would comfortably fit.
Of course, quantization techniques may reduce the memory footprint required for the other \gls{sota} models; however, \glspl{ssm} are difficult to quantize, particularly their transition matrices \citep{zhao2025quantizingsmallscalestatespacemodels}.
The core components of \gls{mgrade}, delay-based convolutions and gated recurrence, have both been adopted in heavily quantized embedded settings before \citep{dagostino2024denram, blanken2025chameleon, billaudelle2025minimalist}.

\begin{figure*}[ht!]
    \centering \includegraphics[width=0.8\textwidth]{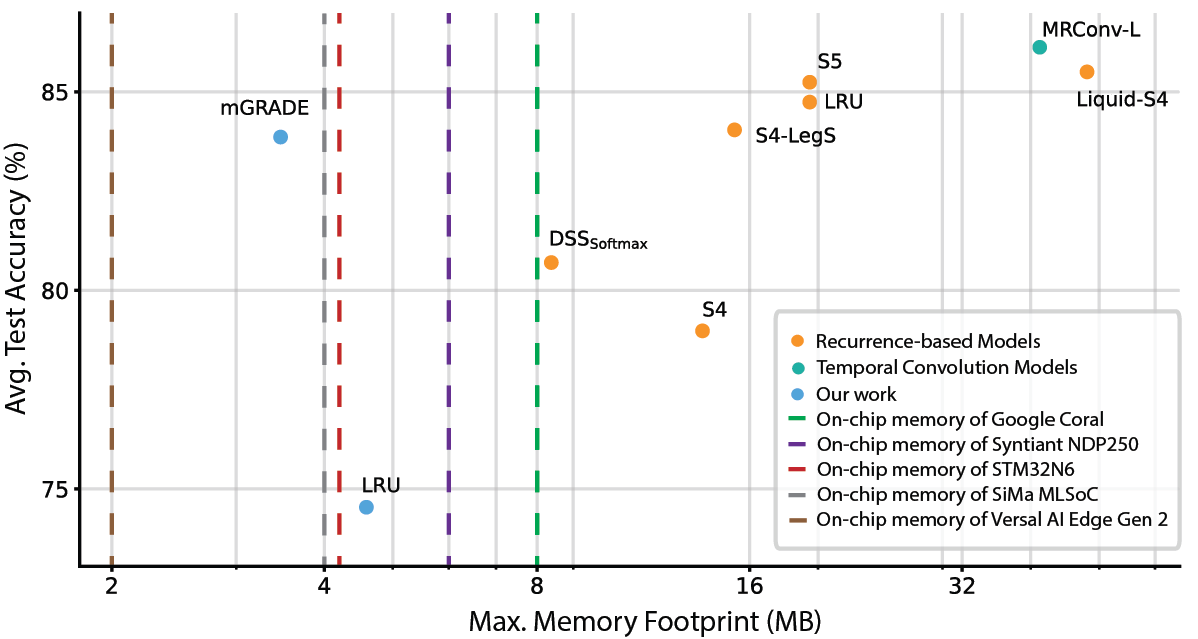}    
    \caption{\textbf{mGRADE performs the best within the memory constraints of representative embedded systems.} Average test accuracy across all \gls{lra} tasks in \cref{tab:lraresults} over the \textit{maximum} memory footprint required (compare with \cref{fig:F1}C which shows average test accuracy over \textit{average} memory footprint)}
    \label{fig:SI_memory_footpring_image}
\end{figure*}

\section{Evaluation on Neuromorphic Spiking Dataset}\label{sec:appendix_spiking}

To rigorously evaluate \gls{mgrade}'s claims of parameter and memory efficiency in strictly low-resource environments, we extended our benchmarking to the neuromorphic domain. While Spiking Neural Networks (SNNs) and specialized event-based architectures are often cited as the standard for low-power temporal processing, we hypothesized that \gls{mgrade}’s hybrid approach of combining lightweight gated recurrence with learnable delay convolutions could offer superior efficiency without requiring the specialized hardware or complex training dynamics often associated with SNNs. We selected the Spiking Heidelberg Digits (SHD) dataset, a standard neuromorphic benchmark consisting of approximately 10k recordings of spoken digits (0–9) in both English and German, resulting in 20 classes \citep{cramer2020heidelberg}. The dataset is constructed using an artificial cochlear model which results in 700 input channels. SHD is highly relevant for this evaluation because it requires the detection of precise temporal patterns within sparse spike trains to achieve good classification accuracy.

\subsection{Experimental Setup}\label{sec:appendix_spiking_expsetup}
We follow the same data preprocessing pipeline established in the DCLS-delays work \citep{hammouamri2024learning}, employing spatio-temporal binning to reduce input dimensionality. Input neurons were reduced from 700 to 140 by spatially binning over every 5 neurons. For the temporal dimension, we utilized zero right-padding to ensure a fixed window length for all sequences in a batch. Crucially, regarding the temporal resolution, the original DCLS-delays work reported using a discrete time-step of $\Delta t = 10$ ms. However, when using their official implementation with this reported configuration, we were unable to reproduce their state-of-the-art results. Through our own ablation of the baseline's configuration, we discovered that a finer temporal resolution of $\Delta t = 5$ ms was necessary to achieve performance parity with their reported figures. To ensure a fair and direct comparison, we therefore adopted this empirically verified configuration ($\Delta t = 5$ ms) for both the baseline reproduction and our \gls{mgrade} model. Consequently, to maintain the equivalent temporal receptive field (250ms), the sequence length (buffer size) was increased to 50 steps. We trained an \gls{mgrade} model using the aforementioned preprocessing (with 5 ms bins) with two recurrent layers of 64 hidden units each, operating over a sequence length (buffer size) of 50 steps. Crucially, while the DCLS-delays baseline relies on a single delay element per synapse (1 kernel count), our \gls{mgrade} configuration utilizes 10 learnable delay taps within the convolution window. This design allows \gls{mgrade} to sample the input history more densely while maintaining a compact recurrent state. We applied a dropout rate of 10\% for regularization. Finally, the output predictions are generated via a summation of the logit outputs across the entire sequence length, ensuring reproducibility with the DCLS-delays model. All reported results for both \gls{mgrade} and the reproduced DCLS-delays baseline are averaged over 5 random seeds.

\begin{table}[h]
    \centering
    \begin{threeparttable}
    \caption{\textbf{Classification Accuracy and Parameter Efficiency on the Spiking Heidelberg Digits (SHD) Dataset.} We compare \gls{mgrade} against state-of-the-art efficient spiking architectures. Parameter (``Params.'') and activation memory (``Buff.'') in bytes.}
    \label{tab:shd_results}
    \begin{tabular}{lccc}
    \toprule
        Model & Params. & Buff. & Test Accuracy \\
    \midrule
        SpikCommander\tnote{1} ~\citep{wang2025spikcommander} (1L-8-128) & 760K & - & \textbf{96.41\%} \\
        EventSSM\tnote{1} ~\citep{schone2024scalable} (6L-64) & 1.6M & - & 95.90\% \\
        DCLS\tnote{2} ~\citep{hammouamri2024learning} (2L-256) & 800K & 200 & 93.95\% $\pm$ 0.72 \\
    \midrule
        {\gls{mgrade}} (2L-64) & \textbf{257.6K} & 200 & 93.77\% $\pm$ 0.23 \\
    \bottomrule
    \end{tabular}
    \begin{tablenotes}
        \item[1] Results extracted from official publication.
        \item[2] Our reproduction.
    \end{tablenotes}
    \end{threeparttable}
\end{table}

\subsection{Results}\label{sec:appendix_spiking_results}
The results, summarized in \cref{tab:shd_results}, demonstrate that \gls{mgrade} is highly competitive with specialized spiking architectures while being significantly more parameter-efficient. \gls{mgrade} achieves an accuracy of 93.77\%, which is statistically comparable to the reproduced DCLS-delays model (93.95\%), yet it does so with 3x fewer parameters (64.4K vs. 200K). Furthermore, compared to other high-performing models like EventSSM (400K params), \gls{mgrade} requires roughly 6x fewer parameters to achieve competitive performance.  This evaluation successfully demonstrates that \gls{mgrade} establishes a new standard for parameter efficiency among this group while delivering competitive accuracy. It confirms that the combination of learnable delay embeddings (DCLS) and minimal gating (minGRU) is a powerful and highly efficient approach for sequence modeling in memory-constrained environments.

\FloatBarrier
\section{mGRADE Analysis}\label{sec:appendix_analysis}

\subsection{Ablation Study} \label{sec:appendix_ablation}
In section \cref{sec:theoretical}, we formally motivated the need for the temporal convolution and the gated recurrent component of \gls{mgrade} to tackle long-range dependency tasks. 
\cref{tab:lraablation} compares the performance of \gls{mgrade} to architectures using only recurrent or convolutional components (using a mean pooling operator).
Note that all architectures use the best hyperparameters found for mGRADE, scaling the hidden dimensionality to match parameter size.
This means that kernel lengths and kernel counts were matched across mGRADE's convolution component and the purely convolution-based architectures for each task.
The pure convolution-based models are the \gls{tcn}, consisting of stacked causal dilated temporal convolution layers, and the \gls{dcls} model, made up of stacked causal temporal \gls{dcls} layers, which can be thought of as \gls{mgrade} without the recurrent component.
For the pure gated recurrent architecture, we simply remove the convolutional component from the \gls{mgrade} layers, leaving us with the \gls{mingru} \citep{feng2025were}.
We focus on the ListOps, Image, and Pathfinder tasks from \gls{lra} as \citep{orvieto2023resurrecting-lru} already showed that pure linear \glspl{rnn} could solve the Text and Retrieval tasks to around 89\%. 
Both the \gls{tcn} and the \gls{dcls} models achieve good performance on Image while falling short on ListOps.
On the other hand, \gls{mingru} achieves a better result than \gls{mgrade} on ListOps, while performing poorly on the Image task. 
Besides \gls{mgrade}, none of these architectures learn on Pathfinder.
Our results are consistent with prior work demonstrating that purely convolution-based models (with around 10M parameters) achieve up to 83.62\% on Image but only up to 72.28\% and 52.93\% on Pathfinder and ListOps, respectively \citep{miralles2025lralocality}.
Overall, our ablation study validates the theoretical motivations for each component of an \gls{mgrade} layer and showcases the synergy between the temporal convolution and gated recurrent components.

\begin{table}[h]
\caption{\textbf{Ablation of \gls{mgrade}'s component on the \gls{lra} benchmark.} All architectures use the best hyperparameters found for \gls{mgrade}, we only scaled the layer width H to match the number of parameters. Parameter (``Params.".) and activation memory (``Buff.") in bytes.}
\label{tab:lraablation}
    \resizebox{\textwidth}{!}{   
    \vspace{0.2cm}
    \centering
    \begin{tabular}{lcccccc}
    \toprule
         & \multicolumn{2}{c}{ListOps} & \multicolumn{2}{c}{Image} & \multicolumn{2}{c}{Pathfinder} \\ 
        \cmidrule(lr){2-3}\cmidrule(lr){4-5}\cmidrule(lr){6-7}
        Model & Acc & Params. / Buff. & Acc & Params. / Buff. & Acc & Params. / Buff. \\ 
    \midrule
        TCN                           & 39.6 & 180K / 16K            & 85.3  & 2.9M / 880K         & $\times$ & 3.5M / 920K  \\ 
        DCLS                          & 43.8 & 164K / 12K            & 86.2  & 2.1M / 880K         & $\times$ & 2.1M / 960K  \\ 
        minGRU                        & \textbf{62.5} & 160K / 768   & 66.0  & 2.8M / 3K         & $\times$ & 2.5M / 3K  \\ 
        \gls{mgrade}                  & 61.9 & 160K / 12K            & \textbf{87.1} & 2.8M / 788K & \textbf{94.9} & 2.5M / 788K  \\ 
    \bottomrule
    \end{tabular}
    }
\end{table}

\subsection{Learned Kernels} \label{sec:appendix_trained_kernels}

We analyze the learned positions in the \gls{dcls} kernel across layers to understand how \gls{mgrade} adapts its temporal convolution mechanism to different task structures.
\Cref{fig:DCLS_ker_pos} shows the distribution of learned delay positions across the kernel (x-axis) and across hidden channels (y-axis) for models trained on the \gls{lra} Image, Pathfinder, and ListOps tasks (as done in section \cref{sec:appendix_ablation}).

\begin{figure*}[ht!]
    \centering \includegraphics[width=\textwidth]{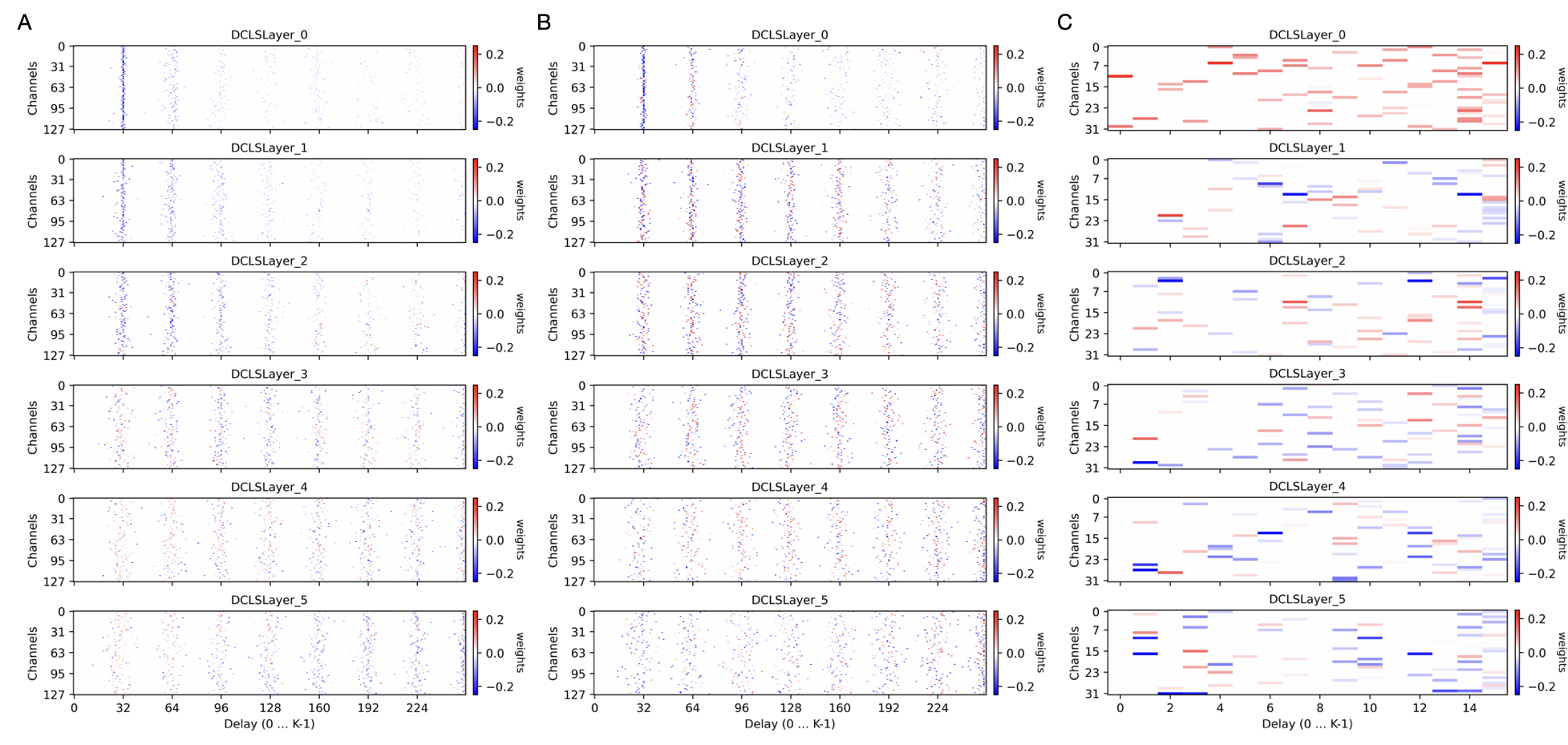}   
    \caption{\textbf{Learned \gls{dcls} delay positions across layers for \gls{lra} benchmark.} Each panel shows the distribution of learned delay positions (x-axis) across hidden channels (y-axis) for all layers (0-5, top to bottom) of models trained on (A) Image, (B) PathFinder, and (C) ListOps. Color intensity indicates weight magnitude (red: positive, blue: negative).
    \textbf{A)} Image (kernel size 256, 8 elements, 128 hidden dims): Positions cluster around vertical bands at positions 32, 64, and 96, which for a 32 $\times$ 32 image correspond to the pixels in the same column but one, two, and three rows above respectively, demonstrating learned preference for local structure. Deeper layers show increasing dispersion around the vertical bands (offsets ranging from 0-10) while maintaining locality. 
    \textbf{B)} Pathfinder (kernel size 256, 8 elements, 128 hidden dims): Delays disperse much more across the full 0-256 range, reflecting the sparse, non-local nature of path detection where relevant features (blobs and connecting paths) appear at arbitrary locations. Moderate vertical banding suggests some channels specialize for identifying the blobs at early layers. 
    \textbf{C)} ListOps (kernel size 16, 2 elements, 32 hidden dims): Full utilization of the 0-15 delay range across layers. The sparse sampling (2 of 16 positions per channel) enforces efficient information aggregation for bracket matching and operator precedence.
    }
    \label{fig:DCLS_ker_pos}
\end{figure*}

\subsubsection{LRA Image: Emergence of Spatial Local Processing on Sequential Images}
For the \gls{lra} Image task (\cref{fig:DCLS_ker_pos}A), the learned positions reveal a hierarchical local feature extraction pattern. 
In the early layers (0-2), positions sharply cluster around a delay of 32 timesteps. 
When processing the 32 $\times$ 32 images used in \gls{lra} sequentially, this delay precisely corresponds to the pixel directly above the current input pixel.
This concentration on immediate spatial neighbors (whether above or below) resembles the local receptive fields of classic 2D Convolutional Neural Networks (CNN), suggesting that \gls{mgrade}, through training, automatically tends towards spatial locality for image processing despite the sequential presentation of the image.
Deeper layers (3-5) maintain this locality bias, but with increased dispersion in time, effectively expanding the receptive field and with it the spatial context (while still remaining local).
The distinct vertical bands observed across channels indicate specialized feature detectors, some channels consistently attend to immediate neighbors while others look slightly further with a few specific offsets.
This hints towards the model using its learnable positions to capture irregularly spaced patterns and a larger range of spatial frequencies within the spatially local receptive field it builds over the image.

\subsubsection{LRA Pathfinder: Distributed Search for Sparse Structure}
In contrast, the delays learned by the model trained on the \gls{lra} Pathfinder task (\cref{fig:DCLS_ker_pos}B) exhibit slightly different patterns that reflect the task's more sparse, less local structure.
While still exhibiting some vertical clustering, delays appear more dispersed around the clusters compared to the Image task.
Particularly deeper layers show significantly weaker clustering than the corresponding layers in the model trained on Image. 
This distribution suggests that the model cannot rely only on local patterns, as the relevant features (two dots and their connecting path in a 32 $\times$ 32 image) can appear at arbitrary pixel locations across samples.
Early layers show moderate clustering at certain delay values visible as a few strong vertical bands, especially in layers 1 and 2. 
Later layers maintain broader delay coverage, suggesting they aggregate evidence across multiple spatial scales to determine path connectivity. 
The weaker locality bias in the positions indicates that spatially adjacent pixels in the Pathfinder images provide less predictive power than is the case for the natural images in Image.

\subsubsection{LRA Listops: Adaptation to Nested Dependencies}
The learned positions for ListOps (\cref{fig:DCLS_ker_pos}C) reveal a sophisticated strategy for parsing symbolic structures, which stands in contrast to the patterns seen in the Image task. A key feature is how \gls{mgrade}'s temporal receptive field spans the entire kernel length, all 16 timesteps across all layers. 
This reflects the task nature, where meaningful dependencies between operators, operands, and matching brackets occur at variable distances but remain bounded to a range of around 10 timesteps on average \citep{nangia2018listops}.
In the initial layers (0-1), the model establishes a strong inductive bias by concentrating weights at diverse positions, combining more local information (delays close around 3) with information at the boundaries of the kernel (positions around 14).
In contrast, the deeper layers exhibit a more distributed pattern, with channels dedicating kernel elements sparsely to specific and non-local delays. 
This sparse distributed sampling allows the model to track multiple long- and mid-range dependencies simultaneously and efficiently.

These contrasting solutions demonstrate \gls{mgrade}'s ability to adapt its information aggregation mechanisms to task structure without overpowering inductive biases (given that the kernel length is large enough). 
For structured data with strong spatially local correlations (Image), the model converges to classic 2D CNN-like local processing. 
For tasks requiring global reasoning over sparse features (Pathfinder), it maintains broader temporal coverage. 
Listops, on the contrary, requires a hybrid of mid-range and long-range aggregation in deeper layers.
This adaptive behavior arises from enabling the delays to be learnable, validating \gls{dcls} as a flexible alternative to \glspl{tcn} with fixed dilation rates, and further justifying our decision to use it in \gls{mgrade}.

\end{document}